\newtheorem{definition}{Definition}
\newtheorem{assumption}{Assumption}
\newtheorem{theorem}{Theorem}
\newtheorem{proposition}{Proposition}
\newtheorem{lemma}{Lemma}
\newtheorem{claim}{Claim}
\DeclareMathOperator{\diag}{diag}
\DeclareMathOperator{\Conv}{Conv}
\DeclareMathOperator{\cl}{cl}
\newcommand{\bq}{{\bm{q}}}
\newcommand{\bu}{{\bm{u}}}
\newcommand{\bv}{{\bm{v}}}
\newcommand{\bx}{{\bm{x}}}
\newcommand{\by}{{\bm{y}}}
\newcommand{\bz}{{\bm{z}}}
\newcommand{\bw}{{\bm{w}}}
\newcommand{\bA}{{\mathbf{A}}}
\newcommand{\bB}{{\mathbf{B}}}
\newcommand{\bI}{{\mathbf{I}}}
\newcommand{\bS}{{\mathbf{S}}}
\newcommand{\bG}{{\mathbf{G}}}
\newcommand{\calC}{{\mathcal{C}}}
\newcommand{\calG}{{\mathcal{G}}}
\newcommand{\calS}{{\mathcal{S}}}
\newcommand{\calT}{{\mathcal{T}}}
\newcommand{\calU}{{\mathcal{U}}}
\newcommand{\calV}{{\mathcal{V}}}
\newcommand{\calN}{{\mathcal{N}}}
\newcommand{\calP}{{\mathcal{P}}}
\newcommand{\calQ}{{\mathcal{Q}}}
\newcommand{\calR}{{\mathcal{R}}}
\newcommand{\bbI}{{\mathbbm{1}}}
\newcommand{\bbN}{{\mathbb{N}}}
\newcommand{\bbR}{{\mathbb{R}}}
\newcommand{\hyt}[1]{{\rm(\hypertarget{#1}{#1})}}
\newcommand{\hyl}[1]{{\rm(\hyperlink{#1}{#1})}}
\begin{document}
\title{Convergence Analysis of Blurring Mean Shift}
\author{Ryoya Yamasaki, and Toshiyuki Tanaka, \IEEEmembership{Member, IEEE}%
\IEEEcompsocitemizethanks{\IEEEcompsocthanksitem%
%Ryoya Yamasaki and Toshiyuki Tanaka are with the Department of Informatics, 
%Graduate School of Informatics, Kyoto University, Kyoto, 606-8501, Japan. 
%E-mail: yamasaki@sys.i.kyoto-u.ac.jp, tt@i.kyoto-u.ac.jp.\\
This work has been submitted to the IEEE for possible publication. Copyright may be transferred without notice, after which this version may no longer be accessible.}}
\markboth{PREPRINT VERSION}
{Shell \MakeLowercase{\textit{et al.}}: Convergence Analysis of Blurring Mean Shift}
\IEEEtitleabstractindextext{%
\begin{abstract}
Blurring mean shift (BMS) algorithm, a variant of the mean shift algorithm,
is a kernel-based iterative method for data clustering,
where data points are clustered according to
their convergent points via iterative blurring.
In this paper, we analyze convergence properties of the BMS algorithm
by leveraging its interpretation as an optimization procedure, 
which is known but has been underutilized in existing convergence studies.
Whereas existing results on convergence properties 
applicable to multi-dimensional data only cover the case 
where all the blurred data point sequences converge to a single point, 
this study provides a convergence guarantee 
even when those sequences can converge to multiple points, 
yielding multiple clusters. 
This study also shows that the convergence of the BMS algorithm is fast
by further leveraging geometrical characterization of the convergent points.
\end{abstract}
\begin{IEEEkeywords}
Blurring mean shift, mean shift, clustering, convergence, kernel
\end{IEEEkeywords}}
%========================================%
\maketitle
\IEEEdisplaynontitleabstractindextext
\IEEEpeerreviewmaketitle
%========================================%
\IEEEraisesectionheading{%
\section{Introduction}
\label{sec:Intro}}
%==========%
%BMSとは・関連する全7研究を紹介%
\IEEEPARstart{B}{lurring} mean shift (BMS) algorithm 
\cite{fukunaga1975estimation, cheng1995mean, Cheng95, 
carreira2006fast, carreira2008generalised, chen2015convergence} 
(also called dynamic mean shift \cite{zhang2006accelerated}), 
a variant of the mean shift (MS) algorithm \cite{cheng1995mean, comaniciu2002mean}, 
is a kernel-based iterative method for data clustering.
%本研究は，BMSの収束性を研究する%
In this study, we analyze convergence properties of the BMS algorithm.

%==========%
%MSの応用%
The MS algorithm has been widely used 
for data clustering \cite{chacon2018mixture}, 
image segmentation \cite{comaniciu2002mean}, 
edge detection \cite{guo2005mean}, 
object tracking \cite{Coma2003OT}, and 
mode estimation \cite{yamasaki2023optimal}, to mention a few.
%KDE・カーネル・帯域幅の導入%
It essentially seeks a local maximizer (mode) of 
the kernel density estimate (KDE) \cite{silverman1986density}:
for $n$ given data points $\bx_1,\ldots,\bx_n\in\bbR^d$, 
the KDE with a kernel $K:\bbR^d\to\bbR$ 
and a bandwidth $h\in(0,\infty)$ is given by 
\begin{align}
	P(\bu)\coloneq\frac{1}{n h^d}\sum_{i=1}^n K\biggl(\frac{\bu-\bx_i}{h}\biggr).
\end{align}
Assume that the kernel $K$, at any point of its differentiability, satisfies 
\begin{align}
  \label{eq:KG}
	\nabla K(\bu)=-\bu G(\bu)
\end{align}
with another function $G:\bbR^d\to\bbR$. 
%MSの定式化%
Motivated by the stationarity condition of the KDE $P$,
\begin{align}
	\begin{split}
	&\nabla P(\bu)
	=-\frac{1}{n h^{d+2}}\sum_{i=1}^n(\bu-\bx_i)G\biggl(\frac{\bu-\bx_i}{h}\biggr)
	=\bm{0}_d
\\
	&\Rightarrow
	\bu
	=\frac{\sum_{i=1}^n G\bigl(\frac{\bu-\bx_i}{h}\bigr)\bx_i}
	{\sum_{i=1}^n G\bigl(\frac{\bu-\bx_i}{h}\bigr)}
	\end{split}
\end{align}
with the all-0 vector $\bm{0}_d\in\bbR^d$,
the MS algorithm given an initial estimate $\by_1\in\bbR^d$
iterates the update rule (fixed-point iteration)
\begin{align}
\label{eq:updaMS}
	\by_{t+1}
	=\frac{\sum_{j=1}^n G\bigl(\frac{\by_t-\bx_j}{h}\bigr)\bx_j}
	{\sum_{j=1}^n G\bigl(\frac{\by_t-\bx_j}{h}\bigr)}
\end{align}
for $t\in\bbN$.
%クラスタリング%
For example, MS-based data clustering runs the MS algorithm $n$ times, 
each with an initial estimate $\by_1=\bx_i$, $i\in[n]\coloneq\{1,\ldots,n\}$.
%収束性%
Convergence properties of the MS algorithm have been well studied:
while the MS algorithm typically achieves 
the linear convergence when it converges, 
\cite{comaniciu1999mean, Huang2018} proved 
that the algorithm with the Epanechnikov kernel 
$K(\bu)\propto(1-\|\bu\|^2)_+$ converges in a finite number of iterations,
where $(\cdot)_+\coloneq\max\{\cdot,0\}$,
and \cite{yamasaki2019ms, yamasaki2023ms} clarified mild sufficient conditions 
on the kernel $K$ for the sequence $(\by_t)_{t\in\bbN}$ 
to converge to a stationary point of the corresponding KDE $P$, 
and provided a detailed study of the convergence rate.

The BMS algorithm has also been used for data clustering 
\cite{cheng1995mean, carreira2008generalised, meila2010exponential}
and derivative applications such as
image segmentation \cite{carreira2006fast, vsurkala2011hierarchical},
motion segmentation \cite{zhang2009two}, 
object tracking \cite{bradski1998computer},
and data denoising \cite{wang2010manifold}.
%BMSの定式化%
The MS algorithm \eqref{eq:updaMS} keeps the data points 
$\bx_1,\ldots,\bx_n$ fixed during the iterations, 
whereas the BMS algorithm \emph{blurs} data points:
the BMS algorithm for data clustering iterates the update rule 
\begin{align}
\label{eq:updaBMS}
	\by_{t+1,i}
	=\frac{\sum_{j=1}^n G\bigl(\frac{\by_{t,i}-\by_{t,j}}{h}\bigr)\by_{t,j}}
	{\sum_{j=1}^n G\bigl(\frac{\by_{t,i}-\by_{t,j}}{h}\bigr)},
	\quad\forall i\in[n]
\end{align}
for $t\in\bbN$, with the initialization 
\begin{align}
\label{eq:initBMS}
	\by_{1,i}=\bx_i,\quad\forall i\in[n],
\end{align}
where the right-hand side of the BMS update rule~\eqref{eq:updaBMS} uses
\emph{blurred data points} $\by_{t,1},\ldots,\by_{t,n}$ in place of
the original data points $\bx_1,\ldots,\bx_n$ in the MS update rule~\eqref{eq:updaMS}. 
%収束解析の重要性%
In applications of the BMS algorithm, it is expected that, 
as the number $t$ of iterations tends to infinity, 
the blurred data points $\by_{t,1},\ldots,\by_{t,n}$ 
obtained by the BMS algorithm form multiple clusters,
$\{\bx_i\mid\lim_{t\to\infty}\by_{t,i}=\bz_m\}_{i\in[n]}$ for $m=1,\ldots,M$
with $M\ge2$ different points $\bz_1,\ldots,\bz_M$.
Considering such usage, it is important to clarify conditions 
under which the \emph{blurred data point sequences}
$(\by_{t,1})_{t\in\bbN},\ldots,(\by_{t,n})_{t\in\bbN}$ converge, 
which are applicable to cases where these sequences converge to multiple points.
%2章，既存理論は十分じゃない%
However, as far as the authors know, there are no 
existing results for multi-dimensional settings on this issue.

%==========%(イントロにLを書くなら一部変更)
%>3.5節の解釈を積極的に活用するという解析戦略をここで書くのもありかも．Lくらいしか書けないか(イントロにLを書くなら一部変更)．レートの議論では幾何的な特徴づけを活用することも分けて書くことになるだろうが，そこまで書くのはイントロでは難しいかな．やっぱりなしで．
%In our convergence analysis of the BMS algorithm,
%we leverage its interpretation as an optimization procedure, 
%that it can be seen as locally maximizing the objective function
%\[\]
%under a generalized assumption of \eqref{eq:KG}.
%This interpretation has presented by \cite{cheng1995mean}
%but underutilized in existing convergence studies.
%結果概要%
The main contributions of this paper are that 
we prove the convergence of the BMS algorithm 
for a wide class of functions $G$ 
with convergence rate bounds, which improve upon existing ones 
and further include, as a special case, the finite-time convergence of the algorithm
for the truncated-flat function $G(\bu)\propto\bbI(\|\bu\|\le1)$,
where $\bbI(c)$ is 1 if the condition $c$ is true and 0 otherwise.
%研究意義%
These results cover the case where the blurred data point sequences 
$(\by_{t,1})_{t\in\bbN},\ldots,(\by_{t,n})_{t\in\bbN}$ can converge to 
multiple points, and 
%corroborate that one can perform data clustering 
%successfully and effectively via the BMS algorithm.
suggest that the BMS algorithm typically allows 
for more efficient data clustering than the MS algorithm.
%Also, these convergence analysis results are comparable to those for the MS algorithm
%\cite{comaniciu1999mean, Huang2018, yamasaki2019ms, yamasaki2023ms},
%and suggest the prospect of the BMS algorithm for data clustering.
%Also, these convergence analysis results suggest that the BMS algorithm
%typically allows for more efficient data clustering than the MS algorithm.

%==========%
%構成%
The rest of this paper is organized as follows.
Section~\ref{sec:Preparation} introduces several assumptions
used throughout this paper, as well as the BMS graph and related notions,
which turn out to be quite useful in analyzing convergence properties
of the BMS algorithm. 
Section~\ref{sec:Review} provides a brief review of previous work
on convergence analysis of the BMS algorithm. 
Section~\ref{sec:Analysis} presents the main results of this paper.
Proofs of theorems and propositions are described in Section~\ref{sec:Proof}. 
Section~\ref{sec:Conclusion} concludes this paper.
Supplementary material presents proofs of theoretical results
not covered in Section~\ref{sec:Proof},
as well as supplementary description regarding 
the review of previous work.

%========================================%
\section{Preparation}
\label{sec:Preparation}
%========================================%
\subsection{Basic Assumptions on Kernel}
\label{sec:BA-Ker}
%==========%(イントロにLを書くなら一部変更)
%>最適化の観点が既にあるなら，これを書くのは少し違う気がする．
%Unlike the MS algorithm, the BMS algorithm does not admit the interpretation 
%as fixed-point iteration for mode-seeking of the KDE any longer.
%BMSにKはない%
It turns out that introduction of the `kernel' $K$
that is related with $G$ via \eqref{eq:KG} 
(or more precisely, \eqref{eq:funcG} below)
is helpful for the analysis in this paper,
although $K$ itself does not appear in the BMS algorithm at all.%
\footnote{%
The kernel $K$ is sometimes called the \emph{shadow} kernel of $G$
in the context of the MS algorithm~\cite{cheng1995mean}.}
%カーネルの仮定%
We begin by introducing the kernel $K$ along with basic assumptions on it. 
Let $\|\cdot\|$ denote the Euclidean norm and
$f'(u\pm)\coloneq\lim_{v\to u\pm0}\frac{f(v)-f(u)}{v-u}$ denote 
the right and left derivatives of the function $f:[0,\infty)\to\bbR$ at $u\in[0,\infty)$ if exist. 
We make the following assumptions throughout the paper, unless stated otherwise: 
\begin{assumption}
\label{asm:RS}%
The kernel $K:\bbR^d\to\bbR$ is bounded, continuous, non-negative, normalized,
and radially symmetric so that it can be represented as $K(\bu)=k(\|\bu\|^2/2)$
for any $\bu\in\bbR^d$ with what is called the \emph{profile} $k:[0,\infty)\to\bbR$ of $K$.
Also, the profile $k$ is convex and non-increasing, and satisfies $k'(0+)>-\infty$.
\end{assumption}
%検討外なの%
These assumptions exclude, for example, 
the uniform kernel $K(\bu)\propto\bbI(\|\bu\|\le1)$ 
and the tricube kernel $K(\bu)\propto\{(1-\|\bu\|^3)_+\}^3$ 
from our consideration.
However, many commonly-used kernels satisfy Assumption~\ref{asm:RS};
see Table~\ref{tab:Kernel}.

%==========%
%Gの定義%
Under Assumption~\ref{asm:RS}, we define the function $G$ 
appearing in the BMS update rule \eqref{eq:updaBMS} as 
\begin{align}
\label{eq:funcG}
	G(\bu)=g\biggl(\frac{\|\bu\|^2}{2}\biggr)
	\text{ with }
	g(u)
	\begin{cases}
	= -k'(0+)&\text{for }u=0,\\
	\in-\partial k(u)&\text{for }u>0,
	\end{cases}
\end{align}
with the profile $k$ of the kernel $K$,
where $\partial k(u)$ is the subdifferential of $k$ at $u\in[0,\infty)$ 
that is defined as the set of values $c\in\bbR$ satisfying 
$k(u)-k(v)\ge c(u-v)$ for any $v\in[0,\infty)$.
This definition is a generalization of \eqref{eq:KG}
that allows for a non-differentiable kernel with a subdifferentiable profile.

%==========%
%KとGの関連性%
These functions have the following properties:
\begin{proposition}[{\cite[Section 24]{rockafellar1997convex}}]
\label{prop:KG}
Assume Assumption~\ref{asm:RS}.
Then, for the profile $k$ of the kernel $K$ and the function 
$g$ in \eqref{eq:funcG}, one has 
\begin{itemize}
\item[\hyt{a1}]
$\partial k(u)\neq\emptyset$ for any $u\in[0,\infty)$,
and hence $g$ is well-defined.
\item[\hyt{a2}]
$\partial k(0)=(-\infty,k'(0+)]$, and 
$\partial k(u)=[k'(u-),k'(u+)]$ for any $u\in(0,\infty)$.
\item[\hyt{a3}]
$\partial k(0)\not\ni0$,
$k(u)<k(0)$ for $u\in(0,\infty)$,
and $g(0)\in(0,\infty)$.
\item[\hyt{a4}]
$\partial k(u)$ is non-decreasing in the sense that 
$\max\partial k(u)\le\min\partial k(v)$ holds for any $u, v\in[0,\infty)$ such that $u<v$,
and hence $g$ is non-increasing.
\item[\hyt{a5}]
$\max\partial k(u)\le0$ for any $u\in[0,\infty)$,
and hence $g$ is non-negative.
\item[\hyt{a6}]
$k(u)-k(v)\ge -g(v)(u-v)$ for any $u,v\in[0,\infty)$.
\item[\hyt{a7}]
$k$ is Lipschitz-continuous, i.e., 
there exists a constant $c\in(0,\infty)$ such that 
$|k(u)-k(v)|\le c|u-v|$ for any $u,v\in[0,\infty)$
(one can let $c=-k'(0+)$).
\item[\hyt{a8}]
$\{u\in[0,\infty)\mid k(u)>0\}=
\{u\in[0,\infty)\mid \partial k(u)\not\ni0\}\subseteq
\{u\in[0,\infty)\mid g(u)>0\}$,
and hence $k(u)=0$ must hold for $g(u)=0$ to hold,
where the equality holds instead of $\subseteq$ 
if either $k(u)>0$ for any $u\in[0,\infty)$
or $k$ is differentiable
at $\min\{u\in[0,\infty)\mid k(u)=0\}$.
\end{itemize}
\end{proposition}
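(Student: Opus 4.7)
The plan is to derive all eight claims from standard one-variable convex analysis applied to the profile $k$ (Rockafellar, Section~24), combined with the particular structural restrictions from Assumption~\ref{asm:RS}. Several items are essentially direct citations: by the one-sided-derivative characterization of subdifferentials, a proper convex function on $[0,\infty)$ has nonempty, closed-interval subdifferential $[k'(u-),k'(u+)]$ at each interior point, and the condition $k'(0+)>-\infty$ ensures that $\partial k(0)=(-\infty,k'(0+)]$ is also nonempty, so that $g$ is well-defined; this gives \hyl{a1} and \hyl{a2}, and the monotonicity of $\partial k$ as a set-valued map gives \hyl{a4}. Non-increasingness of $k$ forces $\max\partial k(u)\le 0$, giving \hyl{a5}; the subgradient inequality applied at $v$ to the element $-g(v)\in\partial k(v)$ gives \hyl{a6}; and combining the uniform bound $\partial k(u)\subseteq[k'(0+),0]$ with the standard fact that boundedness of the subdifferential implies Lipschitz continuity yields \hyl{a7} with constant $c=-k'(0+)$.

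For \hyl{a3}, the key step is to prove $k'(0+)<0$ strictly. I do this by contradiction: if $k'(0+)=0$, monotonicity of $\partial k$ and \hyl{a5} together force $\partial k\equiv\{0\}$ on $[0,\infty)$, so $k$ is constant; but then $K(\bu)=k(\|\bu\|^2/2)$ is a constant on $\bbR^d$, contradicting Assumption~\ref{asm:RS} since a nonzero constant is not integrable on $\bbR^d$ while the zero constant is not normalizable. Hence $k'(0+)<0$, from which $0\notin\partial k(0)$ and $g(0)>0$ follow at once. The strict inequality $k(u)<k(0)$ for $u>0$ is then obtained by noting that equality $k(u_0)=k(0)$ at some $u_0>0$ together with non-increasingness would force $k$ to be constant on $[0,u_0]$, giving $k'(0+)=0$ and contradicting the just-established strict negativity.

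The subtlest item is \hyl{a8}, since it couples the convex-analytic structure of $\partial k$ with the positivity and integrability properties of $K$. I plan to handle the two inclusions separately. For $\{k>0\}=\{\partial k\not\ni 0\}$: if $k(u)=0$ then $u$ is a global minimizer of the nonnegative function $k$ and so $0\in\partial k(u)$; conversely, $0\in\partial k(u)$ with $k(u)>0$ would yield $k\ge k(u)>0$ everywhere by the subgradient inequality, and combined with non-increasingness this would make $k$ constant on $[u,\infty)$ at a positive value, rendering $K$ non-integrable. The inclusion $\{\partial k\not\ni 0\}\subseteq\{g>0\}$ is then immediate from \hyl{a5}. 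The main (though still modest) obstacle is the equality case in \hyl{a8}: here I treat the boundary point $u_*=\min\{u\ge 0\mid k(u)=0\}$ carefully, using differentiability of $k$ at $u_*$ to deduce $k'(u_*)=0$ from its being a boundary minimizer of a nonnegative function, hence $g(u_*)=0$, and combining this with $k\equiv 0$ beyond $u_*$ to conclude $g=0$ on $[u_*,\infty)$, so that all three sets in \hyl{a8} coincide.
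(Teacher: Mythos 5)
Your proposal is correct and follows essentially the same route as the paper, which simply cites \cite[Section 24]{rockafellar1997convex} for \hyl{a1}, \hyl{a2}, \hyl{a4}, and \hyl{a6} and asserts that the remaining items follow from these under Assumption~\ref{asm:RS}; your write-up supplies exactly the omitted details (the normalization/integrability argument forcing $k'(0+)<0$ and $0\in\partial k(u)\Rightarrow k(u)=0$, and the boundary-point analysis for the equality case of \hyl{a8}). The only cosmetic imprecision is that $\partial k(0)=(-\infty,k'(0+)]$ is not contained in $[k'(0+),0]$, so in \hyl{a7} one should invoke the bounded selection $k'(0+)=\max\partial k(0)$ (together with $\partial k(u)\subseteq[k'(0+),0]$ for $u>0$) rather than a uniform bound on the whole subdifferential; the argument is otherwise unaffected.
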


%==========%
%打ち切りカーネル%
We further introduce the notion of the truncated kernel 
and those related to how it is truncated:
\begin{definition}[{Truncated kernel and related notions}]
\label{def:TK}
Assume that the kernel $K$ is radially symmetric with the profile $k$.
\begin{itemize}
\item%truncated point
The \emph{truncation point} $\beta$ of $K$ is defined as 
$\beta\coloneq\sup\{u\in[0,\infty)\mid k(u^2/2)\neq 0\}\in(0,\infty]$.
\item%truncated/non-truncated kernel
A \emph{truncated kernel} is a kernel with $\beta<\infty$, 
and a \emph{non-truncated kernel} is a kernel with $\beta=\infty$.
\item%smoothly/non-smoothly truncated kernel
A \emph{smoothly truncated kernel} is a truncated kernel with profile $k$ 
for which $k(u^2/2)$ is differentiable at $u=\beta$. 
A \emph{non-smoothly truncated kernel} is a truncated kernel 
for which $k(u^2/2)$ is not differentiable at $u=\beta$. 
\end{itemize}
\end{definition}
%サポートの関連性%
For a truncated kernel $K$ satisfying 
Assumption~\ref{asm:RS}, 
one has $\{u\in[0,\infty)\mid k(u^2/2)>0\}=[0,\beta)$
and $\{u\in[0,\infty)\mid k(u^2/2)=0\}=[\beta,\infty)$
for its profile $k$ and truncation point $\beta$.

%========================================%
\subsection{Graph-Theoretic Representation}
%> of BMS Algorithm%全部につけないとダメになっちゃう
\label{sec:Term-DC}
%==========%
%クラスタリング結果に関連する用語%
In this subsection, we introduce several notions and terms 
that assist our analysis of the BMS algorithm.

%==========%
%コンフィグレーション・BMSグラフ%
We first define the BMS graph, a graph-theoretic representation of 
the state of the blurred data points $\by_{t,1},\ldots,\by_{t,n}$.
\begin{definition}[Configuration space]
  A collection of $n$ points $\bu_1,\ldots,\bu_n$
  in $\bbR^d$ defines a \emph{configuration}
  $\bu=(\bu_1^\top,\ldots,\bu_n^\top)^\top$
  in the \emph{configuration space} $\bbR^{nd}$.
\end{definition}
\begin{definition}[BMS graph]
\label{def:CGC}
Assume that a configuration
$\bu=(\bu_1^\top,\ldots,\bu_n^\top)^\top\in\bbR^{nd}$,
a function $G:\bbR^d\to\bbR$ which is radially symmetric,
and a bandwidth $h\in(0,\infty)$ are specified.
Then, the \emph{BMS graph} $\calG_\bu$ of $\bu$ is defined as
the (undirected, unweighted, and simple) graph consisting of 
the vertex set $[n]$ and the edge set
\begin{align}
	\mathcal{E}\coloneq\left\{\{i,j\}\in[n]^2\Bigm| i\not=j\ 
	\mbox{and}\ G\left(\frac{\bu_i-\bu_j}{h}\right)\not=0\right\}.
\end{align}
We say that vertices $i$ and $j$ are \emph{joined} in $\calG_\bu$ if and only if $\{i,j\}\in\mathcal{E}$.
\end{definition}
%単位ディスクグラフ%
When $G$ is defined from a kernel $K$ satisfying Assumption~\ref{asm:RS}
via \eqref{eq:funcG}, a BMS graph 
can be regarded as a unit ball graph \cite{ClarkColbournJohnson1990},
which is an intersection graph of equal-radius balls. 
It has been studied in relation to
various applications such as modeling of 
wireless communication networks \cite{huson1995broadcast}
and continuum percolation theory \cite{MeesterRoy1996}.
We consider the BMS graph $\calG_{\by_t}$ of the configuration 
$\by_t\coloneq(\by_{t,1}^\top,\ldots,\by_{t,n}^\top)^\top\in\bbR^{nd}$ 
generated by the BMS algorithm.

%==========%
%BMSグラフのカテゴライズ%
We further introduce the notions `closed/open', 
`singular/non-singular', and `stable/unstable',
which define an important categorization of the BMS graph. 
In the following, we use common graph-theoretic
notions such as a subgraph, a component (also called a maximal connected subgraph),
and a complete graph in which all the distinct vertices are joined \cite{bondy1976graph}. 
\begin{definition}[{Characterization of the BMS graph}]
\label{def:CIS}
Given a configuration $\bu\in\mathbb{R}^{nd}$, we say that:
\begin{itemize}
\item%closed/open%
The BMS graph $\calG_\bu$ of $\bu$ is \emph{closed}
when all of its components are complete. 
Otherwise, it is \emph{open}.
\item%isolated/%
A vertex $i\in[n]$ is \emph{isolated} in $\calG_\bu$
when $i$ is not joined to any $j\in[n]$ with $\bu_i\neq\bu_j$.
\item%singular/non-singular%
The BMS graph $\calG_\bu$ is \emph{singular}
when all of its vertices are isolated in $\calG_\bu$.
Otherwise, it is \emph{non-singular}.
\item%stable/unstable
The BMS graph $\calG_\bu$ is \emph{stable}
if there exists an open neighborhood $\calT\subseteq\bbR^{nd}$ of $\bu$
such that the BMS graphs $\calG_\bu$ and $\calG_{\bu'}$
are the same for any $\bu'\in\calT$.
Otherwise, it is \emph{unstable}.
\end{itemize}
\end{definition}
%3形態のどれか%
It is obvious from the definition
that a singular BMS graph is closed. 
As Figure~\ref{fig:Graph} demonstrates,
the BMS graph $\calG_\bu$ can be categorized into 
one of `singular', `closed and non-singular', or `open'. 
On the other hand, the notion `stable/unstable' refers to the stability
of $\calG_\bu$ against small perturbations of $\bu$.

%==========%
%コンポーネントだけ見ればよい%
The following proposition is an immediate consequence of the definition
of the BMS graph and the BMS update rule. 
\begin{proposition}
\label{prop:BMScomp}
For a \emph{configuration sequence} $(\by_t)_{t\in\bbN}$ generated by the BMS algorithm, 
let $\calG_{\by_t,*}$ be a component of the BMS graph $\calG_{\by_t}$,
and let $\calV_{\by_t,*}$ be its vertex set.
One then has
\begin{align}
	\by_{t+1,i}
	=\frac{\sum_{j\in\calV_{\by_t,*}}G_{t,i,j}\by_{t,j}}{\sum_{j\in\calV_{\by_t,*}}G_{t,i,j}},
	\quad \forall i\in\calV_{\by_t,*}
\end{align}
with $G_{t,i,j}\coloneq G(\frac{\by_{t,i}-\by_{t,j}}{h})$.
Moreover, assuming Assumption~\ref{asm:RS} and
letting $\calP\coloneq\{\by_{t,i}\}_{i\in\calV_{\by_t,*}}$
and $\calP'\coloneq\{\by_{t+1,i}\}_{i\in\calV_{\by_t,*}}$,
one has $\Conv(\calP)\supseteq\Conv(\calP')$, 
where $\Conv(\calS)$ for a set $\calS\subseteq\bbR^d$ is the convex hull of $\calS$.
\end{proposition}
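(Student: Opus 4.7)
The plan is to prove the two assertions as direct consequences of the BMS graph's construction and the sign properties of $G$ collected in Proposition~\ref{prop:KG}; no deep machinery is needed, and the `main obstacle' is really just being careful about which terms vanish in the update rule.

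For the first assertion, I would start from the full BMS update~\eqref{eq:updaBMS} and partition the index set $[n]$ into $\calV_{\by_t,*}$ and its complement $[n]\setminus\calV_{\by_t,*}$. Fix $i\in\calV_{\by_t,*}$. For any $j\in[n]\setminus\calV_{\by_t,*}$, the vertices $i$ and $j$ lie in different components of $\calG_{\by_t}$, so in particular $\{i,j\}\notin\mathcal{E}$; by Definition~\ref{def:CGC} this forces $G_{t,i,j}=G((\by_{t,i}-\by_{t,j})/h)=0$. Hence every term in the numerator and denominator of \eqref{eq:updaBMS} with $j\notin\calV_{\by_t,*}$ is zero, and the sum collapses to the sum over $\calV_{\by_t,*}$. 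Note also that $j=i$ is included in this restricted sum with weight $G(\bm{0}_d)=g(0)>0$ by property \hyl{a3}, so the denominator is strictly positive and the update is well-defined.

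For the second assertion, under Assumption~\ref{asm:RS} and the definition \eqref{eq:funcG} of $G$, property \hyl{a5} guarantees $G\ge 0$ pointwise. Hence, for each $i\in\calV_{\by_t,*}$, the coefficients $\lambda_{i,j}\coloneq G_{t,i,j}/\sum_{j'\in\calV_{\by_t,*}}G_{t,i,j'}$ are non-negative and sum to one, so $\by_{t+1,i}=\sum_{j\in\calV_{\by_t,*}}\lambda_{i,j}\by_{t,j}$ is a convex combination of the points in $\calP$. Thus $\calP'\subseteq\Conv(\calP)$, and taking convex hulls yields $\Conv(\calP')\subseteq\Conv(\calP)$, which is exactly the desired inclusion.

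In short, the first statement is a direct consequence of the graph being built precisely from the support of $G$, while the second is the observation that the BMS update with a non-negative $G$ is a convex-combination map on each component. The only place one needs to stay alert is handling the diagonal term $j=i$ (ensuring the denominator is positive via \hyl{a3}) and recognizing that `different components' in an unweighted simple graph is exactly the condition that erases cross-component terms.
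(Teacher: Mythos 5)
Your proof is correct and is essentially the paper's own argument: the paper simply states that the proposition "is clear from Proposition~\ref{prop:KG} and the definitions of the BMS graph and its component," and your write-up fills in exactly those details (cross-component terms vanish by Definition~\ref{def:CGC}, the denominator is positive via \hyl{a3}, and non-negativity \hyl{a5} makes the update a convex combination). No discrepancies to report.
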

In other words, a single BMS update of a blurred data point
with its index in a component of the BMS graph $\calG_{\by_t}$ 
is affected only by those blurred data points with indices
belonging to the same component, and the convex hull
of the blurred data points with indices in a component
`shrinks' by a single BMS update.

%==========%
%コンポーネント・数・頂点集合%
Given a BMS graph $\calG_\bu$,
we let $M_\bu$ denote the number of components of $\calG_\bu$, 
let $\calG_{\bu,1}, \ldots,\calG_{\bu,M_\bu}$ be those components, 
and let $\calV_{\bu,1},\ldots,\calV_{\bu,M_\bu}$
be the vertex sets of those components. 
%クラスター数の上限%
The number $M_\bu$ of components has the following upper bound:
\begin{proposition}[{Upper bound of the number of components of the BMS graph}]
\label{prop:Number}
Assume Assumption~\ref{asm:RS} 
and that the kernel $K$ has the truncation point $\beta\in(0,\infty]$. 
Then, for $\bu\in\bbR^{nd}$ and
$\gamma=\max\{\|\bu_i-\bu_j\|\}_{i,j\in[n]}$,
$M_\bu\le\min\{n,(1+\frac{2\gamma}{\beta h})^d\}$.
\end{proposition}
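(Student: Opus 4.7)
The plan is to prove the two parts of the minimum separately. The bound $M_\bu \le n$ is immediate from Definition~\ref{def:CGC}, since $\calG_\bu$ has exactly $n$ vertices and can therefore have at most $n$ components, so the remaining task is to prove $M_\bu \le (1+\frac{2\gamma}{\beta h})^d$, which I would establish by a volumetric packing argument on representative vertices, one per component.

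For each $m \in \{1,\ldots,M_\bu\}$ I would pick a representative index $i_m \in \calV_{\bu,m}$ and set $\bp_m \coloneq \bu_{i_m}$. The crucial geometric input is that $\|\bp_k - \bp_l\| \ge \beta h$ for $k \ne l$. Suppose for contradiction that $\|(\bp_k-\bp_l)/h\| < \beta$, and set $u = \|(\bp_k-\bp_l)/h\|^2/2 < \beta^2/2$; by the definition of the truncation point $\beta$ together with monotonicity of $k$, one has $k(u) > 0$, and then by \hyl{a8} of Proposition~\ref{prop:KG} also $g(u) > 0$, so $G((\bp_k-\bp_l)/h) > 0$, contradicting the fact that $i_k$ and $i_l$ lie in distinct components. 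Consequently the open Euclidean balls $B(\bp_m, \beta h/2)$ are pairwise disjoint. Since every $\bu_i$ lies within distance $\gamma$ of $\bu_1$ by definition of $\gamma$, each such ball is contained in the ball of radius $\gamma + \beta h/2$ centred at $\bu_1$, so comparing Lebesgue volumes yields $M_\bu (\beta h/2)^d \le (\gamma + \beta h/2)^d$, which rearranges to the claimed bound. The non-truncated case $\beta = \infty$ (in which $(1 + \frac{2\gamma}{\beta h})^d$ is interpreted as $1$) is handled separately: there $k$, hence $g$ by \hyl{a8}, hence $G$, is strictly positive everywhere, so $\calG_\bu$ is complete and $M_\bu = 1$.

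The main obstacle is the geometric lower bound $\|\bp_k - \bp_l\| \ge \beta h$ between representatives of distinct components; this is the only step where Assumption~\ref{asm:RS} and Proposition~\ref{prop:KG} really intervene, through property \hyl{a8}, which ensures that the positivity region of $g$ covers that of $k$ and thereby rules out a spurious non-edge between vertices closer than $\beta h$. Everything else is a routine ball-packing volume estimate, combined with the trivial vertex-count bound to form the minimum.
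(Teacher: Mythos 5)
Your proposal is correct and follows essentially the same route as the paper, which sketches exactly this volume comparison between small balls centred at one representative per component and a large ball of radius $\gamma+\beta h/2$ centred at $\bu_1$; your separation bound $\|\bp_k-\bp_l\|\ge\beta h$ via Proposition~\ref{prop:KG} \hyl{a8} and the choice of disjoint balls of radius $\beta h/2$ are the right instantiation to recover the constant $(1+\frac{2\gamma}{\beta h})^d$.
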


%==========%
%BMSグラフの例%
\begin{figure}
\centering%
\begin{tabular}{cccc}
{\hskip-4pt}\begin{overpic}
	[height=2cm, bb=0 0 173 159]{./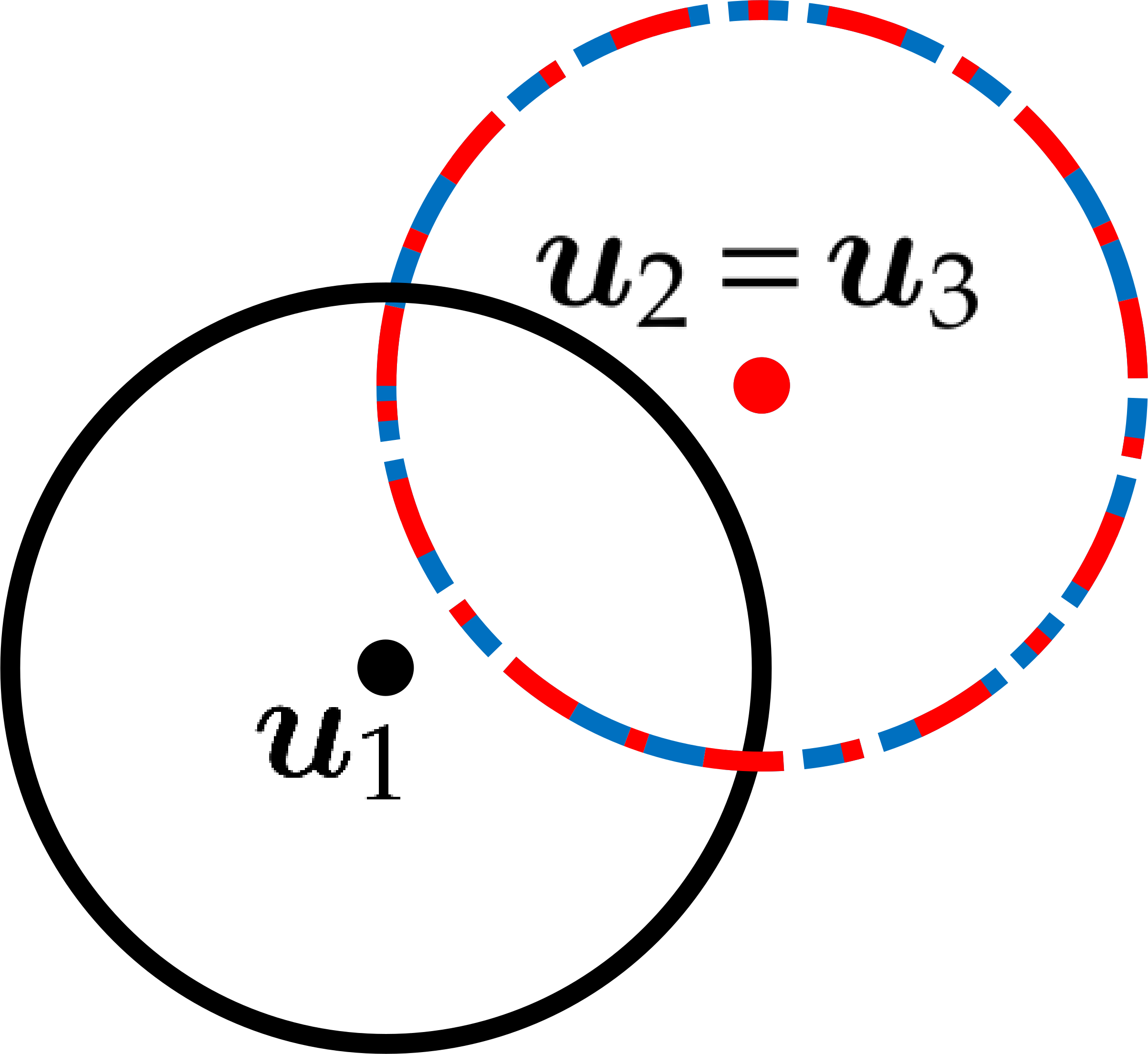}\put(1,85){\scriptsize\hyt{i}}
\end{overpic}&{\hskip-11pt}\begin{overpic}
	[height=2cm, bb=0 0 187 159]{./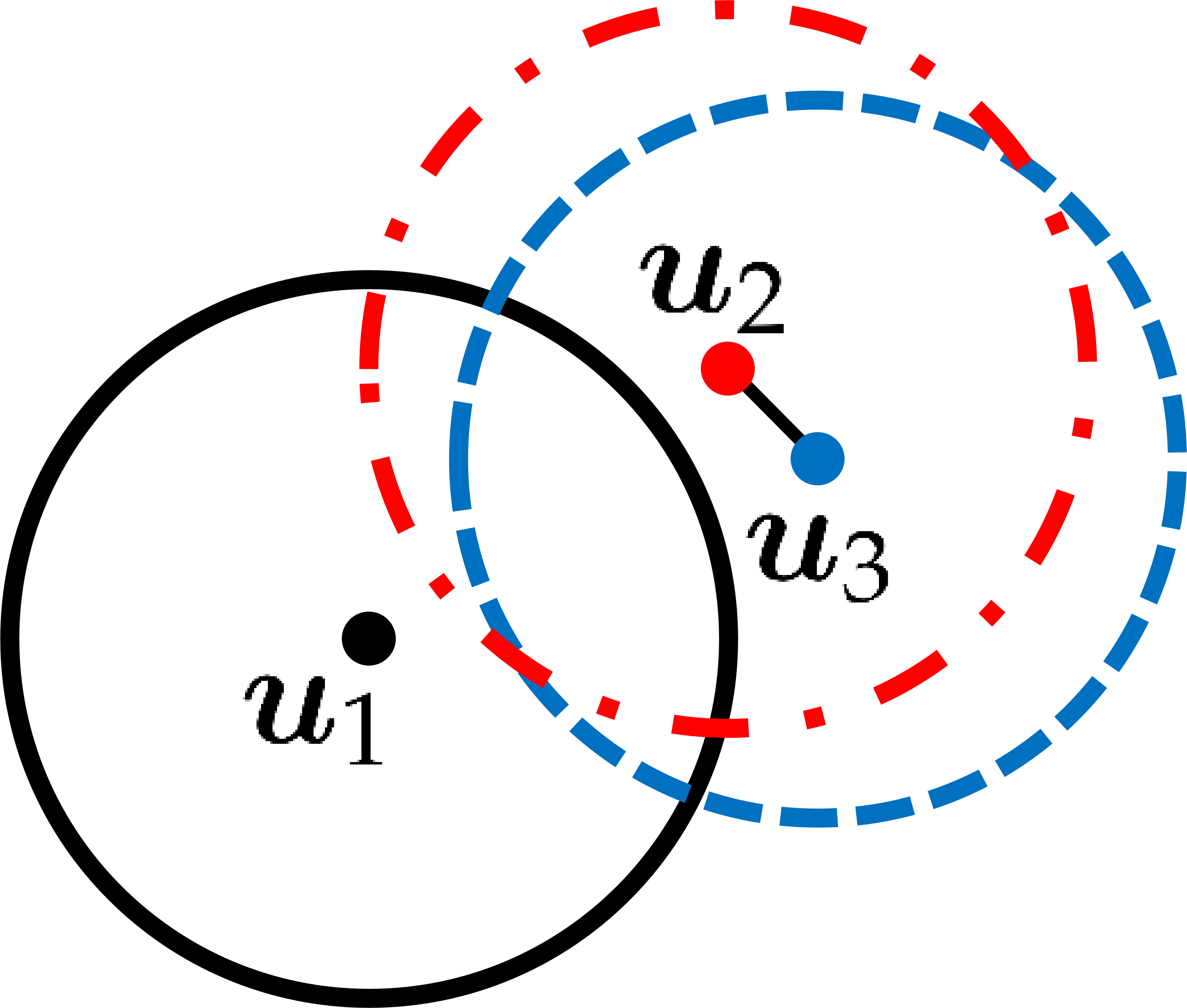}\put(1,78.64){\scriptsize\hyt{ii}}
\end{overpic}&{\hskip-11pt}\begin{overpic}
	[height=2cm, bb=0 0 176 159]{./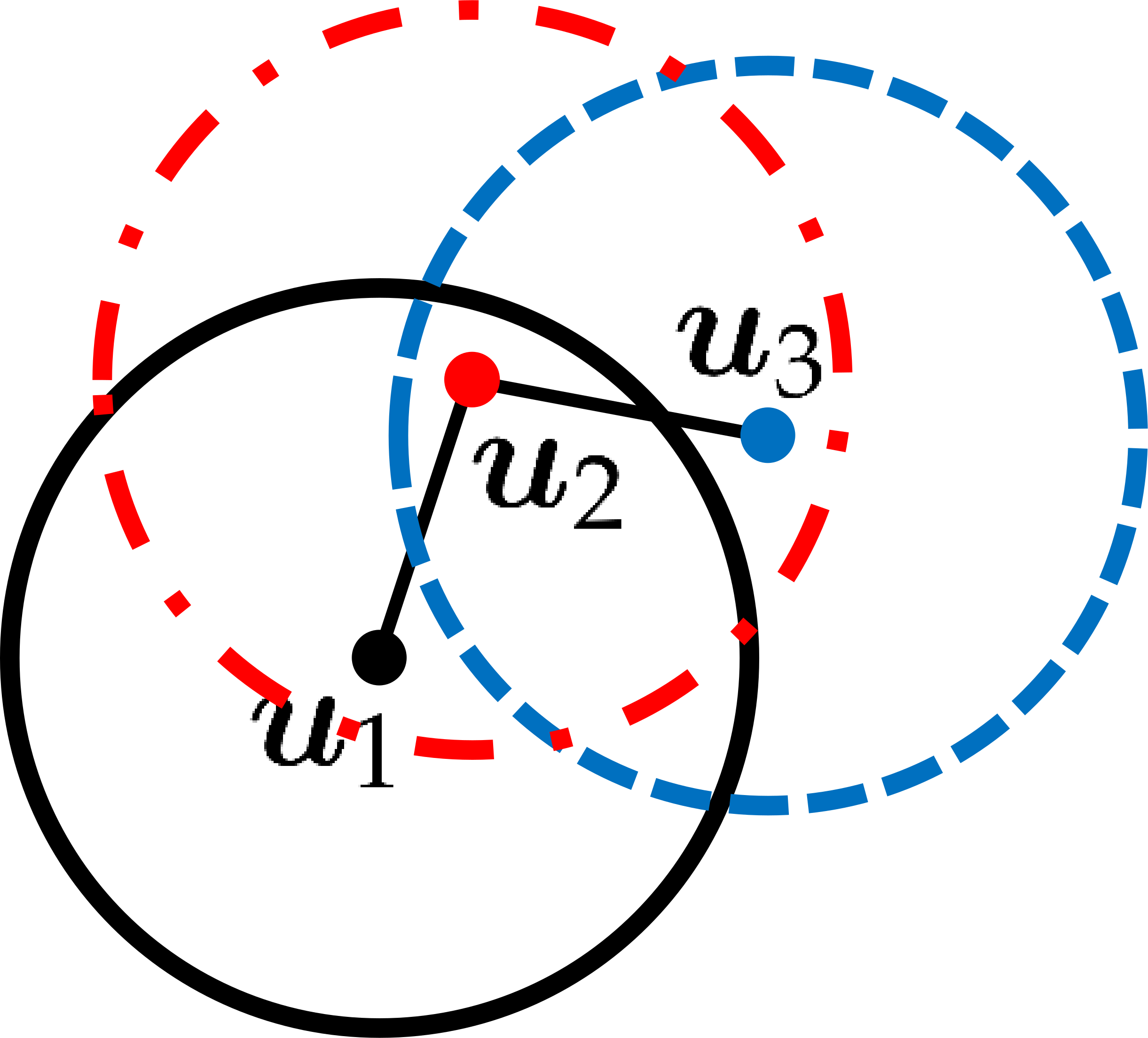}\put(1,83.55){\scriptsize\hyt{iii}}
\end{overpic}&{\hskip-11pt}\begin{overpic}
	[height=2cm, bb=0 0 154 159]{./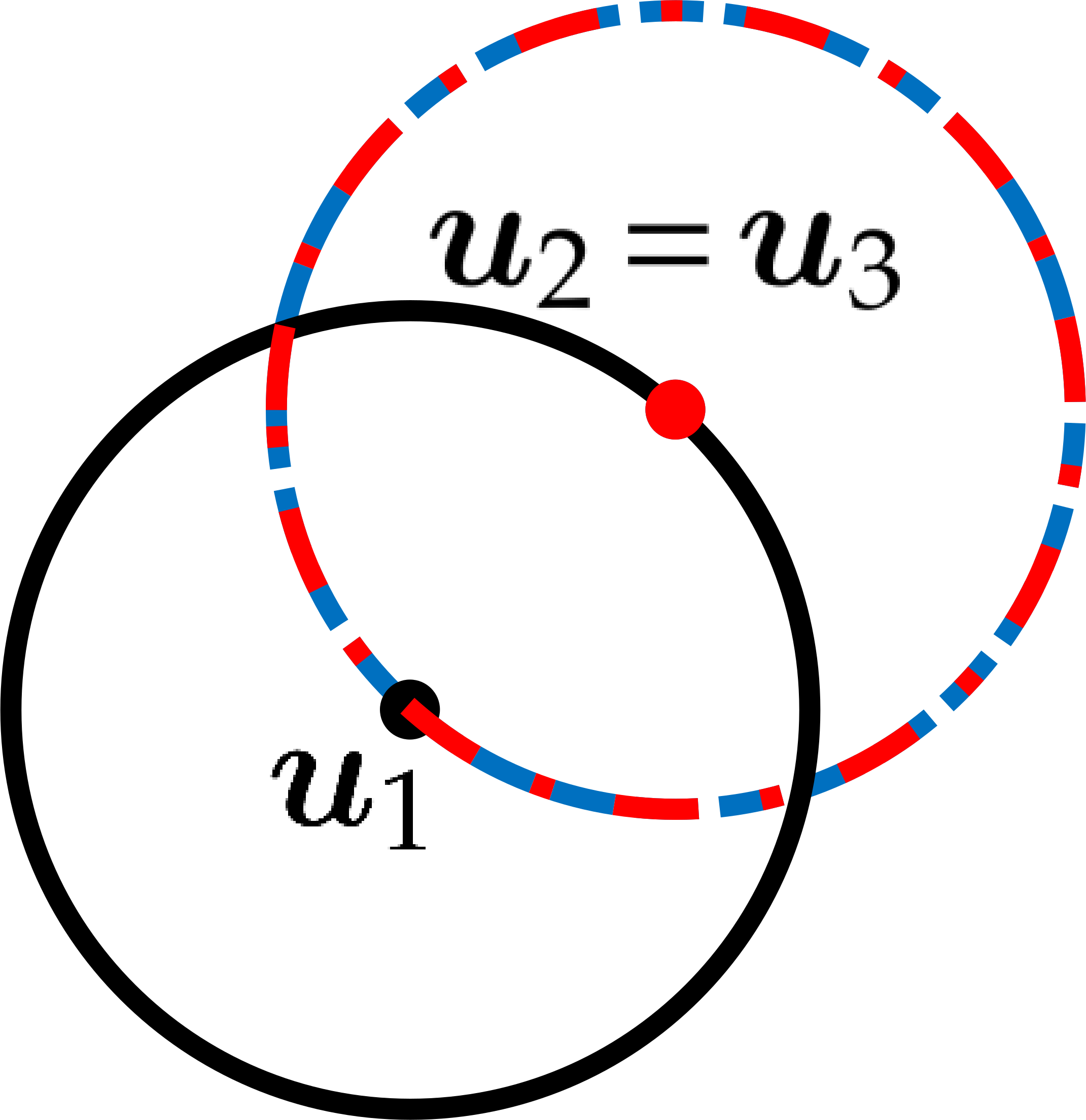}\put(1,89.87){\scriptsize\hyt{iv}}
\end{overpic}
\end{tabular}
\caption{%
Illustration of the BMS graphs.
Each dot represents $\bu_i\in\bbR^2$, 
and each circle represents the outer edge of the ball 
$\{\bv\in\bbR^2\mid G(\frac{\bu_i-\bv}{h})>0\}$, with $i\in[3]$.
For example, the BMS graph $\calG_\bu$ of $\bu=(\bu_i)_{i\in[3]}$
in \protect\hyl{ii} is such that
$2$ and $3$ are joined and that $1$ is isolated. 
%For example, in \protect\hyl{ii}, $2$ is 
%joined to $2$ and $3$
%and not joined to $1$, 
%and $1$ is isolated in the BMS graph $\calG_\bu$ of $\bu=(\bu_i)_{i\in[3]}$.
%
Also, $\calG_\bu$ is 
`singular' in \protect\hyl{i} and \protect\hyl{iv}, 
`closed and non-singular' in \protect\hyl{ii}, 
`open' in \protect\hyl{iii},
stable in \protect\hyl{i}--\protect\hyl{iii}, and 
unstable in \protect\hyl{iv}.}
\label{fig:Graph}
\end{figure}

%========================================%
\section{Review of Previous Work}
\label{sec:Review}
%========================================%
\subsection{Convergence Guarantee to a Single Point}
\label{sec:Rev-Th3}
%==========%
%本章の内容%
In Section~\ref{sec:Review}, we review existing 
convergence analysis of the BMS algorithm.
Although the previous studies 
\cite{cheng1995mean, Cheng95,zhang2006accelerated, carreira2006fast, 
carreira2008generalised, chen2015convergence}%
%重みなしで議論を進める%
\footnote{%
The previous studies \cite{cheng1995mean, chen2015convergence} 
discussed the BMS algorithm under a weighted formulation 
that we also mention in Section~\ref{sec:Conclusion}.
In this paper, including the review in Section~\ref{sec:Review}, 
we proceed with the discussion under the non-weighted formulation, 
in favor of notational simplicity.}
have tackled this issue,
those results that we consider important are 
\cite[Claim 3 and Theorem 3]{cheng1995mean},
%its na\"ive extension mentioned in \cite{cheng1995mean}
%(Theorem~\ref{thm:variant} in this paper),
\cite[Theorem 14.2]{Cheng95}, 
\cite{carreira2006fast}, and \cite{carreira2008generalised}.
This section describes reviews of these results, 
and reviews of other existing results 
are presented in the supplementary material.

%==========%
%[Claim 3 and Theorem 3]{cheng1995mean}，一点指数レート収束%
In convergence analysis of the BMS algorithm,
an important existing result for multi-dimensional data is 
\cite[Claim 3 and Theorem 3]{cheng1995mean} that gives 
a condition under which the blurred data point sequences 
$(\by_{t,1})_{t\in\bbN},\ldots,(\by_{t,n})_{t\in\bbN}$ 
converge to a single point in the exponential rate:
\begin{theorem}[{Convergence guarantee and rate bound 
for non-truncated kernels or a large bandwidth;
\cite[Claim 3 and Theorem 3]{cheng1995mean}}]
\label{thm:Cheng}
Assume Assumption~\ref{asm:RS}.
Then, letting
\begin{align}
\label{eq:ab}
	a_{t,\bu}\coloneq\min\{\bu^\top\by_{t,i}\}_{i\in[n]},\quad
	b_{t,\bu}\coloneq\max\{\bu^\top\by_{t,i}\}_{i\in[n]}
\end{align}
with $\bu\in\bbR^d$, it holds that
\begin{align}
\label{eq:Claim1}
	[a_{1,\bu},b_{1,\bu}]\supseteq
	[a_{2,\bu},b_{2,\bu}]\supseteq\cdots
\end{align}
for any $\bu$, 
and that 
\begin{align}
\label{eq:Claim3}
	\Conv(\{\by_{1,i}\}_{i\in[n]})\supseteq
	\Conv(\{\by_{2,i}\}_{i\in[n]})\supseteq\cdots.
\end{align}
Also, for the diameter 
\begin{align}
\label{eq:defdia}
	d_t\coloneq\max_{\|\bu\|=1}(b_{t,\bu}-a_{t,\bu})
\end{align}
of the convex hull $\Conv(\{\by_{t,i}\}_{i\in[n]})$, it holds that
\begin{align}
\label{eq:diamrate}
	\frac{d_{t+1}}{d_t}
	\le 1-\frac{g((d_t/h)^2/2)}{4g(0)}
	\le 1-\frac{g((d_1/h)^2/2)}{4g(0)}
\end{align}
for any $t\in\bbN$.
Moreover, if $g((d_1/h)^2/2)>0$, then $(d_t)_{t\in\bbN}$ converges to zero,
and the blurred data point sequences $(\by_{t,1})_{t\in\bbN},\ldots,(\by_{t,n})_{t\in\bbN}$ 
converge to a single point $\bz_1\in\bbR^d$, in the exponential rate:
for $q=1-\frac{g((d_1/h)^2/2)}{4g(0)}\in[\frac{3}{4},1)$,
it holds that $d_t=O(q^t)$ and
$\|\by_t-\bz_1\otimes\bm{1}_n\|=O(q^t)$
(i.e., $\|\by_{t,i}-\bz_1\|=O(q^t)$ for all $i\in[n]$),
where $\otimes$ is the Kronecker product
and $\bm{1}_n\in\bbR^n$ is the all-1 vector.
\end{theorem}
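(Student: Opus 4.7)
The plan is to establish the four assertions in order: the nested interval property \eqref{eq:Claim1}, the nested convex hull property \eqref{eq:Claim3}, the diameter contraction bound \eqref{eq:diamrate}, and finally the convergence with exponential rate under the hypothesis $g((d_1/h)^2/2)>0$. The structural fact used throughout is that the BMS update rule \eqref{eq:updaBMS} expresses $\by_{t+1,i}$ as a convex combination of $\{\by_{t,j}\}_{j\in[n]}$ with nonnegative weights $P_{t,i}(j)\coloneq G(\frac{\by_{t,i}-\by_{t,j}}{h})/\sum_{k\in[n]}G(\frac{\by_{t,i}-\by_{t,k}}{h})$ summing to $1$; these weights are well defined because property~\hyl{a3} of Proposition~\ref{prop:KG} gives $g(0)>0$, forcing the denominator to be positive (the diagonal term alone contributes $g(0)$).

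For \eqref{eq:Claim1}, taking the inner product of the update with any $\bu\in\bbR^d$ yields $\bu^\top\by_{t+1,i}=\sum_j P_{t,i}(j)\,\bu^\top\by_{t,j}\in[a_{t,\bu},b_{t,\bu}]$, from which $a_{t,\bu}\le a_{t+1,\bu}$ and $b_{t+1,\bu}\le b_{t,\bu}$ follow by taking min/max over $i$. For \eqref{eq:Claim3}, I would represent the convex hull as the intersection of supporting half-spaces, $\Conv(\{\by_{t,i}\}_{i\in[n]})=\bigcap_{\bu\neq\bm{0}_d}\{\bv\in\bbR^d:a_{t,\bu}\le\bu^\top\bv\le b_{t,\bu}\}$, so that \eqref{eq:Claim1} applied at every $\bu$ gives the nested property directly.

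The main technical step is the diameter contraction \eqref{eq:diamrate}. Fix any unit vector $\bu$ and write $w_j\coloneq\bu^\top\by_{t,j}$, $a\coloneq a_{t,\bu}$, $b\coloneq b_{t,\bu}$. The crucial observation is that all blurred data points lie in a set of diameter at most $d_t$, so $\|\by_{t,i}-\by_{t,j}\|\le d_t$ for every $i,j$, and the monotonicity of $g$ (property~\hyl{a4}) gives the two-sided bound $G(\frac{\by_{t,i}-\by_{t,j}}{h})\in[g((d_t/h)^2/2),g(0)]$ uniformly in $i,j$. Partitioning $[n]$ into $L\coloneq\{j:w_j<(a+b)/2\}$ and $R\coloneq\{j:w_j\ge(a+b)/2\}$, one bounds $\bu^\top\by_{t+1,i}-a\ge\tfrac{d_t}{2}\sum_{j\in R}P_{t,i}(j)$ by discarding the nonnegative contribution from $L$ and using $w_j-a\ge d_t/2$ for $j\in R$; the two-sided weight bound then gives $\sum_{j\in R}P_{t,i}(j)\ge |R|g((d_t/h)^2/2)/(n g(0))$. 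An analogous argument yields $b-\bu^\top\by_{t+1,i}\ge\tfrac{d_t}{2}\cdot|L|g((d_t/h)^2/2)/(n g(0))$. Combining these, using $|L|+|R|=n$, and performing careful bookkeeping of the weight masses on each side of the midpoint yields $b_{t+1,\bu}-a_{t+1,\bu}\le(b_{t,\bu}-a_{t,\bu})(1-g((d_t/h)^2/2)/(4g(0)))$, uniformly in the unit $\bu$; taking the maximum over unit $\bu$ on both sides gives the first inequality of \eqref{eq:diamrate}. The second inequality follows because $(d_t)_{t\in\bbN}$ is non-increasing by \eqref{eq:Claim3} and $g$ is non-increasing by \hyl{a4}.

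Given \eqref{eq:diamrate}, if $g((d_1/h)^2/2)>0$ then $q\in[\tfrac{3}{4},1)$ and iterating yields $d_t\le q^{t-1}d_1=O(q^t)$. The nested family of compact convex sets $\{\Conv(\{\by_{t,i}\}_{i\in[n]})\}_{t\in\bbN}$ has diameter decaying to zero, so by Cantor's intersection theorem its intersection is a single point $\bz_1\in\bbR^d$. Since each $\by_{t,i}$ lies in the $t$-th convex hull together with $\bz_1$, one has $\|\by_{t,i}-\bz_1\|\le d_t=O(q^t)$, and aggregating over $i\in[n]$ yields $\|\by_t-\bz_1\otimes\bm{1}_n\|\le\sqrt{n}\,d_t=O(q^t)$. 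The main obstacle will be the careful accounting in the diameter-contraction step to extract the constant $\tfrac{1}{4}$ in \eqref{eq:diamrate}; the remaining steps follow cleanly from the convex-combination structure of the BMS update and a standard Cantor-intersection argument.
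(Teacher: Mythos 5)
Your proposal is correct and follows the same overall skeleton as the paper's proof (Section S1.1 of the supplementary material): the convex-combination structure of the BMS update for \eqref{eq:Claim1}, the supporting-half-space representation of the convex hull for \eqref{eq:Claim3}, a midpoint partition with the two-sided weight bound $g((d_t/h)^2/2)\le G_{t,i,j}\le g(0)$ for \eqref{eq:diamrate}, and geometric decay of $d_t$ for the limit. The one place you genuinely diverge is the bookkeeping in the contraction step: the paper splits into two cases according to which side of the midpoint $c_{t,\bu}=(a_{t,\bu}+b_{t,\bu})/2$ contains at least $n/2$ of the points, bounds the shrinkage of only the corresponding single endpoint (e.g.\ $b_{t,\bu}-\bu^\top\by_{t+1,i}\ge\frac{g((d_t/h)^2/2)}{4g(0)}(b_{t,\bu}-a_{t,\bu})$), and recovers the other endpoint from the monotonicity \eqref{eq:Claim1}; you instead bound the shrinkage of both endpoints by fractions proportional to $|R|/n$ and $|L|/n$ and add, which with $|L|+|R|=n$ avoids the case split and in fact yields the stronger factor $1-\frac{g((d_t/h)^2/2)}{2g(0)}$, of which the stated $1-\frac{g((d_t/h)^2/2)}{4g(0)}$ is a weakening. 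Your closing step via Cantor's intersection theorem on the nested compact convex hulls replaces the paper's Cauchy-sequence argument; both are standard and valid. One small slip to fix: for $j\in R$ you may only assert $w_j-a_{t,\bu}\ge(b_{t,\bu}-a_{t,\bu})/2$, not $w_j-a_{t,\bu}\ge d_t/2$, since $d_t$ is the maximum of $b_{t,\bu}-a_{t,\bu}$ over unit vectors and can exceed the width in the particular direction $\bu$; this is harmless because your final displayed contraction is correctly stated in terms of $b_{t,\bu}-a_{t,\bu}$.
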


%==========%
%グラフ%
The condition $g((d_1/h)^2/2)>0$ in the above theorem implies that 
the BMS graph $\calG_{\by_1}$ of the configuration $\by_1$
is closed with one complete component.
%カバーされるカーネルの具体例%
Under Assumption~\ref{asm:RS}, 
this condition holds either when the kernel $K$ is non-truncated, 
or when the bandwidth $h$ is sufficiently large
(even though $K$ is truncated). 
%一点集中を示唆するだけ%
However, this theorem just gives a condition 
where the BMS algorithm outputs one cluster, 
and is not helpful for situations that we expect in applications of the BMS-based data clustering:
%異なる点に収束する状況を知りたい%
In the application point of view, 
convergence analysis of the BMS algorithm requires
consideration of situations where the blurred data point sequences 
$(\by_{t,1})_{t\in\bbN},\ldots,(\by_{t,n})_{t\in\bbN}$ 
can converge to multiple points.

%========================================%
\subsection{Conditional Convergence Guarantee}
\label{sec:Rev-Naive}
%==========%
%多点収束の主張%
As argued briefly in \cite{cheng1995mean},
when a truncated kernel is used, 
one can expect that a well-isolated group of data points
will eventually converge to one point,
while the other data points may converge to different points. 
%多点収束の定理化%
In the following theorem,
we provide, in terms of the BMS graph, 
a mathematically rigorous sufficient condition
for such a situation to happen.
\begin{theorem}[{Conditional convergence guarantee and rate bound;
extension of Theorem~\ref{thm:Cheng}}]
\label{thm:variant}
%前提%
Assume Assumption~\ref{asm:RS} and 
that there exists some $\tau\in\bbN$ at which the BMS graph $\calG_{\by_\tau}$ is closed,
and take $M=M_{\by_\tau}$.
%結果%
Then, for every $m\in[M]$, 
the blurred data point sequences $(\by_{t,i})_{t\in\bbN}, i\in\calV_{\by_\tau,m}$
converge to a single point in the exponential rate:
there exist $q\in[\frac{3}{4},1)$, $\bar{\by}\coloneq\lim_{t\to\infty}\by_t$, 
and $M$ different points $\bz_1,\ldots,\bz_M\in\bbR^d$ such that
$\|\by_t-\bar{\by}\|=O(q^t)$ and
$\|\by_{t,i}-\bz_m\|=O(q^t)$ for all $i\in\calV_{\by_\tau,m}$ for all $m\in[M]$.
\end{theorem}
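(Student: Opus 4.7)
The plan is to reduce Theorem~\ref{thm:variant} to multiple instances of Theorem~\ref{thm:Cheng}, one per component of $\calG_{\by_\tau}$, with Proposition~\ref{prop:BMScomp} providing the key decomposition. First I would observe that since $\calG_{\by_\tau}$ is closed, Proposition~\ref{prop:BMScomp} implies, for each $m\in[M]$, that $\by_{\tau+1,i}$ with $i\in\calV_{\by_\tau,m}$ depends only on $\{\by_{\tau,j}\}_{j\in\calV_{\by_\tau,m}}$ and that $\Conv(\{\by_{\tau+1,i}\}_{i\in\calV_{\by_\tau,m}})\subseteq\Conv(\{\by_{\tau,i}\}_{i\in\calV_{\by_\tau,m}})$. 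Hence the restriction of one BMS step to indices in $\calV_{\by_\tau,m}$ coincides with one step of a fresh BMS problem on the $|\calV_{\by_\tau,m}|$ points $(\by_{\tau,i})_{i\in\calV_{\by_\tau,m}}$.

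Second, I would prove by induction on $t\ge\tau$ that $\calG_{\by_t}$ remains closed with the same components $\{\calV_{\by_\tau,m}\}_{m\in[M]}$. Within-component edges persist: the component being complete forces its diameter $d_{m,\tau}$ to be strictly below the truncation radius $\beta h$, and Proposition~\ref{prop:BMScomp} combined with the $(a_{t,\bu},b_{t,\bu})$ monotonicity from Theorem~\ref{thm:Cheng} applied inside the component shows the within-component diameter can only shrink, keeping $G$ positive on every within-component pair. Between-component non-edges must also be preserved, namely $\|\by_{t+1,i}-\by_{t+1,j}\|\ge\beta h$ whenever $i\in\calV_{\by_\tau,m}$, $j\in\calV_{\by_\tau,m'}$, $m\ne m'$. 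Once this induction closes, each per-component dynamics is a standalone single-component BMS instance, so Theorem~\ref{thm:Cheng} yields $d_{m,t}=O(q_m^t)$ and convergence of $(\by_{t,i})_{i\in\calV_{\by_\tau,m}}$ to a single point $\bz_m$ at rate $q_m=1-g((d_{m,\tau}/h)^2/2)/(4g(0))\in[\tfrac{3}{4},1)$; setting $q\coloneq\max_m q_m$ and assembling $\bar\by$ from the $\bz_m$'s gives the stated bounds, and the distinctness of $\bz_1,\ldots,\bz_M$ follows because each $\bz_m$ lies in its own component's convex hull, which remains $\beta h$-separated from the others by the induction.

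The hard step is the between-component part of the induction. A direct triangle-inequality bound $\|\by_{t+1,i}-\by_{t+1,j}\|\ge\|\by_{t,i}-\by_{t,j}\|-\|\by_{t+1,i}-\by_{t,i}\|-\|\by_{t+1,j}-\by_{t,j}\|$ is too weak, because BMS can pull each point by a constant fraction of $d_{m,\tau}$ toward the interior of its component's convex hull, and two convex hulls of distinct components need not themselves be $\beta h$-separated even when the minimum vertex-to-vertex distance between them is at least $\beta h$. To close the gap I would aim for a projected-separation argument in the spirit of Theorem~\ref{thm:Cheng}'s Claim~1: for each pair $(m,m')$, identify a direction $\bu$ along which $\calV_{\by_\tau,m}$ and $\calV_{\by_\tau,m'}$ are separated at time $\tau$, and use the convex-combination form of the BMS update together with the vanishing of cross-component weights (which follows from closedness) to show that $a_{t,\bu,m'}-b_{t,\bu,m}$ is non-decreasing in $t$. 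If this projected separation can be shown to stay at least $\beta h$, the induction goes through; the remaining details, including assembly of the uniform rate $q$ and the $O(q^t)$ bound on $\|\by_t-\bar\by\|$, are straightforward bookkeeping on top of Theorem~\ref{thm:Cheng}.
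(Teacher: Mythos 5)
Your overall architecture---decouple the dynamics into the $M$ components via Proposition~\ref{prop:BMScomp} and run Theorem~\ref{thm:Cheng} separately on each---is the same as the paper's, and your within-component analysis and the assembly of the uniform rate $q$ are routine. The gap is in the between-component step, which you correctly single out as the crux but do not close. The projected-separation idea cannot be carried out as proposed: for a fixed unit vector $\bu$ the separation $a_{t,\bu,m'}-b_{t,\bu,m}$ is non-decreasing only once the dynamics have already decoupled (it is the nesting \eqref{eq:Claim1} applied per component, so the argument is circular as a tool for establishing decoupling), and more fundamentally its supremum over directions $\bu$ at time $\tau$ equals the Euclidean distance between the two convex hulls, which---as you yourself observe---need not be anywhere near $\beta h$ even when every cross pair of vertices is at distance at least $\beta h$. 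There is therefore in general no direction from which to launch the ``stays at least $\beta h$'' monotonicity argument, and your induction does not close.

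The ingredient the paper uses instead, and which is absent from your proposal, is Lemma~\ref{lem:MD} of the supplementary material: by Carath\'eodory's theorem and a quadratic averaging identity, if two point sets of diameters $L_1,L_2$ have intersecting convex hulls, then some cross pair lies at distance less than $\sqrt{(L_1^2+L_2^2)/2}$. Applied at time $\tau$ with $L_1,L_2\le\beta h$ (completeness of each component) and all cross distances at least $\beta h$ (no cross edges), this forces the convex hulls of distinct components to be disjoint at time $\tau$; the paper then invokes Proposition~\ref{prop:BMScomp} to keep these hulls nested, hence disjoint, for all later $t$, and concludes the component-wise decoupling from that. Be aware, however, that your instinct about the difficulty here is sound: disjointness of the shrinking hulls does not by itself keep all cross distances above $\beta h$ at later times. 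For example, with the truncated-flat $G(\bu)\propto\bbI(\|\bu\|\le1)$, take two regular $3$-simplices of edge length $0.99\,\beta h$ lying in orthogonal coordinate subspaces with centroids at distance $0.6\,\beta h$: all within-component distances are below $\beta h$ and all cross distances exceed $\beta h$, so the BMS graph is closed with $M=2$ complete components, yet each component collapses onto its centroid in one step and the two centroids then merge, so the two components do not converge to two different points. A complete proof of the between-component step therefore requires a quantitative separation argument beyond both your projected-separation sketch and hull disjointness alone; identifying and supplying that argument is exactly what is missing from your proposal.
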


%==========%
%条件成立は未知%
In the previous studies \cite{cheng1995mean,chen2015convergence}
providing such arguments as the above, however,
it has not been discussed whether there exists $\tau\in\bbN$ at which
the BMS graph $\calG_{\by_\tau}$ is closed.

%========================================%
\subsection{One-Dimensional Convergence Guarantee}
\label{sec:CW95}
%==========%
%1次元・Epanechnikovで収束証明した%
There is a convergence guarantee that is only applicable
to the one-dimensional case ($d=1$), while covering situations where 
the blurred data point sequences converge to multiple points:
\cite[Theorem 14.2]{Cheng95} states that 
the BMS algorithm based on the Epanechnikov kernel $K$
(or the truncated-flat function $G$)
converges in a finite number of iterations.

%==========%
%証明戦略%
%>\cite[Lemma 14.1]{Cheng95}
In the one-dimensional case with the Epanechnikov kernel,
they proved that the blurred data point sequences 
do not change their order through iterations.
%>\cite[Lemma 14.2]{Cheng95}
In the same way as in Theorem~\ref{thm:Cheng}, one can prove 
that the blurred data point sequences starting from the outermost points
are bounded and monotone. 
%>\cite[Lemma 14.3]{Cheng95}
Also, they proved that the distance with which each of the outermost points moves
in each iteration is either zero or larger than a positive constant 
that is independent of the number of iterations so far.
%$t$. 
%>\cite[Theorem 14.2]{Cheng95}
These results constitute the proof of the finite-time convergence of the outermost points.
By repeating the same considerations for 
the inner points except for the converged outermost points, 
one can prove the convergence of all the blurring data point sequences.

%==========%
%一次元・Epanechinikovに限定される%
However, their proof strategy is restricted to 
the one-dimensional case with the Epanechnikov kernel. 
In particular, it should be difficult to extend the argument
based on the order-preserving property to the multi-dimensional case.

%========================================%
\subsection{Cubic Convergence of Gaussian Population}
\label{sec:MACP}
%==========%
%母集団三次収束%
\cite{carreira2006fast, carreira2008generalised} showed 
that the BMS algorithm achieves the cubic convergence 
under a certain limited situation with three assumptions:%
\footnote{%
A similar analysis is also provided in \cite{zhang2006accelerated, chen2015convergence}.
However, \cite{zhang2006accelerated} proved only up to 
the point that the convergence rate is super-linear.}
the Gaussian data distribution, the Gaussian kernel,
and the population limit.

%==========%
%母集団%
In the population limit $n\to\infty$,
the BMS update rule applied to a blurred data point $\bu_t\in\bbR^d$
can be written in terms of the density $P_t$ underlying 
the blurred data points $\by_{t,1},\ldots,\by_{t,n}$ as
\begin{align}
\label{eq:PV}
  \bu_t\mapsto\bu_{t+1}
  =\frac{\int P_t(\by) G(\frac{\bu_t-\by}{h}) \by\,d\by}
	{\int P_t(\by) G(\frac{\bu_t-\by}{h})\,d\by}.
\end{align}
%ガウス分布%
Since the BMS algorithm is invariant under translation and rotation,
without loss of generality we may assume that 
the Gaussian population density at $t=1$ has
mean zero and a diagonal covariance matrix,
so that we 
let $P_1(\bu)=\calN(\bu; \bm{0}_d, \diag((s_{1,j}^2)_{j\in[d]}))$ with $s_{1,1},\ldots,s_{1,d}\in(0,\infty)$,
where $\diag((s_{1,j}^2)_{j\in[d]})$ is the $d\times d$ diagonal matrix 
in which the $j$th diagonal element is $s_{1,j}^2$ for $j\in[d]$.
%ガウスカーネル%
With the Gaussian kernel assumption, we may let 
$G(\bu)=\calN(\bu; \bm{0}_d,\bI_d)$,
where $\bI_d$ is the $d\times d$ identity matrix.
These assumptions allow us to calculate \eqref{eq:PV} with $t=1$ as
\begin{align}
	\bu_2=\diag\biggl(\biggl(\frac{s_{1,j}^2}{s_{1,j}^2+h^2}\biggr)_{j\in[d]}\biggr)\bu_1.
\end{align}
Therefore, the population density $P_2$ also becomes Gaussian: 
$P_2(\bu)=\calN(\bu; \bm{0}_d, \diag((s_{2,j}^2)_{j\in[d]}))$
with $s_{2,j}\coloneq s_{1,j}^3/(s_{1,j}^2+h^2)$. 
Repeating the above calculation,
one can see that, for any $t$, $P_t$ is zero-mean Gaussian with a diagonal
covariance matrix. 
Letting the covariance matrix of $P_t$ be $\diag((s_{t,j}^2)_{j\in[d]})$,
one has $s_{t+1,j}=s_{t,j}^3/(s_{t,j}^2+h^2)$.
This relation shows that $(s_{t,j})_{t\in\bbN}$ is decreasing and converges to 0
and that $s_{t+1,j}/s_{t,j}^3\to 1/h^2\in(0,\infty)$ as $t\to\infty$,
for each $j\in[d]$.
Because the quantity $s_{t,j}$ has a meaning as the size of the residual of $\by_{t,i}$ 
from the convergent point of $(\by_{t,i})_{t\in\bbN}$ along the $j$th coordinate, 
this consideration shows the cubic convergence.

%==========%
%制約的過ぎ%
Although this result might suggest
very fast convergence of the BMS algorithm in more general settings,
the validity of the result itself is quite limited
due to the assumptions made.
It is therefore not certain as to whether the fast convergence
demonstrated in this argument is valid in
more practical finite-sample settings.

%========================================%
\subsection{Optimization View in Configuration Space}
%>「コンフィグレーションとしての扱い・最適化としての扱い」どちらも新しくないなら，
%困るが，以下の案より両方併記の方がいいかな．
%{Optimization Interpretation of BMS Algorithm}
%{Configuration-Space View of BMS Algorithm}
\label{sec:View}
%==========%(イントロにLを書くなら一部変更)
%既存研究%
The work \cite{cheng1995mean} presented 
an interpretation that the BMS algorithm updating the configuration 
$\by_t=(\by_{t,1}^\top,\ldots,\by_{t,n}^\top)^\top$ can be viewed as 
an optimization procedure in the configuration space $\bbR^{nd}$
for (locally) maximizing the objective function
\begin{align}
	L(\bu)\coloneq\sum_{i,j=1}^n K\biggl(\frac{\bu_i-\bu_j}{h}\biggr)
\end{align}
defined in terms of the configuration 
$\bu\coloneq(\bu^\top_1,\ldots,\bu^\top_n)^\top\in\bbR^{nd}$.
%最適化の観点%
We here describe this optimization view.

%==========%
%簡単化仮定%
Assume for a moment that the kernel $K$ is differentiable 
for the simplicity of the description.
The BMS update rule \eqref{eq:updaBMS} is then rewritten as 
\begin{align}
\label{eq:BMS-k}
	\begin{split}
	\by_{t+1,i}
	&=\by_{t,i}+\frac{h^2}{2\sum_{j=1}^nG_{t,i,j}}
	\biggl(-\frac{2}{h^2}\sum_{j=1}^n(\by_{t,i}-\by_{t,j})G_{t,i,j}\biggr)\\
	&=\by_{t,i}+\frac{h^2}{2\sum_{j=1}^nG_{t,i,j}}
	\frac{\partial L(\bu)}{\partial\bu_i}\biggr|_{\bu=\by_t}.
	\end{split}
\end{align}
%まとめ%
In the configuration space, it can further be represented as
\begin{align}
\label{eq:BMS-all}
	\by_{t+1}=\by_t+\frac{h^2}{2}\bS_t^{-1}\nabla L(\by_t)
\end{align}
with the abbreviation
\begin{align}
	\bS_t\coloneq
	\diag\biggl(\biggl(\sum_{j=1}^n G_{t,i,j}\biggr)_{i\in[n]}\biggr)
	\otimes\bI_d\in\bbR^{nd\times nd}.
\end{align}
%勾配上昇法%
It shows that the BMS algorithm can be viewed 
as a gradient ascent method for the objective function $L$ with 
the initial estimate $\by_1=(\bx_1^\top,\ldots,\bx_n^\top)^\top$
determined by \eqref{eq:initBMS}.
%ステップサイズ%
The matrix $\frac{h^2}{2}\bS_t^{-1}$ in \eqref{eq:BMS-all}
corresponds to the step size, 
and the BMS algorithm is interpretable as employing
$\by_t$-dependent step sizes 
for the coordinates corresponding to each data point.

%==========%
%%目的系列に興味はない%
%When one uses the BMS algorithm one is interested in behaviors not
%of the \emph{objective sequence} $(L(\by_t))_{t\in\bbN}$
%but of the \emph{configuration sequence} $(\by_t)_{t\in\bbN}$.
%先行研究は最適化の観点を使っていない%
The existing studies \cite{fukunaga1975estimation, cheng1995mean,Cheng95, 
carreira2006fast, carreira2008generalised, chen2015convergence}
have not leveraged the above optimization view
for convergence analysis of the blurred data point sequences 
$(\by_{t,1})_{t\in\bbN},\ldots,(\by_{t,n})_{t\in\bbN}$
or that of the configuration sequence $(\by_t)_{t\in\bbN}$.
%目的関数導入の重要性%
However, as will be shown in the following section, 
the objective function $L$ plays a key role 
in convergence analysis of these sequences.

%========================================%
\section{Convergence Analysis}
\label{sec:Analysis}
%========================================%
\subsection{Properties of Objective Function}
\label{sec:PO}
%==========%
%目的関数の解析%
In this subsection 
we explore properties of the objective function $L$
in preparation for the convergence analysis of the BMS algorithm
in this paper.

%==========%
%目的関数の不変性%
First, the function $L$ has the following invariance:
\begin{proposition}[{Invariance of the objective function}]
\label{prop:TI}
%負・平行移動%
It holds that $L(\bu)=L(\bv\otimes\bm{1}_n\pm\bu)$
for any $\bu\in\bbR^{nd}$ and $\bv\in\bbR^d$.
%置換%
Also, for a permutation $\sigma$ of $[n]$
we let $\bu^\sigma\coloneq(\bu_{\sigma(1)}^\top,\ldots,\bu_{\sigma(n)}^\top)^\top$.
Then, for any permutation $\sigma$ of $[n]$,
it holds that $L(\bu^\sigma)=L(\bu)$. 
%非重複ペア%
Assuming that $K(\bu)=K(-\bu)$ for any $\bu\in\bbR^d$
(which holds under Assumption~\ref{asm:RS}),
it holds that 
$L(\bu)=2\sum_{1\le i\le j\le n}K(\frac{\bu_i-\bu_j}{h})-nK(\bm{0}_d)$.
%>$n=1$のとき問題．
%=2\sum_{1\le i<j\le n}K(\frac{\bu_i-\bu_j}{h})+nK(\bm{0}_d)
\end{proposition}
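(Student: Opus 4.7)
The proposition splits into three independent claims, and each amounts to direct substitution into the definition $L(\bu)=\sum_{i,j=1}^n K((\bu_i-\bu_j)/h)$. My plan is to handle them one after another; the only non-routine ingredient is the observation that Assumption~\ref{asm:RS} forces $K$ to be even, which I would note once at the start.

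For the translation/reflection invariance, I would plug in $\bv\otimes\bm{1}_n\pm\bu$, whose $i$th block is $\bv\pm\bu_i$. Taking differences, the $\bv$ terms cancel and the argument of each $K$ becomes $\pm(\bu_i-\bu_j)/h$. The $+$ case leaves every summand unchanged; for the $-$ case I invoke $K(\bw)=K(-\bw)$ (which follows from the representation $K(\bw)=k(\|\bw\|^2/2)$ in Assumption~\ref{asm:RS}) to conclude that each summand is still unchanged, so $L(\bv\otimes\bm{1}_n\pm\bu)=L(\bu)$.

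For the permutation invariance, I would just change variables in the double sum: writing $L(\bu^\sigma)=\sum_{i,j=1}^n K((\bu_{\sigma(i)}-\bu_{\sigma(j)})/h)$ and letting $i'=\sigma(i)$, $j'=\sigma(j)$, the fact that $\sigma$ is a bijection of $[n]$ makes $(i',j')$ range over $[n]^2$ exactly once, giving $L(\bu)$.

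For the closed-form expression, I would split the double sum according to $i<j$, $i=j$, $i>j$. The diagonal contributes $nK(\bm{0}_d)$, and by $K(\bw)=K(-\bw)$ the $i>j$ and $i<j$ parts coincide, so $L(\bu)=nK(\bm{0}_d)+2\sum_{1\le i<j\le n}K((\bu_i-\bu_j)/h)$. Rewriting $\sum_{i<j}=\sum_{i\le j}-\sum_{i=j}$ and absorbing the diagonal yields $L(\bu)=2\sum_{1\le i\le j\le n}K((\bu_i-\bu_j)/h)-nK(\bm{0}_d)$, as claimed. There is no genuine obstacle here; the only point to flag carefully is that evenness of $K$ is what legitimizes both the reflection part of the first claim and the symmetrization in the third.
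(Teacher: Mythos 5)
Your proof is correct and matches the paper's approach: the paper disposes of Proposition~\ref{prop:TI} with the single remark that it follows ``via a direct calculation,'' and your three substitution arguments (cancellation of $\bv$ plus evenness of $K$ for the reflection, reindexing by the bijection $\sigma$, and the diagonal/off-diagonal split for the closed form) are exactly that calculation, carried out correctly.
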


%==========%
%自明な大域的最大化解%
Also, the function $L$ has a trivial global maximizer:
\begin{proposition}[{Global maximizer of the objective function}]
\label{prop:TGM}
Assume Assumption~\ref{asm:RS}.
Then it holds that $\arg\max_{\bu\in\bbR^{nd}}
L(\bu)=\{\bv\otimes\bm{1}_n\mid\bv\in\bbR^d\}$.
\end{proposition}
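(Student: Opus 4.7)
The plan is to show that $L$ is uniformly upper bounded by $n^2 k(0)$, that this bound is attained exactly on configurations of the form $\bv\otimes\bm{1}_n$, and that any other configuration falls strictly below the bound.

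First, I would use Assumption~\ref{asm:RS} to note that the profile $k$ is non-increasing on $[0,\infty)$, so $K(\bu)=k(\|\bu\|^2/2)\le k(0)=K(\bm{0}_d)$ for every $\bu\in\bbR^d$. Summing over all $n^2$ pairs in the definition of $L$ then yields
\begin{equation*}
L(\bu)=\sum_{i,j=1}^n K\!\left(\frac{\bu_i-\bu_j}{h}\right)\le n^2 k(0)
\end{equation*}
for every $\bu\in\bbR^{nd}$.

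Next, I would verify that configurations of the form $\bv\otimes\bm{1}_n$ attain this bound: each term in the sum becomes $K(\bm{0}_d)=k(0)$, so $L(\bv\otimes\bm{1}_n)=n^2 k(0)$ for any $\bv\in\bbR^d$. This already shows the inclusion $\{\bv\otimes\bm{1}_n\mid\bv\in\bbR^d\}\subseteq\arg\max_{\bu\in\bbR^{nd}}L(\bu)$.

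For the reverse inclusion, suppose $L(\bu)=n^2 k(0)$. Because each summand is at most $k(0)$, equality forces $K((\bu_i-\bu_j)/h)=k(0)$ for every pair $(i,j)$. Invoking Proposition~\ref{prop:KG}\hyl{a3}, which states that $k(u)<k(0)$ for all $u\in(0,\infty)$, this can occur only when $\|\bu_i-\bu_j\|^2/(2h^2)=0$, i.e., $\bu_i=\bu_j$ for all $i,j\in[n]$. Setting $\bv\coloneq\bu_1$ then gives $\bu=\bv\otimes\bm{1}_n$, which completes the equality of sets.

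There is no substantive obstacle here; the only point where Assumption~\ref{asm:RS} is used essentially (beyond radial symmetry and monotonicity for the inequality) is the strict bound $k(u)<k(0)$ for $u>0$ from \hyl{a3}, which is what rules out maximizing configurations with $\bu_i\neq\bu_j$ for some pair.
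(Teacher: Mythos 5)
Your proof is correct and follows essentially the same route as the paper, which simply notes that the result is a direct consequence of Proposition~\ref{prop:KG}~\hyl{a3}: the upper bound $L(\bu)\le n^2k(0)$ from the monotonicity of $k$, attainment on $\{\bv\otimes\bm{1}_n\}$, and the strict inequality $k(u)<k(0)$ for $u>0$ to exclude all other configurations. Nothing is missing.
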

%クラスタリングでは局所最大化解が重要%
When the configuration sequence $(\by_t)_{t\in\bbN}$ converges to this global maximizer, 
the BMS-based data clustering results in just one cluster.
It suggests that non-trivial clustering results would be obtained from 
the BMS algorithm only when the sequence $(\by_t)_{t\in\bbN}$ 
converges to a local maximizer of $L$ other than the global maximizer.

%========================================%
\subsection{Convergence Guarantee of Objective Sequence}
\label{sec:COS}
%========================================%
\subsubsection{Minorize-Maximize Algorithm}
\label{sec:COS-Pre}
%==========%
%本章の内容%
In this subsection, we provide a sufficient condition under which 
the \emph{objective sequence} $(L(\by_t))_{t\in\bbN}$
is non-decreasing, which in turn implies its convergence.
%MMで証明%
For convergence analysis of the objective sequence 
$(L(\by_t))_{t\in\bbN}$, we apply the framework of 
the minorize-maximize (MM) algorithm 
\cite{yamasaki2023ms, yamasaki2019kernel, de2009sharp, lange2016mm}.

%==========%
%Minorizer関連の定義%
We first introduce the notion of minorizers:
\begin{definition}[{Minorizer and quadratic minorizer}]\hfill
\label{def:MQM}
\begin{itemize}
\item
For a function $f:\calS\to\bbR$ with $\calS\subseteq\bbR^p$,
a function $\bar{f}(\cdot|\bu')$ is called
a \emph{minorizer} of the function $f$ at $\bu'\in \calS$, 
if it satisfies $\bar{f}(\bu'|\bu')=f(\bu')$ and 
$\bar{f}(\bu|\bu')\le f(\bu)$ for any $\bu\in \calS$.
\item
Also, a minorizer $\bar{f}(\bu|\bu')$ that is quadratic or constant
in $\bu$ is called a \emph{quadratic minorizer}.
\end{itemize}
\end{definition}

%==========%
%MMアルゴリズム%
The MM algorithm solves a hard optimization problem for 
an original objective function $f$ by iteratively constructing
a minorizer of $f$ at an estimate
and optimizing that minorizer to obtain a new estimate. 
A quadratic minorizer is often employed 
to ease the subsequent optimization.

%========================================%
\subsubsection{Convergence Guarantee}
\label{sec:COS-CG}
%==========%
%カーネルの%
Under Assumption~\ref{asm:RS}, 
we can construct a quadratic minorizer of 
the kernel $K$ at $\bu'\in\bbR^d$ as
\begin{align}
\label{eq:MM}
	H(\bu|\bu')
	=\frac{G(\bu')}{2}(\|\bu'\|^2-\|\bu\|^2)
	+K(\bu'),
\end{align}
which satisfies
\begin{align}
	H(\bu'|\bu')=K(\bu');\quad
	H(\bu|\bu')\le K(\bu),\quad\forall\bu\in\bbR^d,
\end{align}
as can be verified by substituting 
$\bu=\bu'$ into $H(\bu|\bu')$ and 
$(u,v)=(\|\bu\|^2/2,\|\bu'\|^2/2)$ into Proposition~\ref{prop:KG} \hyl{a6}.
%目的関数の%
As a sum of minorizers at a point is a minorizer of a sum of the original 
functions at the same point, we can construct a minorizer of the original 
objective function $L$ at $\by_t$ as 
\begin{align}
	\begin{split}
	&R(\bu|\by_t)
	\coloneq\sum_{i,j=1}^n H\biggl(\frac{\bu_i-\bu_j}{h}\biggl|\frac{\by_{t,i}-\by_{t,j}}{h}\biggr)\\
	&=-\sum_{i,j=1}^n\frac{G_{t,i,j}}{2h^2}\|\bu_i-\bu_j\|^2
	+\text{($\bu$-independent constant)}.
	\end{split}
\end{align}
%MSにおける構成方法と同じ%
This way of constructing a minorizer of the function $L$
is similar to that of constructing a minorizer of the KDE $P$
in the MS algorithm \cite{yamasaki2023ms, lange2016mm}.

%==========%
%MMの式%
As $R(\cdot|\by_t)$ is a minorizer of $L$ at $\by_t$,
if we can prove the inequality $R(\by_t|\by_t)\le R(\by_{t+1}|\by_t)$,
we obtain the relation
\begin{align}
\label{eq:MI}
	L(\by_t)=R(\by_t|\by_t)\le R(\by_{t+1}|\by_t)\le L(\by_{t+1}),
\end{align}
showing the ascent property of the objective sequence $(L(\by_t))_{t\in\bbN}$,
which will prove its convergence as $L$ is bounded. 
%MMはだめ%
Although the inequality $R(\by_t|\by_t)\le R(\by_{t+1}|\by_t)$ would
immediately follow for the MM algorithm,
it does not for the BMS algorithm since the BMS algorithm 
\emph{cannot} be regarded as the MM algorithm with the minorizer $R$: 
Maximizing the minorizer $R(\cdot|\by_t)$ yields
trivial optima $\bv\otimes\bm{1}_n$ with any $\bv\in\bbR^d$,
which also maximize $L$. 
%BMSはimprove%
We have found, however, that the update rule \eqref{eq:updaBMS} of 
the BMS algorithm also improves the minorizer $R(\cdot|\by_t)$:
\begin{theorem}[{Ascent property and convergence guarantee of objective sequence}]
\label{thm:COS}
Assume Assumption~\ref{asm:RS}.
Then one has that $R(\by_t|\by_t)\le R(\by_{t+1}|\by_t)$ for any $t\in\bbN$,
and the objective sequence $(L(\by_t))_{t\in\bbN}$ is non-decreasing and converges.
Also, for any $t\in\bbN$, $L(\by_t)=L(\by_{t+1})$ implies $\by_t=\by_{t+1}$.
\end{theorem}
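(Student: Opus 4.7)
The plan is to reduce everything to the single ascent inequality $R(\by_t|\by_t) \le R(\by_{t+1}|\by_t)$: once this holds, the MM sandwich~\eqref{eq:MI} immediately delivers monotonicity of $(L(\by_t))_{t\in\bbN}$, and convergence of the objective sequence follows from boundedness of $L$ that is implied by Assumption~\ref{asm:RS} (which makes $K$, and hence $L$, bounded).

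For the ascent inequality, I would compute $R(\by_{t+1}|\by_t) - R(\by_t|\by_t)$ directly. Writing $\Delta_i \coloneq \by_{t+1,i} - \by_{t,i}$ and $D_i \coloneq \sum_{j=1}^n G_{t,i,j}$, expand $\|(\by_{t,i}-\by_{t,j}) + (\Delta_i - \Delta_j)\|^2$ to split the difference into a cross term and a quadratic term in $\Delta$. The cross term is simplified using the symmetry $G_{t,i,j} = G_{t,j,i}$ together with the BMS update rule rewritten as $\sum_{j=1}^n G_{t,i,j}(\by_{t,i}-\by_{t,j}) = -D_i \Delta_i$; after this simplification one obtains
\[
R(\by_{t+1}|\by_t) - R(\by_t|\by_t)
= \frac{2}{h^2}\sum_{i=1}^n D_i\|\Delta_i\|^2 - \frac{1}{2h^2}\sum_{i,j=1}^n G_{t,i,j}\|\Delta_i - \Delta_j\|^2.
\]
The elementary bound $\|\Delta_i - \Delta_j\|^2 \le 2\|\Delta_i\|^2 + 2\|\Delta_j\|^2$, summed using symmetry of $G_{t,i,j}$ and $\sum_{j=1}^n G_{t,i,j} = D_i$, shows that the second sum is at most $4\sum_{i=1}^n D_i\|\Delta_i\|^2$, which exactly cancels the first term and gives the desired nonnegativity. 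For the strict-improvement claim, $L(\by_t) = L(\by_{t+1})$ forces equality throughout~\eqref{eq:MI} and, tracing back, equality in the above bound for every $(i,j)$ with $G_{t,i,j} > 0$; since $\|\Delta_i - \Delta_j\|^2 = 2\|\Delta_i\|^2 + 2\|\Delta_j\|^2$ is equivalent to $\Delta_i + \Delta_j = \bm{0}_d$, taking $i = j$ and using $G_{t,i,i} = g(0) > 0$ from Proposition~\ref{prop:KG}~\hyl{a3} forces $\Delta_i = \bm{0}_d$ for every $i$, i.e., $\by_{t+1} = \by_t$.

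The main obstacle is conceptual rather than computational, as the authors flag just before the statement: the BMS update does not maximize $R(\cdot|\by_t)$, whose maxima are the trivial configurations $\bv \otimes \bm{1}_n$, so the standard MM monotonicity argument is unavailable and the improvement of $R$ must be verified by hand. What makes the direct verification work is the tight cancellation between the linear and quadratic terms in the $\Delta$-expansion; and it is the very same tightness, applied at the diagonal $i=j$ where the self-weight $G_{t,i,i} = g(0)$ is strictly positive, that simultaneously produces the equality characterization in a single stroke.
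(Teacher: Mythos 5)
Your proposal is correct and follows essentially the same route as the paper: your identity $R(\by_{t+1}|\by_t)-R(\by_t|\by_t)=\frac{2}{h^2}\sum_i D_i\|\Delta_i\|^2-\frac{1}{2h^2}\sum_{i,j}G_{t,i,j}\|\Delta_i-\Delta_j\|^2$ is exactly the paper's $\frac{1}{h^2}(\by_{t+1}-\by_t)^\top(\bS_t+\bG_t)(\by_{t+1}-\by_t)$ written in coordinates (since $2\|\Delta_i\|^2+2\|\Delta_j\|^2-\|\Delta_i-\Delta_j\|^2=\|\Delta_i+\Delta_j\|^2$), derived by direct expansion rather than via the paper's matrix identity, and your use of the diagonal terms $G_{t,i,i}=g(0)>0$ mirrors the paper's eigenvalue bound $\text{min-ev}(\bS_t+\bG_t)\ge 2g(0)$. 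The only substantive difference is that the paper retains the quantitative form $R(\by_{t+1}|\by_t)-R(\by_t|\by_t)\ge\frac{2g(0)}{h^2}\|\by_{t+1}-\by_t\|^2$ (Lemma~\ref{lem:A}), which it reuses later (e.g., in the proof of Theorem~\ref{thm:NT}); your argument establishes only nonnegativity plus the equality case, which suffices for this theorem and is immediately upgradable to the quantitative bound by keeping the diagonal contribution.
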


%========================================%
\subsection{Convergence Guarantee and Rate Bounds of Configuration Sequence for Smooth Kernels}
%based on {\L}ojasiewicz Property}
\label{sec:CPSS}
%========================================%
\subsubsection{{\L}ojasiewicz Property}
\label{sec:CPSS-Pre}
%==========%
%未解決問題%
In Sections~\ref{sec:CPSS} and \ref{sec:CPSN}, we transform 
a convergence guarantee of the objective sequence $(L(\by_t))_{t\in\bbN}$ 
to that of the configuration sequence $(\by_t)_{t\in\bbN}$.
%本章の内容%
In this subsection, we assume the {\L}ojasiewicz inequality/property 
\cite{lojasiewicz1965ensembles, kurdyka1994wf, bolte2007lojasiewicz, bolte2007clarke}
for the objective function $L$, and study convergence 
of $(\by_t)_{t\in\bbN}$ by relying on that property and 
on existing abstract convergence theorems in optimization theory 
\cite{attouch2009convergence, frankel2015splitting} which exploit that property.
We firstly review the discussion on the {\L}ojasiewicz property, 
and important classes of functions having that property.

%==========%
%Lojasiewicz不等式の概要%
The {\L}ojasiewicz inequality gives a bound of 
the flatness of a function around its stationary point.
Its definition, together with those of related notions, is given as follows: 
\begin{definition}[{{\L}ojasiewicz property/inequality/exponent}]\hfill
\label{def:Loj}
\begin{itemize}
\item
A function $f:\calS\to\bbR$ with $\calS\subseteq\bbR^p$
is said to have the \emph{{\L}ojasiewicz property} at $\bu'\in \calS$ 
with an exponent $\theta$, if there exists $\epsilon>0$ 
such that $f$ is differentiable on the intersection
$\calU(\bu',f,\calS,\epsilon)\coloneq
\{\bu\in \calS\mid \|\bu'-\bu\|<\epsilon, f(\bu')-f(\bu)\ge0\}$
of the $\epsilon$-neighborhood of $\bu'$ and
the lower level set $\{\bu\in\calS\mid f(\bu)\le f(\bu')\}$, 
and satisfies the \emph{{\L}ojasiewicz inequality}
\begin{align}
\label{eq:Lojasiewicz-ineq}
	\|\nabla f(\bu)\|\ge c \{f(\bu')-f(\bu)\}^\theta
\end{align}
with some $c>0$ and $\theta\in[0,1)$ and any $\bu\in \calU(\bu',f,\calS,\epsilon)$,
where we adopt the convention $0^0=0$ on the right-hand side of \eqref{eq:Lojasiewicz-ineq}
following \cite[Remark 4]{attouch2009convergence}.
\item
Also, $f$ is said to have the \emph{{\L}ojasiewicz property} 
on $\calT\subseteq \calS$ (when $\calT=\calS$, we omit `on $\calT$'),
if $f$ is differentiable on $\calT$ and there exists $\epsilon>0$ 
such that $f$ satisfies the {\L}ojasiewicz inequality 
\eqref{eq:Lojasiewicz-ineq} with some $c>0$ and $\theta\in[0,1)$ and 
any $(\bu',\bu)$ such that $\bu'\in \calT,\bu\in\calU(\bu',f,\calT,\epsilon)$. 
\item
Moreover, the minimum value of $\theta$, 
with which $f$ has the {\L}ojasiewicz property at $\bu'$, 
is called the \emph{{\L}ojasiewicz exponent} of $f$ at $\bu'$.
\end{itemize}
\end{definition}

%==========%
%{\L}ojasiewicz特性を持つクラス%
Not every function has the {\L}ojasiewicz property:
for example, \cite[p.\,14]{absil2005convergence, palis2012geometric}
and \cite{yamasaki2023ms} respectively present Mexican-hat function and 
$f(\bu)=-e^{-\|\bu\|^{-\gamma}}\bbI(\|\bu\|\neq0), \gamma>0$ 
that are in the $C^\infty$ class as counterexamples.
%滑らか%
\cite{lojasiewicz1965ensembles} proved
that analytic functions have the {\L}ojasiewicz property,
and thereafter, \cite{kurdyka1994wf} generalized that result 
to the class of $C^1$ functions with o-minimal structure
(see also \cite{van1996geometric}),
which particularly includes $C^1$ globally subanalytic functions:%
\footnote{%
\label{fn:Bolte}
%非滑らかなのもある%
\cite{bolte2007lojasiewicz, bolte2007clarke} extended the definition of 
the {\L}ojasiewicz inequality for non-smooth functions, 
and proved that continuous globally subanalytic functions
satisfy that generalized {\L}ojasiewicz inequality.
%滑らかな設定で議論%
However, our convergence analysis of the BMS algorithm described in 
Section~\ref{sec:CPSS} eventually requires the smoothness assumption 
(assumption \hyl{b1} in Theorem~\ref{thm:BMS-GCG} or
Assumption~\ref{asm:LCG} in Theorem~\ref{thm:BMS-CG}).
Therefore, in Section~\ref{sec:CPSS}, we adopt a simple framework that 
supposes the smoothness even if it can be generalized to the non-smooth case.}
\begin{proposition}[{\cite{lojasiewicz1965ensembles, kurdyka1994wf}}]
\label{prop:SA}
A function $f:\calS\to\bbR$ with $\calS\subseteq\bbR^p$ 
has the {\L}ojasiewicz property, 
if $f$ is analytic or if $f$ is $C^1$ globally subanalytic.
\end{proposition}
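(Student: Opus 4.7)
The plan is to reduce the statement to the two cited classical results, since Proposition~\ref{prop:SA} is essentially a restatement of \cite{lojasiewicz1965ensembles, kurdyka1994wf} in the specific form of Definition~\ref{def:Loj}. Fix $\bu'\in\calS$ and treat the non-critical and critical cases separately.

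First I would dispose of the non-critical case. If $\nabla f(\bu')\neq\bm{0}_p$, continuity of $\nabla f$ (automatic for analytic $f$ and given for $C^1$ globally subanalytic $f$) yields $\epsilon_0>0$ and $\delta>0$ with $\|\nabla f(\bu)\|\ge\delta$ on a ball of radius $\epsilon_0$ around $\bu'$. Shrinking $\epsilon_0$ to some $\epsilon\in(0,\epsilon_0]$ so that $f(\bu')-f(\bu)\le 1$ on $\calU(\bu',f,\calS,\epsilon)$, the inequality~\eqref{eq:Lojasiewicz-ineq} holds trivially for every $\theta\in[0,1)$ with $c=\delta$. Hence only the critical case $\nabla f(\bu')=\bm{0}_p$ requires nontrivial work.

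For analytic $f$ at a critical point, I would apply the classical Łojasiewicz gradient inequality \cite{lojasiewicz1965ensembles} to the analytic function $g(\bu):=f(\bu)-f(\bu')$, which vanishes at $\bu'$. This produces $c>0$, $\theta\in[0,1)$, and a neighborhood of $\bu'$ on which $\|\nabla g(\bu)\|\ge c|g(\bu)|^\theta$. Since $\nabla g=\nabla f$ and $|g(\bu)|=f(\bu')-f(\bu)$ on $\calU(\bu',f,\calS,\epsilon)$, this is exactly \eqref{eq:Lojasiewicz-ineq}. For the $C^1$ globally subanalytic case, the same reduction works with Kurdyka's generalization \cite{kurdyka1994wf}: globally subanalytic sets form an o-minimal structure (see \cite{van1996geometric}), and Kurdyka's theorem yields a Kurdyka--Łojasiewicz inequality with a definable desingularizing function $\psi$; in this particular o-minimal structure $\psi$ may be taken to be a power function, which recovers precisely the form in~\eqref{eq:Lojasiewicz-ineq}.

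The main obstacle is conceptual rather than computational: the heavy lifting is done by the two cited theorems, whose proofs rely on resolution of singularities for the analytic case and on cell decomposition in o-minimal structures for the subanalytic case. Both results are classical and lie well outside the scope of this paper, so the proof essentially amounts to verifying that their conclusions can be repackaged into the form required by Definition~\ref{def:Loj}, which the reduction above accomplishes.
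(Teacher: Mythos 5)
The paper offers no proof of this proposition beyond the single line ``Proposition~\ref{prop:SA} is given by \cite{lojasiewicz1965ensembles, kurdyka1994wf}'', so your reduction to those two classical theorems is exactly the paper's approach, merely spelled out in more detail (non-critical vs.\ critical points, and repackaging Kurdyka's desingularizing function as a power function). The one caveat is that your argument is pointwise at each $\bu'$, while the unqualified ``has the {\L}ojasiewicz property'' in Definition~\ref{def:Loj} is the uniform version (a single $\epsilon$, $c$, $\theta$ for all $\bu'\in\calS$); the paper does not address this passage from pointwise to uniform either, so your proposal matches its treatment.
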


%==========%
%大域的劣解析性の導入%
Here, the global subanalyticity is defined as follows,
together with several related notions that serve 
as sufficient conditions for the global subanalyticity;
see also \cite{bolte2007lojasiewicz, valette2022subanalytic}.
\begin{definition}[{Global subanalyticity and related notions}]\hfill
\label{def:GSF}
\begin{itemize}
\item%
A set $\calS\subseteq\bbR^p$ is called \emph{semialgebraic},
if there exist a finite number of polynomial functions $f_{ij}:\bbR^p\to\bbR$ such that 
$\calS=\bigcup_{i=1}^q\bigcap_{j=1}^r\{\bu\in\bbR^p\mid f_{ij}(\bu)\,\sigma_{ij}\,0\}$
with relational operators $\sigma_{ij}\in\{<,>,=\}$.
\item%
A set $\calS\subseteq\bbR^p$ is called \emph{semianalytic},
if for each point $\bu'\in\bbR^p$ there exist a neighborhood $\calT$ of $\bu'$ 
and a finite number of analytic functions $f_{ij}:\calT\to\bbR$ such that 
$\calS\cap \calT=\bigcup_{i=1}^q\bigcap_{j=1}^r\{\bu\in \calT\mid f_{ij}(\bu)\,\sigma_{ij}\,0\}$
with relational operators $\sigma_{ij}\in\{<,>,=\}$.
\item%
A set $\calS\subseteq\bbR^p$ is called \emph{subanalytic},
if for each point $\bu'\in\bbR^p$ there exist a neighborhood $\calT$ of $\bu'$ 
and a bounded semianalytic set $\calU\subseteq\bbR^{p+p'}$ with $p'\in\bbN$
such that $\calS\cap \calT=\{\bu\in\bbR^p\mid (\bu,\bv)\in \calU\}$.
\item
A set $\calS\subseteq\bbR^p$ is called \emph{globally semianalytic} 
or \emph{globally subanalytic}, if its image under the map
$\bu\mapsto(u_1/(1+u_1^2)^{1/2},\ldots,u_p/(1+u_p^2)^{1/2})$
is a semianalytic or subanalytic subset of $\bbR^p$, respectively.
\item%
A function $f:\calS\to\bbR$ with $\calS\subseteq\bbR^p$ is called \emph{semialgebraic},
\emph{semianalytic}, \emph{subanalytic}, \emph{globally semianalytic}, 
or \emph{globally subanalytic}, if its graph $\{(\bu,v)\in \calS\times\bbR\mid v=f(\bu)\}$ 
is semialgebraic, semianalytic, subanalytic, globally semianalytic, or globally subanalytic
subset of $\bbR^{p+1}$, respectively.
%区分多項式の定義はそれほど強く要求されないだろう．
\end{itemize}
\end{definition}
%関連概念の重要性%
These related notions are useful in practice, 
because they are easier to verify than the global subanalyticity.
%半代数・半解析の例%
For example, the class of semialgebraic functions includes 
polynomial, rational, and piecewise polynomial functions,
and the class of semianalytic functions includes 
a piecewise analytic function defined over a semianalytic partition 
\cite{bierstone1988semianalytic, dedieu1992penalty};
\cite{yamasaki2023ms} clarified that these function classes include 
many commonly used kernels shown in Table~\ref{tab:Kernel}.

%==========%
\begin{figure}[!t]
\centering%
\includegraphics[width=8.5cm, bb=0 0 551 216]{./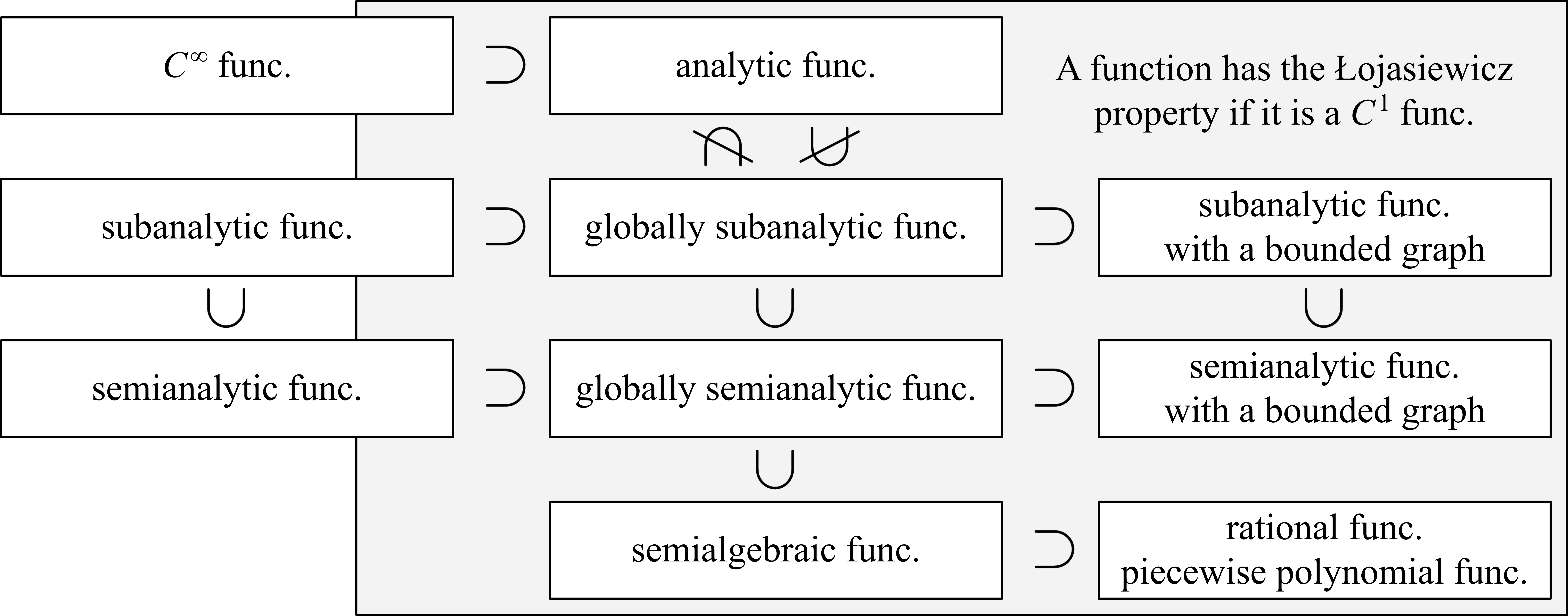}
\caption{%
Inclusion relation among important function classes 
relevant to the discussion on the {\L}ojasiewicz property.}
\label{fig:relation}
\end{figure}

%==========%
%解析で使用する重要な事実%
The inclusion relation of these function classes 
(see Figure~\ref{fig:relation} and Section~\ref{sec:Proof-PropSAGSProp})
and the stability of the global subanalyticity under 
the summation \cite[Properties 1.1.8]{valette2022subanalytic} 
summarized below are important in our analysis:
\begin{proposition}
\label{prop:GSprop}
Any semialgebraic or globally semianalytic functions, 
any semianalytic or subanalytic functions with a bounded graph,
and the summation of any globally subanalytic functions
are globally subanalytic.
\end{proposition}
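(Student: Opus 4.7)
The plan is to verify each implication in the proposition directly from Definition~\ref{def:GSF}, organized into three cases, and relying on standard closure properties of (sub)analytic sets from the literature.

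First, I would handle semialgebraic $\Rightarrow$ globally semianalytic $\Rightarrow$ globally subanalytic. The map
\begin{equation*}
\phi(\bu)\coloneq\bigl(u_1/(1+u_1^2)^{1/2},\ldots,u_p/(1+u_p^2)^{1/2}\bigr)
\end{equation*}
is itself semialgebraic, so the image of a semialgebraic set (in particular, the graph of a semialgebraic function) under $\phi$ is semialgebraic by Tarski--Seidenberg. Since every semialgebraic set is semianalytic---its defining polynomials being analytic on all of $\bbR^p$---this image is semianalytic, giving the first implication. The second implication follows from the standard fact that every semianalytic set is subanalytic: at each $\bu'\in\calS$, one takes a bounded open neighborhood $\calT$ of $\bu'$ and sets $\calU\coloneq\calS\cap\calT$, which is bounded semianalytic; then with $p'=0$ in Definition~\ref{def:GSF}, the identity $\calS\cap\calT=\calU$ supplies the required subanalytic description.

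Second, for a semianalytic or subanalytic function $f$ with bounded graph $\Gamma\subseteq\bbR^{p+1}$, I would exploit that $\phi:\bbR^{p+1}\to(-1,1)^{p+1}$ is a real-analytic diffeomorphism with analytic inverse $\phi^{-1}(\bv)=(v_j/(1-v_j^2)^{1/2})_{j\in[p+1]}$. Since $\Gamma$ is bounded, $\phi(\Gamma)$ is contained in some compact subset $K\subset(-1,1)^{p+1}$, so the closure of $\phi(\Gamma)$ is disjoint from the boundary $\{-1,1\}^{p+1}$. At any point of $(-1,1)^{p+1}$, one pulls back the local (sub)analytic description of $\Gamma$ via the analytic $\phi^{-1}$; at any point of $\bbR^{p+1}\setminus K$, a small enough open neighborhood entirely misses $\phi(\Gamma)$, trivially yielding (sub)analyticity. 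Hence $\phi(\Gamma)$ is (sub)analytic in $\bbR^{p+1}$, so $f$ is globally subanalytic.

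Third, I would establish closure of globally subanalytic functions under summation by induction on the number of summands, reducing to the pairwise case. Given globally subanalytic $f_1,f_2:\calS\to\bbR$ with graphs $\Gamma_1,\Gamma_2\subseteq\bbR^{p+1}$, the graph of $f_1+f_2$ is the image of
\begin{equation*}
\calQ\coloneq\{(\bu,v_1,v_2)\in\bbR^{p+2}\mid (\bu,v_i)\in\Gamma_i,\ i=1,2\}
\end{equation*}
under the linear (hence semialgebraic) map $\psi(\bu,v_1,v_2)\coloneq(\bu,v_1+v_2)$. Now $\calQ$ is a finite intersection of cylinders of the form $\Gamma_i\times\bbR$ (up to a harmless coordinate swap), each globally subanalytic by closure under Cartesian product with $\bbR$; and $\psi(\calQ)$ is then globally subanalytic by closure under images of semialgebraic maps. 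The main obstacle I anticipate lies precisely here: the closure of globally subanalytic sets under projection and under semialgebraic maps is the nontrivial statement that they form an o-minimal structure, which I would cite from \cite[Properties 1.1.8]{valette2022subanalytic} rather than reprove.
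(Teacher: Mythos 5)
Your proposal is correct, but it takes a more self-contained route than the paper does. The paper's own ``proof'' of Proposition~\ref{prop:GSprop} is essentially a citation collage: it quotes \cite[before Example 1.1.4, Example 1.1.4, Definition 1.1.6, Properties 1.1.8]{valette2022subanalytic} and \cite[after Definition 2.2]{bolte2007lojasiewicz} for each of the inclusion relations and for stability under summation, without reconstructing any arguments. You instead verify the inclusions directly from Definition~\ref{def:GSF}: you make the compactifying map $\phi(\bu)=(u_j/(1+u_j^2)^{1/2})_j$ explicit, observe that it is a semialgebraic analytic diffeomorphism onto $(-1,1)^{p}$, and use this to reduce ``semialgebraic $\Rightarrow$ globally semianalytic'' to Tarski--Seidenberg, ``bounded graph $\Rightarrow$ globally subanalytic'' to the fact that $\phi(\Gamma)$ stays in a compact subset of the open cube, and the summation claim to stability of globally subanalytic sets under products, intersections, and images of linear maps. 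What your approach buys is transparency about where the genuinely deep input sits --- you correctly isolate it in the o-minimality/closure-under-projection property, which you (like the paper) delegate to \cite[Properties 1.1.8]{valette2022subanalytic}; what the paper's approach buys is brevity and exact pointers into the literature. Two minor points worth tightening if you were to write this up: in the bounded-graph case, the step ``pull back the local subanalytic description via the analytic $\phi^{-1}$'' silently invokes the standard fact that images of relatively compact subanalytic sets under analytic maps are subanalytic (for the semianalytic case the pullback is immediate, for the subanalytic case it is not); and in the paper's version of Definition~\ref{def:GSF} the auxiliary dimension $p'$ is taken in $\bbN$, so your $p'=0$ device for ``semianalytic $\Rightarrow$ subanalytic'' should be replaced by, e.g., $\calU=(\calS\cap\calT)\times\{0\}\subseteq\bbR^{p+1}$. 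Neither affects correctness.
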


%========================================%
\subsubsection{Convergence Guarantee}
\label{sec:CPSS-CG}
%==========%
%抽象的な収束定理%
Several recent studies on the optimization theory, such as
\cite{absil2005convergence, attouch2009convergence, 
attouch2013convergence, noll2014convergence, 
frankel2015splitting, bolte2016majorization, yamasaki2023ms}, 
exploit the {\L}ojasiewicz inequality to prove 
the convergence of various optimization algorithms.
%{attouch2013convergence, frankel2015splitting}を使った%
Similarly to those studies, by applying abstract convergence theorems,
\cite[Theorem 3.2]{attouch2013convergence}
and \cite[Theorem 3.1]{frankel2015splitting}, 
to the BMS algorithm, we obtain the following theorem:
\begin{theorem}[{Conditional convergence guarantee for smooth kernels}]
\label{thm:BMS-GCG}
Assume Assumption~\ref{asm:RS},
and, for the closure $\calT=\cl(\Conv(\{\by_t\}_{t\ge\tau}))$ of 
the convex hull $\Conv(\{\by_t\}_{t\ge\tau})$ of 
$\{\by_t\}_{t\ge\tau}$ with some $\tau\in\bbN$, that
\begin{itemize}
\item[\hyt{b1}]
the function $L$ is differentiable and has a Lipschitz-continuous gradient on $\calT$
(i.e., there exists a constant $c\in[0,\infty)$ %of $\nabla L$
such that $\|\nabla L(\bu)-\nabla L(\bu')\|\le c\| \bu-\bu'\|$ 
for any $\bu,\bu'\in \calT$);
\item[\hyt{b2}]
the function $L$ has the {\L}ojasiewicz property on $\calT$.
\end{itemize}
Then the configuration sequence $(\by_t)_{t\in\bbN}$ converges 
to a stationary point $\bar{\by}\in\bbR^{nd}$ of the function $L$.
\end{theorem}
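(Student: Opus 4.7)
The plan is to reduce Theorem~\ref{thm:BMS-GCG} to the hypotheses of the abstract convergence theorems \cite[Theorem 3.2]{attouch2013convergence} and \cite[Theorem 3.1]{frankel2015splitting}: for a bounded configuration sequence lying in a set $\calT$ on which the objective is $C^{1,1}$ and satisfies the \L{}ojasiewicz inequality, it suffices to verify (i) a sufficient-ascent bound $L(\by_{t+1})-L(\by_t)\ge a\|\by_{t+1}-\by_t\|^2$ and (ii) a relative-error bound $\|\nabla L(\by_{t+1})\|\le b\|\by_{t+1}-\by_t\|$ with constants $a,b>0$ independent of $t$; these together with (b2) force $\sum_t\|\by_{t+1}-\by_t\|<\infty$, so $(\by_t)$ converges to some $\bar{\by}$, and then (b1) combined with (ii) forces $\nabla L(\bar{\by})=\bm{0}_{nd}$. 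The required boundedness of $(\by_t)_{t\ge\tau}$, and hence the compactness of $\calT$, is already guaranteed by the nested convex-hull inclusion \eqref{eq:Claim3} of Theorem~\ref{thm:Cheng}.

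For (i) I exploit the quadratic minorizer of Theorem~\ref{thm:COS}. Introducing the weighted-adjacency matrix $\bG_t\coloneq(G_{t,i,j})_{i,j\in[n]}$ and the degree matrix $\mathbf{D}_t\coloneq\diag((\sum_j G_{t,i,j})_{i\in[n]})$ (so that $\bS_t=\mathbf{D}_t\otimes\bI_d$), a direct expansion using $\nabla R(\by_t|\by_t)=\nabla L(\by_t)$ gives
\[
R(\bu|\by_t)-R(\by_t|\by_t)=\nabla L(\by_t)^\top(\bu-\by_t)-\tfrac{1}{h^2}(\bu-\by_t)^\top\bigl((\mathbf{D}_t-\bG_t)\otimes\bI_d\bigr)(\bu-\by_t),
\]
where $\mathbf{D}_t-\bG_t$ is the weighted graph Laplacian. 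Substituting $\bu=\by_{t+1}$ together with the update-rule identity $\nabla L(\by_t)=\frac{2}{h^2}(\mathbf{D}_t\otimes\bI_d)(\by_{t+1}-\by_t)$ read off from \eqref{eq:BMS-all} collapses this to
\[
R(\by_{t+1}|\by_t)-R(\by_t|\by_t)=\tfrac{1}{h^2}(\by_{t+1}-\by_t)^\top\bigl((\mathbf{D}_t+\bG_t)\otimes\bI_d\bigr)(\by_{t+1}-\by_t).
\]
The crux of the argument---and in my view the main obstacle---is to lower-bound $\mathbf{D}_t+\bG_t$ uniformly in $t$. Its diagonal entries are $2G(0)+\sum_{j\neq i}G_{t,i,j}$ and its off-diagonal entries are the non-negative $G_{t,i,j}$, so the diagonal-dominance gap is exactly $2G(0)>0$, a strictly positive constant handed to us by the self-loop weight $G_{t,i,i}=G(0)$ guaranteed by Assumption~\ref{asm:RS} (item \hyl{a3} of Proposition~\ref{prop:KG}). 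Gershgorin's theorem then yields $\mathbf{D}_t+\bG_t\succeq 2G(0)\bI_n$, and MM chaining $L(\by_{t+1})-L(\by_t)\ge R(\by_{t+1}|\by_t)-R(\by_t|\by_t)$ delivers (i) with $a=2G(0)/h^2$.

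For (ii) I start from the same update-rule identity $\nabla L(\by_t)=\frac{2}{h^2}(\mathbf{D}_t\otimes\bI_d)(\by_{t+1}-\by_t)$: the crude estimate $\|\mathbf{D}_t\|_{\mathrm{op}}\le ng(0)$ produces $\|\nabla L(\by_t)\|\le(2ng(0)/h^2)\|\by_{t+1}-\by_t\|$, and the Lipschitz hypothesis (b1) with constant $c$ propagates this via the triangle inequality to $\|\nabla L(\by_{t+1})\|\le(2ng(0)/h^2+c)\|\by_{t+1}-\by_t\|$. Once (i), (ii), and boundedness are in hand, the standard \L{}ojasiewicz finite-length argument of \cite[Theorem 3.2]{attouch2013convergence} or \cite[Theorem 3.1]{frankel2015splitting} applies to yield the convergence of $(\by_t)$ to a stationary point of $L$.
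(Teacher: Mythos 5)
Your proposal is correct and follows essentially the same route as the paper: the paper's Lemmas~\ref{lem:A}, \ref{lem:B}, and \ref{lem:C} are precisely your sufficient-ascent bound and relative-error bound (with the same constants $\bar a=2g(0)/h^2$ and $\bar b^{-1}=2ng(0)/h^2$, and the same triangle-inequality propagation under \hyl{b1}), after which the paper likewise invokes the abstract {\L}ojasiewicz finite-length machinery of \cite{attouch2013convergence,frankel2015splitting}. The only cosmetic difference is that you lower-bound $\text{min-ev}(\bS_t+\bG_t)$ by $2g(0)$ via Gershgorin and diagonal dominance, whereas the paper obtains the identical bound from the quadratic-form identity $\by^\top(\bS_t+\bG_t)\by=\sum_{i,j}\tfrac{G_{t,i,j}}{2}\|\by_i+\by_j\|^2$ and the self-loop terms.
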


%==========%
%収束性定理%
We next argue sufficient conditions on the kernel $K$
which make the function $L$ to satisfy the conditions~\hyl{b1} and \hyl{b2}. 
The conditions~\hyl{b1} and \hyl{b2} require 
the function $L$ to have a Lipschitz-continuous gradient 
(smoothness) and the {\L}ojasiewicz property around 
the trajectory of the sequence $(\by_t)_{t\ge\tau}$.
%勾配リプシッツの置き換え%
Simply because the summation retains the smoothness in the above-mentioned sense,
the function $L$ satisfies the condition~\hyl{b1}
if the kernel $K$ satisfies the following assumption:
\begin{assumption}
\label{asm:LCG}%
The kernel $K$ is differentiable with a Lipschitz-continuous gradient.
\end{assumption}
This assumption implies $K\in C^1$.
Thus, when the kernel $K$ is analytic or globally subanalytic,
Propositions~\ref{prop:SA} and \ref{prop:GSprop} show 
that the function $L$ has the {\L}ojasiewicz property.
However, in order to have the function $L$ to satisfy the condition \hyl{b2}, 
one finds that 
requiring only the subanalyticity, not the global subanalyticity, 
for the kernel $K$ is sufficient due to 
a characteristic behavior of the BMS algorithm, as explained below: 
Since Theorem~\ref{thm:Cheng} under Assumption~\ref{asm:RS}
shows $\by_t\in\Conv(\{\bx_i\}_{i\in[n]})^n$ for any $t\in\bbN$,
one can restrict the domain of every kernel $K(\frac{\bu_i-\bu_j}{h})$ 
as a function of $\bu$ to $\Conv(\{\bx_i\}_{i\in[n]})^n$ without any problems.
Also, Assumption~\ref{asm:RS} ensures the boundedness of $K(\frac{\bu_i-\bu_j}{h})$.
Thus, for a subanalytic kernel $K$,
the restriction of $K(\frac{\bu_i-\bu_j}{h})$ to $\Conv(\{\bx_i\}_{i\in[n]})^n$ 
is a subanalytic function with a bounded graph
and accordingly globally subanalytic due to Proposition~\ref{prop:GSprop},
and hence the restriction of the corresponding function $L$ is 
globally subanalytic and has the {\L}ojasiewicz property due to 
Propositions~\ref{prop:SA} and \ref{prop:GSprop} and Assumption~\ref{asm:LCG}.
Accordingly, under Assumptions~\ref{asm:RS} and \ref{asm:LCG}, 
we can employ the following assumption instead of the condition \hyl{b2}:
\begin{assumption}
\label{asm:LP}%
The kernel $K$ is analytic or subanalytic.
\end{assumption}

%==========%
%条件なしの収束保証%
Thus, the following holds as a corollary of Theorem \ref{thm:BMS-GCG}:
\begin{theorem}[{Convergence guarantee for smooth kernels;
corollary of Theorem \ref{thm:BMS-GCG}}]
\label{thm:BMS-CG}
Assume Assumptions \ref{asm:RS}, \ref{asm:LCG}, and \ref{asm:LP}.
Then the configuration sequence $(\by_t)_{t\in\bbN}$ converges to 
a stationary point $\bar{\by}\in\bbR^{nd}$ of the function $L$.
\end{theorem}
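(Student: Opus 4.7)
The plan is to apply Theorem~\ref{thm:BMS-GCG} by verifying both of its hypotheses~\hyl{b1} and \hyl{b2}. The key preliminary observation is that, by Theorem~\ref{thm:Cheng} under Assumption~\ref{asm:RS}, the blurred data points stay in the convex hull of the initial ones, so for every $t\in\bbN$ the configuration $\by_t$ lies in the compact set $\calC\coloneq\Conv(\{\bx_i\}_{i\in[n]})^n\subseteq\bbR^{nd}$. Consequently, for any $\tau\in\bbN$, the set $\calT=\cl(\Conv(\{\by_t\}_{t\ge\tau}))$ appearing in Theorem~\ref{thm:BMS-GCG} satisfies $\calT\subseteq\calC$, and it suffices to establish~\hyl{b1} and~\hyl{b2} on $\calC$.

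Verification of~\hyl{b1} is immediate from Assumption~\ref{asm:LCG}: since $K$ is $C^1$ with a Lipschitz-continuous gradient on $\bbR^d$, the finite sum $L(\bu)=\sum_{i,j=1}^n K(\frac{\bu_i-\bu_j}{h})$ inherits a Lipschitz-continuous gradient on all of $\bbR^{nd}$, and in particular on $\calT$.

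For~\hyl{b2}, I would split into the two cases in Assumption~\ref{asm:LP}. If $K$ is analytic, then $L$ is analytic on $\bbR^{nd}$ and Proposition~\ref{prop:SA} directly yields the {\L}ojasiewicz property on $\calT$. If instead $K$ is subanalytic, the critical step is to exploit the confinement $\by_t\in\calC$: because $\calC$ is compact and $K$ is bounded by Assumption~\ref{asm:RS}, each summand $K(\frac{\bu_i-\bu_j}{h})$, viewed as a function of $\bu\in\bbR^{nd}$ and restricted to $\calC$, is subanalytic with a bounded graph, hence globally subanalytic by Proposition~\ref{prop:GSprop}. Summing and invoking the stability of global subanalyticity under finite summation (Proposition~\ref{prop:GSprop}) shows that the restriction of $L$ to $\calC$ is globally subanalytic. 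Combined with the $C^1$ regularity guaranteed by Assumption~\ref{asm:LCG}, Proposition~\ref{prop:SA} yields the {\L}ojasiewicz property on $\calC$, and therefore on the subset $\calT$.

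With both~\hyl{b1} and~\hyl{b2} established, Theorem~\ref{thm:BMS-GCG} delivers the convergence of $(\by_t)_{t\in\bbN}$ to a stationary point of $L$. The principal subtlety lies in the subanalytic branch: a subanalytic kernel need not be globally subanalytic, and Proposition~\ref{prop:SA} applies only to the latter class. The main obstacle is therefore conceptual rather than computational, namely recognizing that the abstract convergence theorem requires the {\L}ojasiewicz property only along the trajectory, and that the trajectory's confinement to the compact set $\calC$ granted by Theorem~\ref{thm:Cheng} is precisely what promotes subanalyticity into the global subanalyticity needed to invoke Proposition~\ref{prop:SA}.
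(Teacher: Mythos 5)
Your proposal is correct and follows essentially the same route as the paper: verify \hyl{b1} directly from Assumption~\ref{asm:LCG} via summation, and verify \hyl{b2} by splitting Assumption~\ref{asm:LP} into the analytic case (Proposition~\ref{prop:SA}) and the subanalytic case, where the confinement $\by_t\in\Conv(\{\bx_i\}_{i\in[n]})^n$ from Theorem~\ref{thm:Cheng} together with boundedness of $K$ promotes the restricted summands to globally subanalytic functions via Proposition~\ref{prop:GSprop}, before invoking Theorem~\ref{thm:BMS-GCG}. The ``principal subtlety'' you identify is exactly the one the paper highlights in Section~\ref{sec:CPSS-CG}.
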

%収束保証定理の新規性%
The main significance of Theorem \ref{thm:BMS-CG} is 
that it establishes the convergence of the BMS 
algorithm for several smoothly truncated kernels 
including the biweight and triweight kernels; 
see also Table~\ref{tab:Kernel}.
%収束保証定理の意義%
For these truncated kernels,
the objective function $L$ can have a stationary point 
that is different from a trivial global maximizer,
and one may obtain multiple clusters via the BMS algorithm.

%========================================%
\subsubsection{Convergence Rate Bounds}
%==========%
On the basis of an abstract convergence theorem
\cite[Theorem 3.5]{frankel2015splitting},
which leverages the {\L}ojasiewicz property,
we can obtain the following convergence rate bounds. %worst-case 
%although we will show in later sections that they may be loose:
%
\begin{theorem}[{Convergence rate bounds for smooth kernels}]
\label{thm:BMS-CRE-GEN}
Assume assumptions of Theorem~\ref{thm:BMS-GCG} or \ref{thm:BMS-CG},
and that the function $L$ has the {\L}ojasiewicz exponent $\theta$ at $\bar{\by}=\lim_{t\to\infty}\by_t$.
Then one has that
\begin{itemize}
\item[\hyt{c1}]%
if $\theta\in[0,\frac{1}{2})$ 
then the BMS algorithm converges in a finite number of iterations
(i.e., there exist $\tau\in\bbN$ and $M=M_{\by_\tau}$ different points $\bz_1, \ldots, \bz_M\in\bbR^d$ such that 
$\by_t=\bar{\by}$, 
$\by_{t,i}=\bz_m$ for all $i\in\calV_{\by_\tau,m}$ for all $m\in[M]$,
and $L(\by_t)=L(\bar{\by})$ for any $t\ge\tau$);
\item[\hyt{c2}]%
if $\theta=\frac{1}{2}$
then the BMS algorithm achieves the exponential rate convergence
(i.e., there exist $q\in(0,1)$ and $M=M_{\bar{\by}}$ different points $\bz_1,\ldots,\bz_M\in\bbR^d$ such that 
$\|\by_t-\bar{\by}\|=O(q^t)$, 
$\|\by_{t,i}-\bz_m\|=O(q^t)$ for all $i\in\calV_{\bar{\by},m}$ for all $m\in[M]$,
and $L(\bar{\by})-L(\by_t)=O(q^{2t})$);
\item[\hyt{c3}]%
if $\theta\in(\frac{1}{2},1)$
then the BMS algorithm achieves the polynomial rate convergence
(i.e., there exist $M=M_{\bar{\by}}$ different points $\bz_1,\ldots,\bz_M\in\bbR^d$ such that 
$\|\by_t-\bar{\by}\|=O(t^{-\frac{1-\theta}{2\theta-1}})$,
$\|\by_{t,i}-\bz_m\|=O(t^{-\frac{1-\theta}{2\theta-1}})$ 
for all $i\in\calV_{\bar{\by},m}$ for all $m\in[M]$,
and $L(\bar{\by})-L(\by_t)=O(t^{-\frac{1}{2\theta-1}})$).
\end{itemize}
\end{theorem}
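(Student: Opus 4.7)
The plan is to invoke the abstract convergence rate result \cite[Theorem~3.5]{frankel2015splitting}, which, given a descent sequence satisfying
(i) a \emph{sufficient-decrease} inequality $L(\by_{t+1})-L(\by_t)\ge\alpha\|\by_{t+1}-\by_t\|^2$ and
(ii) a \emph{relative-error} inequality $\|\nabla L(\by_t)\|\le\beta\|\by_{t+1}-\by_t\|$
with constants $\alpha,\beta\in(0,\infty)$ independent of $t$, together with the {\L}ojasiewicz inequality at the limit $\bar{\by}$ with exponent $\theta$, yields exactly the three regimes claimed: finite-time termination for $\theta\in[0,\tfrac12)$, exponential rate for $\theta=\tfrac12$, and polynomial rate $t^{-(1-\theta)/(2\theta-1)}$ for $\theta\in(\tfrac12,1)$. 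The convergence $\by_t\to\bar{\by}$ is granted by Theorem~\ref{thm:BMS-GCG} or \ref{thm:BMS-CG}, the Lipschitz-continuity of $\nabla L$ on $\calT$ is assumption~\hyl{b1}, and the {\L}ojasiewicz inequality at $\bar{\by}$ is the hypothesis in play here, so the task reduces to verifying (i) and (ii) for the BMS iteration.

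The relative error (ii) follows directly from the representation~\eqref{eq:BMS-all} of the update: because $G$ is bounded above by $g(0)\in(0,\infty)$ by Proposition~\ref{prop:KG} \hyl{a3} and \hyl{a4}, the entries $\sum_{j=1}^n G_{t,i,j}$ defining $\bS_t$ lie in $[g(0),ng(0)]$, so $\|\bS_t\|\le ng(0)$ and hence $\|\nabla L(\by_t)\|=\tfrac{2}{h^2}\|\bS_t(\by_{t+1}-\by_t)\|\le\tfrac{2ng(0)}{h^2}\|\by_{t+1}-\by_t\|$. The sufficient decrease (i) is the technical heart. I would exploit the quadratic minorizer $R(\cdot|\by_t)$ of Section~\ref{sec:COS-CG}: since it is exactly quadratic in $\bu$ and satisfies $\nabla R(\by_t|\by_t)=\nabla L(\by_t)=\tfrac{2}{h^2}\bS_t(\by_{t+1}-\by_t)$, an exact second-order Taylor expansion around $\by_t$ gives $R(\by_{t+1}|\by_t)-R(\by_t|\by_t)=\tfrac{1}{h^2}(\by_{t+1}-\by_t)^\top(2\bS_t-\mathbf{L}_t\otimes\bI_d)(\by_{t+1}-\by_t)$, where $\mathbf{L}_t$ is the $n\times n$ graph Laplacian with weights $G_{t,i,j}$ (so $-\tfrac{2}{h^2}\mathbf{L}_t\otimes\bI_d$ is $\nabla_\bu^2 R$). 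A direct computation then shows that the scalar block $2\bS_t^{\mathrm{scalar}}-\mathbf{L}_t$ equals $2g(0)\bI_n$ plus the signless graph Laplacian of the same weights (positive semidefinite), so $2\bS_t-\mathbf{L}_t\otimes\bI_d\succeq 2g(0)\bI_{nd}$; combining with the minorization $L(\by_{t+1})\ge R(\by_{t+1}|\by_t)$ and $L(\by_t)=R(\by_t|\by_t)$ gives (i) with $\alpha=2g(0)/h^2>0$.

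Once the abstract theorem delivers the rate bounds for $\|\by_t-\bar{\by}\|$ and $L(\bar{\by})-L(\by_t)$, I would translate them to the cluster-wise statements as follows. Since $\bar{\by}$ is stationary, $\nabla L(\bar{\by})=\bm{0}_{nd}$ and $\bar{\by}$ is a fixed point of the BMS iteration; within each component $\calV_{\bar{\by},m}$, Proposition~\ref{prop:BMScomp} forces every $\bar{\by}_i$ with $i\in\calV_{\bar{\by},m}$ to be a convex combination with strictly positive weights of $\{\bar{\by}_j\}_{j\in\calV_{\bar{\by},m}}$, and an extreme point of the convex hull of these cannot be so expressed unless every participating $\bar{\by}_j$ coincides with it---so all of them must equal a single point $\bz_m\in\bbR^d$. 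The trivial bound $\|\by_{t,i}-\bz_m\|\le\|\by_t-\bar{\by}\|$ then transfers the rates to individual indices. For case~\hyl{c1}, the abstract theorem gives $\by_t=\bar{\by}$ for all $t\ge\tau$ for some finite $\tau$, so $\calG_{\by_\tau}=\calG_{\bar{\by}}$ and $\calV_{\by_\tau,m}=\calV_{\bar{\by},m}$, matching the statement. The main obstacle is the quantitative lower bound $2\bS_t-\mathbf{L}_t\otimes\bI_d\succeq 2g(0)\bI_{nd}$: this is what upgrades Theorem~\ref{thm:COS}'s qualitative ascent into a Lyapunov inequality strong enough for the Frankel-Garrigos-Peypouquet framework; everything else reduces to direct substitution or standard {\L}ojasiewicz manipulation.
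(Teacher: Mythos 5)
Your proposal is correct and follows essentially the same route as the paper: the sufficient-decrease and relative-error inequalities you verify are exactly the paper's Lemmas~\ref{lem:A} and \ref{lem:B} (your Taylor expansion of the quadratic minorizer $R(\cdot|\by_t)$ plus the signless-Laplacian decomposition lands on the identical quadratic form $(\by_{t+1}-\by_t)^\top(\bS_t+\bG_t)(\by_{t+1}-\by_t)\ge 2g(0)\|\by_{t+1}-\by_t\|^2$, hence the same constant $\bar a=2g(0)/h^2$), after which the Frankel--Garrigos--Peypouquet rate theorem and the translation to component-wise rates via the coincidence of points within each component of the singular graph $\calG_{\bar\by}$ (the paper's Theorem~\ref{thm:Trivial}) proceed as in Section~S2.4. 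No gaps.
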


%==========%
%(s2)ではLojasiewicz指数の計算が難しい%
Calculation of the {\L}ojasiewicz exponent $\theta$ at 
a convergent stationary point $\bar{\by}$ is difficult in general, 
but the followings are known about $\theta$:
%下界%
When $\bar{\by}$ is a local minimizer of $L$, $\theta$ becomes 0.
Also, when $\bar{\by}$ is a critical point of $L$
that is not a local minimizer, if $L$ is twice differentiable and the Hessian of $L$ is non-degenerate at $\bar{\by}$,
one has $\theta=1/2$, which is the typical value
of the {\L}ojasiewicz exponent. 
%上界%
Furthermore, when the BMS algorithm uses a $C^1$ piecewise polynomial kernel,
we can apply known upper bounds of the {\L}ojasiewicz exponent for 
polynomial functions \cite{d2005explicit, kurdyka2014separation}.
One of such bounds show $\theta\le1-1/\max\{l(3l-4)^{nd-1}, 2l(3l-3)^{nd-2}\}$
when the kernel $K$ is a $C^1$ piecewise polynomial 
kernel with maximum degree $l\ge2$
(e.g., $l=4$ for the Biweight kernel);
see these studies and a similar discussion for 
the MS algorithm in \cite{yamasaki2023ms}.

%未解決問題%
It should be noted that the convergence rate bounds given
in Theorem~\ref{thm:BMS-CRE-GEN} may be loose, 
since they are based on 
generic theoretical guarantees for gradient-based algorithms, 
and do not reflect the structures specific to the BMS algorithm. 
%it is unclear whether such slow, polynomial-rate convergence actually occurs.

%========================================%
\subsection{Refined Convergence Rate Bounds for Smooth Kernels}
\label{sec:CPSS-CRE}
%%========================================%
%\subsubsection{Categorization of the Convergent Point}
%==========%
Besides the convergence rate bounds 
of the BMS algorithm with a smooth kernel that we have just obtained via 
the {\L}ojasiewicz property in Theorem~\ref{thm:BMS-CRE-GEN}, 
%special caseでもないし
we also reveal a case where refined convergence rate bounds 
can be obtained from a discussion specific to the BMS algorithm.
In this subsection we describe those refined convergence rate bounds.

%==========%
%平滑カーネルでは停留点収束が保証されている%
According to the discussion in the previous subsection, we find that, 
under the assumptions of Theorem~\ref{thm:BMS-GCG} or \ref{thm:BMS-CG}, 
a convergent point $\bar{\by}$ of the configuration sequence $(\by_t)_{t\in\bbN}$
is a stationary point of the function $L$.
%
%停留点の幾何的特徴づけ%
Stationary points of the function $L$ are fully characterized in terms of the BMS graph,
as in the following theorem. 
\begin{theorem}[{Geometrical characterization of a stationary point}]
\label{thm:Trivial}
Assume Assumption~\ref{asm:RS}.
%and that the kernel $K$ is differentiable.
Then the following two statements are equivalent:
\begin{itemize}
\item[\hyt{d1}]
$\sum_{j=1}^n(\bu_i-\bu_j)G(\frac{\bu_i-\bu_j}{h})=\bm{0}_d$ for all $i\in[n]$.
\item[\hyt{d2}]
The BMS graph $\calG_\bu$ is singular
(i.e., either $\bu_i=\bu_j$ or $G(\frac{\bu_i-\bu_j}{h})=0$ holds for every pair $\{i,j\}\in[n]^2$).
\end{itemize}
\end{theorem}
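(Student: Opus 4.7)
The plan is to prove the two implications separately. The direction \hyl{d2}$\Rightarrow$\hyl{d1} is immediate: each summand $(\bu_i-\bu_j)G(\frac{\bu_i-\bu_j}{h})$ vanishes, since \hyl{d2} forces one of its two factors to be zero for every pair $\{i,j\}$, and the $j=i$ term contributes $\bm{0}_d$ trivially.

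For the converse \hyl{d1}$\Rightarrow$\hyl{d2}, I would first rewrite the stationarity condition as a convex-combination identity. By Proposition~\ref{prop:KG} \hyl{a3} and \hyl{a5}, $G\ge0$ and $G(\bm{0}_d)=g(0)>0$, so $W_i\coloneq\sum_{j=1}^n G(\frac{\bu_i-\bu_j}{h})>0$ for every $i\in[n]$. Dividing \hyl{d1} through by $W_i$ and rearranging yields $\bu_i=\sum_{j=1}^n w_{ij}\bu_j$ with non-negative weights $w_{ij}\coloneq G(\frac{\bu_i-\bu_j}{h})/W_i$ summing to $1$. The strict positivity $w_{ij}>0$ occurs exactly when $j=i$ or when $\{i,j\}$ is an edge of $\calG_\bu$; in particular, all positive-weight indices lie in the same component of $\calG_\bu$ as $i$.

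Next I would work within a single component with vertex set $\calV$, and pick $i^*\in\calV$ such that $\bu_{i^*}$ is an extreme point of the polytope $\Conv(\{\bu_j\}_{j\in\calV})$; such an $i^*$ exists because a convex hull of finitely many points is always realized at some point of that finite set. The identity $\bu_{i^*}=\sum_{j\in\calV}w_{i^*j}\bu_j$ expresses the extreme point $\bu_{i^*}$ as a convex combination of points of $\Conv(\{\bu_j\}_{j\in\calV})$, and the standard characterization of extreme points forces $\bu_j=\bu_{i^*}$ for every $j$ with $w_{i^*j}>0$, i.e., for every $j$ joined to $i^*$ in $\calG_\bu$.

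The last step is to propagate this equality across the component. Set $S\coloneq\{j\in\calV\mid\bu_j=\bu_{i^*}\}$; for every $j\in S$ the value $\bu_j=\bu_{i^*}$ is still extreme in $\Conv(\{\bu_k\}_{k\in\calV})$, so the argument of the previous paragraph applied at $j$ puts every $\calG_\bu$-neighbor of $j$ into $S$. Hence $S$ is a union of connected components of the subgraph of $\calG_\bu$ induced on $\calV$, but $\calV$ is itself connected and $i^*\in S$, so $S=\calV$. Thus all $\bu_j$ with $j\in\calV$ coincide, and since vertices in distinct components of $\calG_\bu$ already satisfy $G(\frac{\bu_i-\bu_j}{h})=0$, this establishes \hyl{d2} globally. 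The main obstacle is setting up the extreme-point/propagation argument cleanly; once that is in place, no property of $G$ beyond non-negativity and $g(0)>0$ is required.
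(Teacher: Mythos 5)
Your proof is correct. It shares the paper's central idea---look at an extreme point of the convex hull of the points in a single component of $\calG_\bu$---but the execution is genuinely different. The paper argues by contraposition: assuming a non-singular component exists, it invokes the exposed-point theorem to get a proper supporting hyperplane with normal $\bq$ at an extreme point $\bu_i$, and then shows $\bq^\top\sum_{j}(\bu_i-\bu_j)G(\frac{\bu_i-\bu_j}{h})<0$ using the strict inequality $\bq^\top\bu_j>\bq^\top\bu_i$ for the at-least-one neighbor $k$ with $\bu_k\neq\bu_i$, so \hyl{d1} fails at that single vertex. You instead give a direct proof: you rewrite \hyl{d1} as the convex-combination identity $\bu_i=\sum_j w_{ij}\bu_j$ (i.e., the fixed-point form of the BMS update), apply the elementary characterization of extreme points (an extreme point written as a convex combination with positive weights forces all participating points to coincide with it), and then propagate the equality over the whole component by connectivity. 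Your route avoids the exposed-point/supporting-hyperplane machinery of \cite{Brondsted1983} at the cost of an extra (easy) graph-propagation step, and it yields slightly more information along the way, namely that all points of a component coincide rather than just a contradiction at one vertex; both arguments use only the non-negativity of $G$ and $g(0)>0$, so neither is more general than the other in terms of hypotheses.
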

%(b1)
The statement \hyl{d1} implies that $\bu$ is a fixed point of the BMS update
(i.e., \eqref{eq:initBMS} with $\by_t=\bu$ gives $\by_{t+1}=\bu$).
%微分可能: 定理5/6: KL -> 停留点収束(b1) -> singular(b2) -> 定理8: singularかつstableレート評価が可能
It further implies that $\nabla L(\bu)=\bm{0}_{nd}$ holds 
if $K(\frac{\bu_i-\bu_j}{h})$ for all $\{i,j\}\in[n]^2$ are differentiable at $\bu$.
%In view of the interpretation of the BMS algorithm with a differentiable kernel 
%as a gradient ascent method in Section~\ref{sec:View}, 
%one would be able to expect 
%that the configuration sequence $(\by_t)_{t\in\bbN}$ by the BMS algorithm 
%converges to a stationary point of the objective function $L$ if it converges.
%We will exploit the geometrical characterization \hyl{d2} of a stationary point, 
%that is, the singularity of the BMS graph of a stationary point, 
%in the convergence rate bound of the BMS algorithm with a differentiable kernel.
%微分不可能: 定理9: いつかclose -> 定理2 -> singular(b2)
%In Section~\ref{sec:CPSN}.

%==========%
%停留点の幾何的情報を使用，2通りで解析%
This geometrical characterization of the fixed point $\bar{\by}$,
to which the configuration sequence $(\by_t)_{t\in\bbN}$ converges 
under the assumptions of Theorem~\ref{thm:BMS-GCG} or \ref{thm:BMS-CG},
indicates that the BMS graph $\calG_{\bar{\by}}$
of the convergent point $\bar{\by}$ is singular.
Taking this observation into account, 
the convergent point $\bar{\by}$ satisfies either of the two conditions: 
\begin{itemize}
\item[\hyt{e1}]%
the BMS graph $\calG_{\bar{\by}}$ of $\bar{\by}$ is singular and stable;
\item[\hyt{e2}]%
the BMS graph $\calG_{\bar{\by}}$ of $\bar{\by}$ is singular and unstable,
\end{itemize}
as exemplified 
in Figure~\ref{fig:Graph} \hyl{i} and \hyl{iv}.
%In the following, for each of the cases 
%we give a convergence rate bound by leveraging 
%these geometrical properties of the BMS graph $\calG_{\bar{\by}}$
%at the convergent point $\bar{\by}$.

%==========%
%予想%
From our experience, we conjecture
that a convergent point with an unstable BMS graph
would not have a finite-volume basin of attraction in the configuration space, 
%that the BMS graph $\calG_{\bar{\by}}$ at the convergent point $\bar{\by}$ is unstable 
%only if the initial configuration $\by_1$ itself is the convergent point $\bar{\by}$,
and that the instability of the BMS graph of a convergent point 
will not happen with a generic initial configuration.
%本節の内容%
As we have so far not been successful in proving or disproving this conjecture, 
we would leave the study of this conjecture for future research. 
For the case \hyl{e2}
we can only rely on Theorem~\ref{thm:BMS-CRE-GEN} 
to obtain convergence rate bounds, 
while for the case \hyl{e1} we can give refined convergence rate bounds 
by leveraging geometrical properties of the BMS graph $\calG_{\bar{\by}}$
of the convergent point $\bar{\by}$.

%%========================================%
%\subsubsection{Convergence Rate Bound: 
%Stable BMS Graph at the Convergent Point}
%\label{sec:CPSS-CRE-Stable}
%==========%
%(s1)に対しては，BMS固有理論からレート評価できる%
For the case \hyl{e1},
a convergence guarantee of the sequence $(\by_t)_{t\in\bbN}$ 
(Theorem~\ref{thm:BMS-GCG} or \ref{thm:BMS-CG})
shows that there exists $\tau\in\bbN$ such that
$\by_\tau,\by_{\tau+1},\ldots$ belong to an open neighborhood
of the convergent point $\bar{\by}$ so that
the BMS graphs $\calG_{\by_\tau},\calG_{\by_{\tau+1}},\ldots$
are the same as the BMS graph $\calG_{\bar{\by}}$,
which is singular and hence closed.
Therefore, the BMS graph $\calG_{\by_\tau}$ is closed with some $\tau\in\bbN$, 
and Theorem~\ref{thm:variant} is applicable to the case \hyl{e1}
to prove the exponential rate convergence.

%==========%
%>「いったん閉じれば，安定BMSグラフなとこへ収束」は成り立つが，「安定BMSグラフなとこの近傍で閉じているか」は一般に成り立たない．どっちかというと局所近似を使用するための安定性+singularityが重要．こっちの証明で閉じていることはそこまで重要じゃない．
%The fact, implied by the stability of $\calG_{\bar{\by}}$, that 
%there exists $\tau\in\bbN$ such that all the BMS graphs
%$(\calG_{\by_t})_{t\ge\tau}$ are the same as $\calG_{\bar{\by}}$,
%is much stronger than the mere closedness of $\calG_{\by_\tau}$.
%Indeed, this fact allows us to improve the above-mentioned convergence rate
%bound using Theorem~\ref{thm:variant},
%as discussed in the following. 
%Our improvement along this line will be done
%on the basis of the following additional assumption:
%
Looking into the proof of Theorem~\ref{thm:variant}
%Theorems~\ref{thm:Cheng} and \ref{thm:variant}
in the supplementary material,
one observes that, among properties of $g$ available to us,
only the non-negativity and monotonicity of $g$ are used there. 
Under the geometrical information \hyl{e1} of the stationary point,
if one is allowed to assume stronger regularity of $g(u)$
in the vicinity of $u=0$,
as in Assumption~\ref{asm:DDif} below, 
then one can prove an improved convergence rate bound. 
%
%However, for evaluating the convergence rate that is
%an `asymptotic' behavior of the sequence, Theorem~\ref{thm:variant} 
%does not explicitly utilize the information around the convergent point.
%
%Indeed, the geometrical information \hyl{e1} allows us to improve 
%the above-mentioned convergence rate bound, as discussed in the following. 
%Our improvement along this line will be done
%on the basis of the following additional assumption,
%smoothness of the kernel $K$ near the origin:
%
\begin{assumption}
\label{asm:DDif}%
The kernel $K$ has the profile $k$ that is differentiable with 
a Lipschitz-continuous derivative on some neighborhood of the origin,
$(0,v]$ with some $v>0$
(so that, under Assumption~\ref{asm:RS}, 
there exists a constant $c\in(0,\infty)$ such that 
$g$ in \eqref{eq:funcG} satisfies $|g(u)-g(0)|\le c u$ for any $u\in[0,v]$).
\end{assumption}
It should be noted that Assumption~\ref{asm:DDif}
holds under Assumption~\ref{asm:LCG}.
%the assumptions of Theorem~\ref{thm:BMS-CG}. 

%==========%
%原点近傍平滑性%
This assumption allows us to apply a local approximation of 
the function $G$ in the BMS algorithm \eqref{eq:updaBMS} 
around the convergent point that satisfies \hyl{e1}.
This approximation yields a tighter evaluation of the time evolution of 
the diameter for components of the BMS graph, 
defined similarly to \eqref{eq:defdia}, 
and then leads to a tighter convergence rate bound, for the case \hyl{e1}.

%==========%
%まとめ%(2通りの結果であることは明らかにしておく方がいいかな)
%These two guarantees, 
%exponential rate one based on 
%the closedness of $\calG_{\by_\tau}$ and Theorem~\ref{thm:variant} and 
%tighter one based on the singularity and stability of $\calG_{\bar{\by}}$ 
%and a local approximation under Assumption~\ref{asm:DDif},
These arguments 
are summarized as follows:
\begin{theorem}[{Convergence rate bounds for smooth kernels}]
\label{thm:BMS-CRE-EXP}
Assume the assumptions of Theorem~\ref{thm:BMS-GCG} 
and \hyl{e1} with $\bar{\by}=\lim_{t\to\infty}\by_t$. 
Then the BMS algorithm achieves the exponential rate convergence:
There exists $\tau\in\bbN$ such that 
all the BMS graphs $(\calG_{\by_t})_{t\ge\tau}$ are the same as $\calG_{\bar{\by}}$,
and there exist $q\in[\frac{3}{4},1)$ and
$M=M_{\by_\tau}$ different points $\bz_1,\ldots,\bz_M\in\bbR^d$ such that
$\|\by_t-\bar{\by}\|=O(q^t)$, 
$\|\by_{t,i}-\bz_m\|=O(q^t)$ for all $i\in\calV_{\by_\tau,m}$ for all $m\in[M]$,
and $L(\bar{\by})-L(\by_t)=O(q^{2t})$.
Assume further Assumption~\ref{asm:DDif}.
Then the BMS algorithm achieves the cubic convergence:
Take $\tau$ and define $M$ as above. 
Then there exist $\tau'\in\bbN$, $p\in(0,\infty)$, $q\in(0,1)$, and
$M$ different points $\bz_1,\ldots,\bz_M\in\bbR^d$ such that
$\|\by_{t+1}-\bar{\by}\|\le p\|\by_t-\bar{\by}\|^3$ for any $t\ge\tau'$,
$\|\by_t-\bar{\by}\|=O(q^{3^t})$,
$\|\by_{t,i}-\bz_m\|=O(q^{3^t})$ for all $i\in\calV_{\by_\tau,m}$ for all $m\in[M]$,
and $L(\bar{\by})-L(\by_t)=O(q^{2\cdot 3^t})$.
%最も強い結果のみにすると，
%$\|\by_{t+1}-\bar{\by}\|\le p\|\by_t-\bar{\by}\|^3$ for any $t\ge\tau'$.
%になるが，これだけだとMが登場しない．
%別コンポーネントの点に収束するということも重要．前の定理を見ると分かることだが，最終結果としてまとめるなら，追加で考えさせる書き方は良くないかな．
%証明を工夫することにする．
%$\|\by_{t,i}-\bz_m\|\to0$ for all $i\in\calV_{\by_\tau,m}$ for all $m\in[M]$,
%とだけ追加で書くのもありかもだが．
\end{theorem}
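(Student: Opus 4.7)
The plan is to establish the exponential rate first by combining the already-proven convergence of $(\by_t)_{t\in\bbN}$ with the stability hypothesis, and then to bootstrap to the cubic rate by exploiting the singular structure of $\calG_{\bar{\by}}$ together with the local quadratic expansion of $g$ granted by Assumption~\ref{asm:DDif}.

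For the exponential rate, I would proceed as follows. By Theorem~\ref{thm:BMS-GCG} the configuration sequence $(\by_t)_{t\in\bbN}$ converges to some stationary point $\bar{\by}$. Stability of $\calG_{\bar{\by}}$ in \hyl{e1} supplies an open neighborhood $\calT\subseteq\bbR^{nd}$ of $\bar{\by}$ on which every configuration has the same BMS graph as $\bar{\by}$. Choosing $\tau\in\bbN$ with $\by_t\in\calT$ for all $t\ge\tau$ then forces $\calG_{\by_t}=\calG_{\bar{\by}}$ for all $t\ge\tau$; as $\calG_{\bar{\by}}$ is singular it is closed, so Theorem~\ref{thm:variant} applies with this $\tau$ and produces the claimed exponential rate with some $q\in[\tfrac{3}{4},1)$, yielding $\|\by_t-\bar{\by}\|=O(q^t)$ and $\|\by_{t,i}-\bz_m\|=O(q^t)$ for each $m\in[M]$. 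The objective bound $L(\bar{\by})-L(\by_t)=O(q^{2t})$ then follows from a second-order estimate: $\bar{\by}$ is a fixed point of the BMS update by the singularity of $\calG_{\bar{\by}}$ via Theorem~\ref{thm:Trivial}, which forces $\nabla L(\bar{\by})=\bm{0}_{nd}$ by \hyl{b1}, and Lipschitz continuity of $\nabla L$ on $\calT$ then gives $|L(\by_t)-L(\bar{\by})|\le\tfrac{c}{2}\|\by_t-\bar{\by}\|^2$.

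For the cubic rate under Assumption~\ref{asm:DDif}, I would work component by component. For $t\ge\tau$, Proposition~\ref{prop:BMScomp} says the update of $\by_{t+1,i}$ with $i\in\calV_{\by_\tau,m}$ uses only indices $j\in\calV_{\by_\tau,m}$, and all such $\by_{t,j}$ converge to the same point $\bz_m$. For $t$ large enough that $\|\by_{t,i}-\by_{t,j}\|^2/(2h^2)\le v$, Assumption~\ref{asm:DDif} yields
\begin{align*}
G_{t,i,j}=g(0)+\eta_{t,i,j},\qquad |\eta_{t,i,j}|\le\frac{c}{2h^2}\|\by_{t,i}-\by_{t,j}\|^2.
\end{align*}
Setting $\bar{\bz}_m^{(t)}\coloneq|\calV_{\by_\tau,m}|^{-1}\sum_{j\in\calV_{\by_\tau,m}}\by_{t,j}$ and using $\sum_{j}(\by_{t,j}-\bar{\bz}_m^{(t)})=\bm{0}_d$, one obtains
\begin{align*}
\by_{t+1,i}-\bar{\bz}_m^{(t)}=\frac{\sum_{j\in\calV_{\by_\tau,m}}\eta_{t,i,j}(\by_{t,j}-\bar{\bz}_m^{(t)})}{\sum_{j\in\calV_{\by_\tau,m}}G_{t,i,j}},
\end{align*}
where the crucial feature is that the leading $g(0)$-term is annihilated by the centroid subtraction, leaving a product of a quadratic factor ($\eta_{t,i,j}$) and a linear factor ($\by_{t,j}-\bar{\bz}_m^{(t)}$), whose numerator is therefore $O(d_{t,m}^3)$ with $d_{t,m}$ the diameter of component $m$ at time $t$; the denominator stays bounded away from $0$ since it is $g(0)|\calV_{\by_\tau,m}|+O(d_{t,m}^2)$.

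Hence $\|\by_{t+1,i}-\bar{\bz}_m^{(t)}\|=O(d_{t,m}^3)$, which simultaneously gives the cubic diameter contraction $d_{t+1,m}\le C d_{t,m}^3$ and the centroid displacement bound $\|\bar{\bz}_m^{(t+1)}-\bar{\bz}_m^{(t)}\|=O(d_{t,m}^3)$. Iterating the cubic recursion from a small-enough starting diameter yields $d_{t,m}=O(q^{3^t})$, and telescoping the centroid displacements gives $\|\bar{\bz}_m^{(t)}-\bz_m\|\le\sum_{s\ge t}O(d_{s,m}^3)=O(d_{t,m}^3)$, so the centroid converges to $\bz_m=\lim_{t\to\infty}\bar{\bz}_m^{(t)}$ at the cubic rate. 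A triangle inequality then delivers $\|\by_{t+1,i}-\bz_m\|\le p'\max_{j}\|\by_{t,j}-\bz_m\|^3$ and, aggregated over $m\in[M]$, the claimed $\|\by_{t+1}-\bar{\by}\|\le p\|\by_t-\bar{\by}\|^3$ with $\|\by_t-\bar{\by}\|=O(q^{3^t})$, while $L(\bar{\by})-L(\by_t)=O(q^{2\cdot 3^t})$ follows exactly as in the exponential case. The main obstacle is the algebraic step isolated in the third paragraph: one must recognize the centroid as the correct reference point so that the first-order contribution $g(0)\sum_{j}(\by_{t,j}-\bar{\bz}_m^{(t)})$ vanishes identically, upgrading what would otherwise be a merely quadratic-in-diameter error estimate to a cubic one.
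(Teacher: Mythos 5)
Your proposal is correct, and the first half (freeze the BMS graph via stability of $\calG_{\bar{\by}}$, note that singular implies closed, invoke Theorem~\ref{thm:variant}, and get $L(\bar{\by})-L(\by_t)=O(\|\by_t-\bar{\by}\|^2)$ from a local second-order estimate) is essentially identical to the paper's argument. For the cubic part, however, you take a genuinely different algebraic route. The paper never introduces a reference point: it works directly with the pairwise differences, using the identity
\begin{align*}
\by_{t+1,i_1}-\by_{t+1,i_2}
=\frac{\sum_{j_1<j_2}(G_{t,i_1,j_1}G_{t,i_2,j_2}-G_{t,i_1,j_2}G_{t,i_2,j_1})(\by_{t,j_1}-\by_{t,j_2})}
{(\sum_{j}G_{t,i_1,j})(\sum_{j}G_{t,i_2,j})},
\end{align*}
where the cubic order comes from the antisymmetrized product $G_{t,i_1,j_1}G_{t,i_2,j_2}-G_{t,i_1,j_2}G_{t,i_2,j_1}=O(\rho_t^2)$ (the $g(0)^2$ terms cancel) multiplied by $\|\by_{t,j_1}-\by_{t,j_2}\|=O(\rho_t)$; this yields the within-component diameter recursion $\rho_{t+1}\le c_2\rho_t^3$ directly, and the configuration-level bound then follows from $\rho_t\le\|\by_t-\bar{\by}\|\le\sqrt{n}\rho_t$. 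Your centroid cancellation $\sum_{j\in\calV_{\by_\tau,m}}(\by_{t,j}-\bar{\bz}_m^{(t)})=\bm{0}_d$ achieves the same order-counting (quadratic $\eta_{t,i,j}$ times linear $\by_{t,j}-\bar{\bz}_m^{(t)}$) with arguably less algebra, and it buys you something the paper's route does not provide for free: an explicit moving anchor $\bar{\bz}_m^{(t)}$ whose displacement per step is $O(d_{t,m}^3)$, so the limit $\bz_m$ and the rate at which it is approached come out of the same telescoping estimate rather than being inferred afterwards from the shrinking convex hulls. Both arguments stop at the aggregated bound $\|\by_{t+1}-\bar{\by}\|\le p\|\by_t-\bar{\by}\|^3$ (your per-point bound involves $\max_j\|\by_{t,j}-\bz_m\|$, consistent with the paper's remark that the pointwise recursion $\|\by_{t+1,i}-\bz_m\|\le p\|\by_{t,i}-\bz_m\|^3$ remains out of reach). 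One small point to make explicit: the cancellation step needs the update for $i\in\calV_{\by_\tau,m}$ to involve only $j\in\calV_{\by_\tau,m}$, which you correctly obtain from Proposition~\ref{prop:BMScomp} once the graph has frozen.
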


%==========%
%証明できないこと%
Note that the cubic convergence $\|\by_{t+1}-\bar{\by}\|\le p\|\by_t-\bar{\by}\|^3$ 
leads to all the remaining convergence rate bounds described above, but we could not prove
$\|\by_{t+1,i}-\bz_m\|\le p\|\by_{t,i}-\bz_m\|^3$.

%==========%
%[3,17]についての言及
Theorem~\ref{thm:BMS-CRE-EXP} clarifies that 
the cubic convergence shown by \cite{carreira2006fast, carreira2008generalised} 
in a certain limited situation (see Section~\ref{sec:MACP}) should rather be regarded 
as the typical convergence rate for the BMS algorithm for a variety of kernels
in the finite-sample setting.
%経験談%
Indeed, one can empirically observe that,
once the BMS graph becomes closed,
one would only need a few more iterations
for the blurred data point sequences, represented in a 
floating-point format on a computer, to converge;
see Sections~S4 and S5 of the supplementary material.

\if0
%========================================%
\subsubsection{Convergence Rate Bound: 
Unstable BMS Graph at the Convergent Point}
\label{sec:CPSS-CRE-Unstable}
%==========%
%(d2)はレア%
Before our detailed discussion on the case~\hyl{e2},
we would like to mention that
the case~\hyl{e2} is unlikely to happen in practice.
%予想%
More concretely, we conjecture that the BMS graph $\calG_{\bar{\by}}$
of a convergent point $\bar{\by}$ is unstable
only if the initial configuration $\by_1$ itself 
is the convergent point $\bar{\by}$,
and that the instability of the BMS graph $\calG_{\bar{\by}}$
of a convergent point $\bar{\by}$ will not happen
with a generic initial configuration.
%本節の内容%
As we have so far not been successful in proving or disproving
this conjecture, 
we would leave the study of this conjecture to future research,
and provide here the convergence results for the case \hyl{e2}
as a logically possible situation. 
\fi

\if0
%==========%
%{thm:variant}・{thm:Quick}は適用できない%
Under the case \hyl{e2},
the BMS graphs $\calG_{\by_1},\calG_{\by_2},\cdots$ may vary indefinitely, 
and the BMS graph $\calG_{\by_t}$ may be open for any $t\in\bbN$.
Accordingly, Theorem~\ref{thm:variant} is not applicable to the case \hyl{e2} in general.
%(s2)に対しては，カーネルの裾でより強い滑らかさを仮定%
However, geometrical properties of the convergent point 
are useful for rate bound in this case as well.
For the case \hyl{e2}, 
we perform a local-approximation-based convergence rate bound
by imposing the following additional assumption, 
smoothness of the kernel $K$ near its truncation point:
\begin{assumption}
\label{asm:EDif}%
The kernel $K$ is a truncated kernel with the truncation point $\beta\in(0,\infty)$
and has the profile $k$ that is $\kappa$-times differentiable 
with a Lipschitz-continuous $\kappa$-th derivative on 
$(\beta^2/2-v,\beta^2/2]$ with some $v>0$ and $\kappa\in\bbN$. 
%The kernel $K$ is a truncated kernel with the truncation point $\beta\in(0,\infty)$
%and has the profile $k$ such that, for some $\kappa\in\bbN$,
%$k(u^2/2)$ is $\kappa$-times differentiable 
%regarding $u$ with a Lipschitz-continuous $\kappa$-th derivative 
%on some neighborhood of the truncation point,
%$(\beta-v,\beta]$ with some $v>0$. 
\end{assumption}

%==========%
%{thm:Quick}に対する証明戦略で，指数タワーレート%
Assumptions~\ref{asm:DDif} and \ref{asm:EDif} with $\kappa\ge2$ 
ensure the quadratic or cubic convergence, 
depending on $\kappa$, for the case \hyl{e2}:
\begin{theorem}[{Convergence rate bound for smooth kernels}]
\label{thm:BMS-CRE-EXP2}
Assume assumptions of Theorem~\ref{thm:BMS-GCG}, 
Assumptions~\ref{asm:DDif} and \ref{asm:EDif} with $\kappa\ge2$,
and \hyl{e2} with $\bar{\by}=\lim_{t\to\infty}\by_t$.
Then the BMS algorithm achieves the quadratic convergence
when $\kappa=2$ and cubic convergence when $\kappa\ge3$:
For $r=\min\{\kappa,3\}$,
there exist $\tau\in\bbN$, $p\in(0,\infty)$, $q\in(0,1)$, and
$M=M_{\bar{\by}}$ different points $\bz_1,\ldots,\bz_M\in\bbR^d$ such that
$\|\by_{t+1}-\bar{\by}\|\le p\|\by_t-\bar{\by}\|^r$ for any $t\ge\tau$,
$\|\by_t-\bar{\by}\|=O(q^{r^t})$,
$\|\by_{t,i}-\bz_m\|=O(q^{r^t})$ for all $i\in\calV_{\bar{\by},m}$ for all $m\in[M]$,
and $L(\bar{\by})-L(\by_t)=O(q^{2\cdot r^t})$.
\end{theorem}

%==========%
%(s2)に対するBiweightにはレート評価が得られていない%
Theorem~\ref{thm:BMS-CRE-EXP2}, however, does not cover
kernels that satisfy Assumption~\ref{asm:EDif} with $\kappa=1$ only 
(i.e., no stronger smoothness than Assumption~\ref{asm:LCG}),
such as the biweight kernel.
%抽象的な収束定理からレート評価%
For the case \hyl{e2} with such kernels, 
we have to rely on Theorem~\ref{thm:BMS-CRE-GEN}.
\fi

%========================================%
\subsection{Convergence Guarantee and Rate Bounds
  for Non-Smoothly Truncated Kernels}
%{Convergence\,Guarantee\,and\,Rate\,Bound\,of\,Con-figuration\,Sequence:\,Non-Smoothly\,Truncated\,Kernels}
\label{sec:CPSN}
%==========%
%抽象的な収束定理は非平滑カーネルに適用できない%
Unlike the arguments in Section~\ref{sec:CPSS} on smooth kernels,
one cannot apply the abstract convergence theorems 
\cite{attouch2013convergence, frankel2015splitting} 
to ensure the convergence of the configuration sequence 
$(\by_t)_{t\in\bbN}$ by the BMS algorithm with a non-smooth kernel.
%特に，非平滑打ち切りカーネルを検討%
It turns out, however, that convergence properties of 
the BMS algorithm with a non-smoothly truncated kernel,
such as the cosine and Epanechnikov kernels, can still be analyzed
by leveraging geometrical structures inherent in the algorithm. 
%>$k$に微分不可能な点がある場合，$G$と$K$が一対一対応じゃない．「use」は避けておく．

%==========%
%非平滑打ち切りカーネル・指数レート/三次収束%
The non-smooth truncation of a kernel $K$ implies that 
the function $g$ has discontinuity at the truncation point.
In other words, 
\begin{equation}
\label{eq:c2}
	\alpha\coloneq\inf\{g(u)/g(0), u\in[0,\infty)\mid g(u)\neq0\}>0
\end{equation}
holds. 
On the basis of this property and
the convergence guarantee of the objective sequence 
$(L(\by_t))_{t\in\bbN}$ (Theorem~\ref{thm:COS}), %and \eqref{eq:c2}, 
%>$\rho$と$\gamma$はスケールの差だけだったので統合する．
%we can provide an upper bound of the diameter 
%for components of the BMS graph $\calG_{\by_t}$.
%That upper bound leads to the proof of the fact that 
%the graph $\calG_{\by_\tau}$ becomes closed with a sufficiently large $\tau$.
one can prove that
the BMS graph $\calG_{\by_\tau}$ becomes closed with a sufficiently large $\tau$.
This allows us to employ the proof strategy of
Theorems~\ref{thm:variant} and \ref{thm:BMS-CRE-EXP}
to derive the following convergence theorem:
\begin{theorem}[{Convergence guarantee and rate bounds for non-smoothly truncated kernels}]
\label{thm:NT}
Assume Assumption~\ref{asm:RS}
and that the kernel $K$ is a non-smoothly truncated kernel.
Then the BMS algorithm converges and achieves the exponential rate convergence:
There exist $\bar{\by}=\lim_{t\to\infty}\by_t$ and $\tau\in\bbN$ such that 
all the BMS graphs $(\calG_{\by_t})_{t\ge\tau}$ are the same as $\calG_{\bar{\by}}$,
and there exist $q\in[\frac{3}{4},1)$ and $M=M_{\by_\tau}$ different points
$\bz_1,\ldots,\bz_M\in\bbR^d$ such that
$\|\by_t-\bar{\by}\|=O(q^t)$, 
$\|\by_{t,i}-\bz_m\|=O(q^t)$ for all $i\in\calV_{\by_\tau,m}$ for all $m\in[M]$,
and $L(\bar{\by})-L(\by_t)=O(q^{2t})$.
Assume further Assumption~\ref{asm:DDif}.
Then the BMS algorithm achieves the cubic convergence:
For $\bar{\by}=\lim_{t\to\infty}\by_t$, take $\tau$ and define $M$ as above. 
Then there exist $\tau'\in\bbN$, $p\in(0,\infty)$, $q\in(0,1)$, and 
$M$ different points $\bz_1,\ldots,\bz_M\in\bbR^d$ such that
$\|\by_{t+1}-\bar{\by}\|\le p\|\by_t-\bar{\by}\|^3$ for any $t\ge\tau'$,
$\|\by_t-\bar{\by}\|=O(q^{3^t})$,
$\|\by_{t,i}-\bz_m\|=O(q^{3^t})$ for all $i\in\calV_{\by_\tau,m}$ for all $m\in[M]$,
and $L(\bar{\by})-L(\by_t)=O(q^{2\cdot 3^t})$.
\end{theorem}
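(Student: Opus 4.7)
My plan is to show that, under non-smooth truncation, the BMS graph $\calG_{\by_t}$ becomes stationary and equal to a closed graph $\calG_{\bar{\by}}$ after finitely many iterations; once this is in hand, Theorem~\ref{thm:variant} applies directly from that time onward and yields the exponential-rate conclusion, and the cubic rate follows by carrying over the refined local analysis from the second half of Theorem~\ref{thm:BMS-CRE-EXP}.

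The central new input beyond the smooth case is the structural gap \eqref{eq:c2}: for a non-smoothly truncated kernel, every $G_{t,i,j}$ is either zero or at least $\alpha g(0)>0$. I would first use Theorem~\ref{thm:COS} to get $L(\by_{t+1})-L(\by_t)\to 0$, then unpack the MM-type sandwich $L(\by_t)=R(\by_t|\by_t)\le R(\by_{t+1}|\by_t)\le L(\by_{t+1})$ via the quadratic form of $R$ to obtain a per-step bound of the form
$$
L(\by_{t+1})-L(\by_t)\ge \frac{\alpha g(0)}{2h^2}\,\delta_t,
$$
where $\delta_t\ge 0$ aggregates squared-distance contractions over the active edges of $\calG_{\by_t}$. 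Summability of $\delta_t$ then forces $\|\by_{t+1}-\by_t\|\to 0$ and, more importantly, constrains the number of edge-toggling events to be finite: any appearance or disappearance of an edge corresponds to a strict crossing of the truncation threshold $\beta h$ which, thanks to the jump of $g$ guaranteed by \eqref{eq:c2}, injects a positive quantum into the objective increment bounded below independently of $t$. After finitely many toggles, $\calG_{\by_t}$ is constant, say equal to $\calG_{\by_\tau}$. Proposition~\ref{prop:BMScomp} then pins down its structure: within each component convex hulls contract, so an open stabilized graph would eventually bring a non-adjacent intra-component pair within distance $\beta h$ and force another toggle, contradicting stationarity; between disjoint components, convex hulls only move apart, so between-component distances stay above $\beta h$ and $\calG_{\by_t}=\calG_{\bar{\by}}$ for all $t\ge\tau$.

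With $\calG_{\by_\tau}$ closed, Theorem~\ref{thm:variant} delivers directly the existence of $\bar{\by}=\lim_t\by_t$, the $M=M_{\by_\tau}$ cluster points $\bz_1,\ldots,\bz_M$, the exponential rate $q\in[3/4,1)$ for $\|\by_t-\bar{\by}\|$ and $\|\by_{t,i}-\bz_m\|$, and, since after $\tau$ all pairs are either merged or separated by more than $\beta h$ so that each $K(\frac{\by_{t,i}-\by_{t,j}}{h})$ is evaluated away from the non-smooth threshold, the bound $L(\bar{\by})-L(\by_t)=O(q^{2t})$ via local smoothness of $L$ near $\bar{\by}$. For the cubic-rate part under Assumption~\ref{asm:DDif}, the stabilized graph $\calG_{\bar{\by}}$ is singular and stable (case~\hyl{e1}), and the local Lipschitz control of $g$ near $0$ supplied by Assumption~\ref{asm:DDif} feeds directly into the diameter-refinement argument of the second half of Theorem~\ref{thm:BMS-CRE-EXP}, upgrading the linear contraction to cubic and yielding all derived rate statements.

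The main obstacle will be the graph-stabilization step. Finite cardinality of the set of graphs on $[n]$ alone does not suffice, since a priori the trajectory can visit several graphs infinitely often. What is expected to make this work here — and fails in the smooth-truncation regime — is the quantitative edge-weight gap in \eqref{eq:c2}: every edge-toggling event must pay a fixed positive cost in the objective increment, incompatible with the convergence of $(L(\by_t))$. The technically delicate point will be to quantify this quantum cost cleanly across the discontinuity of $g$, separating the analyses for an edge being added and for an edge being removed, and to verify that the $\delta_t$ produced by the MM sandwich actually captures the full cost of such toggles rather than just the squared-distance contraction of pre-existing edges.
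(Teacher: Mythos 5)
Your overall architecture (establish that the BMS graph is closed at some time $\tau$, then invoke Theorem~\ref{thm:variant} for the exponential rate and the diameter-refinement argument of Theorem~\ref{thm:BMS-CRE-EXP} for the cubic rate) matches the paper, and the endgame after closedness is obtained is essentially correct. The gap is in your central step: the claim that every edge-toggling event ``injects a positive quantum into the objective increment bounded below independently of $t$.'' This is not true. Under Assumption~\ref{asm:RS} the kernel $K$ is continuous, and for a truncated kernel $k(u)=0$ for $u\ge\beta^2/2$, so $K(\frac{\bu_i-\bu_j}{h})$ passes continuously through $0$ as $\|\bu_i-\bu_j\|$ crosses $\beta h$; the discontinuity of a non-smoothly truncated kernel lives entirely in $g=-k'$, i.e.\ in the update weights, not in the objective $L$. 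A pair can drift across the threshold $\beta h$ with an arbitrarily small configuration step, and then Lemma~\ref{lem:A} (whose increment is of order $\|\by_{t+1}-\by_t\|^2$, with a matching upper bound since $\bS_t+\bG_t$ has eigenvalues at most $2ng(0)$) gives only an arbitrarily small objective increment. So toggles need not be finitely many by your accounting, and the graph need not stabilize before convergence is known. Your fallback argument that a stabilized open graph is impossible is also circular: Proposition~\ref{prop:BMScomp} only says the component's convex hull shrinks, not that it shrinks to a point, so non-adjacent intra-component pairs need not come within $\beta h$ absent a convergence result.

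The paper's proof uses the gap \eqref{eq:c2} in a genuinely different way: it exploits it in the \emph{dynamics}, not in the objective. Having chosen $\tau$ so that every point moves less than $\epsilon'=\sqrt{\epsilon/\bar a}$ in one step (which does follow from Theorem~\ref{thm:COS} and Lemma~\ref{lem:A}), Lemma~\ref{lem:LEM2} shows that if a point $i$ has a joined neighbor $j$ at projected distance $\delta$ in direction $-\bu$, then the pull of at least $\alpha g(0)\delta$ from $j$ must be nearly cancelled, so some other point $\ell$ in the same component must lie at projected distance at least $\psi(\delta)=\alpha\delta/n-\epsilon'$ in direction $+\bu$. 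Iterating (Lemma~\ref{lem:LEM3}) produces a strictly increasing chain of projections of length exceeding $n$ unless all intra-component distances at time $\tau$ are below $\delta_0=(n/\alpha)^{n-1}\epsilon'/(1-\alpha)$, which is forced below $\beta h$ by taking $\epsilon$ small. That yields closedness of $\calG_{\by_\tau}$ at a single time, which is all Theorem~\ref{thm:variant} needs; stationarity of the graphs $(\calG_{\by_t})_{t\ge\tau}$ is then a consequence of the resulting convergence rather than a prerequisite. To repair your proposal you would need to replace the ``quantum cost per toggle'' mechanism with a cancellation/chain argument of this kind.
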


%==========%
%Epanechnikovカーネル・有限時間収束%
Additionally, under this convergence guarantee, 
we can further ensure the finite-time convergence 
of the BMS algorithm for certain non-smoothly 
truncated kernels including the Epanechnikov kernel:
\begin{theorem}[{Convergence rate bound for, e.g., Epanechnikov kernel}]
\label{thm:EP}
Assume assumptions of Theorem~\ref{thm:NT}
and that the profile $k$ of the kernel $K$ satisfies 
$k(u^2/2)=k(0)+(k'(0+)/2)u^2$ for any $u\in[0,\beta']$ with 
some $\beta'\in(0,\beta]$ for the truncation point $\beta$ of $K$ 
(for example, the Epanechnikov kernel).
Then the BMS algorithm converges in a finite number of iterations:
There exist $\bar{\by}=\lim_{t\to\infty}\by_t$ and $\tau\in\bbN$ such that 
all the BMS graphs $(\calG_{\by_t})_{t\ge\tau}$ are the same as $\calG_{\bar{\by}}$,
and there exist $M=M_{\by_\tau}$ different points $\bz_1,\ldots,\bz_M\in\bbR^d$ such that
$\by_t=\bar{\by}$, 
$\by_{t,i}=\bz_m$ for all $i\in\calV_{\by_\tau,m}$ for all $m\in[M]$,
and $L(\by_t)=L(\bar{\by})$ for any $t\ge\tau$.
\end{theorem}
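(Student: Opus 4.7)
The plan is to combine the exponential-rate convergence of Theorem~\ref{thm:NT} with the special local shape of the profile $k$ in order to show that, once the configuration has entered a small enough neighborhood of its limit, a single BMS step collapses every component to its centroid and the iteration becomes stationary from that step on.

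First, I would invoke Theorem~\ref{thm:NT} to obtain the limit $\bar{\by}=\lim_{t\to\infty}\by_t$, an index $\tau_0\in\bbN$ such that $\calG_{\by_t}=\calG_{\bar{\by}}$ for every $t\ge\tau_0$, and the exponential bound $\|\by_t-\bar{\by}\|=O(q^t)$ for some $q\in[\tfrac34,1)$. Because $\calG_{\bar{\by}}$ is singular, indices $i,j$ that share a component of $\calG_{\bar{\by}}$ necessarily satisfy $\bar{\by}_i=\bar{\by}_j$; call this common value $\bz_m$ for the $m$-th component and note that the points $\bz_1,\ldots,\bz_M$ are distinct by definition of singularity (pairs in different components are either unjoined, which under Assumption~\ref{asm:RS} forces $\|\bz_m-\bz_{m'}\|\ge\beta h>0$).

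Second, I would exploit the assumed form of $k$. The identity $k(u^2/2)=k(0)+(k'(0+)/2)u^2$ for $u\in[0,\beta']$ rewrites, via $v=u^2/2$, as $k(v)=k(0)+k'(0+)v$ on $[0,\beta'^2/2]$, so $\partial k(v)=\{k'(0+)\}$ on $(0,\beta'^2/2)$ and hence $g(v)=-k'(0+)=g(0)$ throughout $[0,\beta'^2/2)$ by \eqref{eq:funcG}. Consequently, whenever $\|\by_{t,i}-\by_{t,j}\|<\beta' h$, one has $G((\by_{t,i}-\by_{t,j})/h)=g(0)$.

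Third, using the exponential convergence, choose $\tau\ge\tau_0$ large enough that $\|\by_{t,i}-\by_{t,j}\|<\beta' h$ for every $t\ge\tau$ and every pair $i,j$ lying in the same component $\calV_{\bar{\by},m}$ of $\calG_{\bar{\by}}=\calG_{\by_t}$; this is possible because $\bar{\by}_i=\bar{\by}_j=\bz_m$ for such pairs and $\beta'>0$. By Proposition~\ref{prop:BMScomp}, the BMS update for $i\in\calV_{\bar{\by},m}$ only involves $j\in\calV_{\bar{\by},m}$, and by the previous paragraph every weight $G_{t,i,j}$ in that update equals the constant $g(0)$. The update therefore collapses to the unweighted arithmetic mean
\begin{equation*}
\by_{\tau+1,i}=\frac{1}{|\calV_{\bar{\by},m}|}\sum_{j\in\calV_{\bar{\by},m}}\by_{\tau,j},
\end{equation*}
which does not depend on $i$. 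Hence $\by_{\tau+1,i}=\by_{\tau+1,j}$ for all $i,j\in\calV_{\bar{\by},m}$, and applying the same identity at the next step (for which every intra-component distance is $0<\beta' h$) shows $\by_{\tau+2}=\by_{\tau+1}$; iterating, the configuration is fixed for all $t\ge\tau+1$. Since $\by_t\to\bar{\by}$, the fixed value must be $\bar{\by}$ itself, yielding $\by_t=\bar{\by}$ and $L(\by_t)=L(\bar{\by})$ for all $t\ge\tau+1$ and $\by_{t,i}=\bz_m$ for every $i\in\calV_{\by_\tau,m}$.

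The only non-routine step is the second one: verifying that the constant-$g$ window $[0,\beta'^2/2)$ is actually entered by every within-component pair after finitely many iterations. That is not really a serious obstacle because Theorem~\ref{thm:NT} gives both convergence of $(\by_t)$ and eventual constancy of $\calG_{\by_t}$, so the intra-component distances tend to $0$ in particular; I would simply combine these two facts and re-label $\tau$ accordingly.
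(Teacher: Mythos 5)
Your proposal is correct and follows essentially the same route as the paper's own (very terse) proof: use Theorem~\ref{thm:NT} to get convergence and eventual constancy of the BMS graph, observe that the profile assumption forces $g$ to equal $g(0)$ on the relevant window so that each component's update degenerates to its unweighted centroid once intra-component distances fall below $\beta' h$, and conclude stationarity after one more step. Your version merely fills in the details the paper leaves implicit (e.g.\ the care with the endpoint of the constant-$g$ window and the relabeling of $\tau$), so there is nothing substantive to add.
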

%数十回で収束%
The `finite number of iterations' in this theorem is 
practically a few dozen as far as we have experienced;
refer to Section~S4 of the supplementary material.
%[Theorem 14.2]{Cheng95}の多次元一般化%
This theorem serves as a multi-dimensional 
generalization of \cite[Theorem 14.2]{Cheng95}
that we reviewed in Section~\ref{sec:CW95}.

%========================================%
\section{Proof of Theoretical Results}
\label{sec:Proof}
%========================================%
\subsection{Proof of Proposition~\protect\ref{prop:KG}}
\label{sec:Proof-PropKG}
%==========%
%命題1，K・Gの関連性%
The statements \hyl{a1}, \hyl{a2}, \hyl{a4}, and \hyl{a6} of 
Proposition \ref{prop:KG} are proved in \cite[Section 24]{rockafellar1997convex},
and the remaining statements are proved from them under Assumption~\ref{asm:RS}.

%========================================%
\subsection{Proof of Propositions~\protect\ref{prop:BMScomp} and \protect\ref{prop:Number}}
\label{sec:Proof-Number}
%==========%
%命題2，自明%
Proposition~\ref{prop:BMScomp} is clear from 
Proposition~\ref{prop:KG} and the definitions of the BMS graph and its component.
%命題3，球の体積比較%
One can prove the bound in Proposition~\ref{prop:Number}
according to the comparison between
the volume of the ball $\{\bv\in\bbR^d\mid\|\bv-\bu_i\|\le \beta h\}$ 
and that of the ball $\{\bv\in\bbR^d\mid\|\bv-\bu_1\|\le\gamma+\beta h/2\}$.
We additionally note that this bound is loose and may be improved, 
for example, by using more refined results on sphere packing
\cite{conway2013sphere,viazovska2017sphere,cohn2017sphere}.

%========================================%
\subsection{Proof of Theorems~\protect\ref{thm:Cheng} and \protect\ref{thm:variant}}
\label{sec:Proof-Cheng}
%==========%
%定理1・2，指数レート収束%
Although \cite{cheng1995mean} gives a proof of Theorem~\ref{thm:Cheng}, 
for the sake of completeness we provide proofs of Theorems~\ref{thm:Cheng} and \ref{thm:variant}
in Sections~S1.1 and S1.2, respectively, 
of the supplementary material of this paper.

%========================================%
\subsection{Proof of Propositions~\protect\ref{prop:TI} and \protect\ref{prop:TGM}}
\label{sec:Proof-PropTITGM}
%==========%
%命題4・5，Lの関数特性%
Proposition~\ref{prop:TI} is proved via a direct calculation, and 
Proposition~\ref{prop:TGM} is a direct consequence of Proposition~\ref{prop:KG} \hyl{a3}.

%========================================%
\subsection{Proof of Theorem~\protect\ref{thm:COS}}
\label{sec:Proof-COS}
%==========%
%定理4%
Here, we provide a proof of Theorem~\ref{thm:COS}
that depends on Lemma~\ref{lem:A} introduced below:

%==========%
%補題1%
\begin{lemma}
\label{lem:A}
Assume Assumption~\ref{asm:RS}.
Then there exists $\bar{a}\in(0,\infty)$ such that
\begin{align}
\label{eq:lem-a}
\begin{split}
	L(\by_{t+1})-L(\by_t)
	&\ge R(\by_{t+1}|\by_t)-R(\by_t|\by_t)\\
	&\ge \bar{a}\|\by_{t+1}-\by_t\|^2
\end{split}
\end{align}
holds for all $t\in\bbN$.
\end{lemma}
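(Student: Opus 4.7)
The plan is to prove the two inequalities in~\eqref{eq:lem-a} separately. The first inequality is immediate from the minorizer property established just before the statement of the lemma: since $R(\by_t|\by_t)=L(\by_t)$ and $R(\by_{t+1}|\by_t)\le L(\by_{t+1})$, subtracting the former from the latter yields
\begin{align*}
R(\by_{t+1}|\by_t)-R(\by_t|\by_t)=R(\by_{t+1}|\by_t)-L(\by_t)\le L(\by_{t+1})-L(\by_t).
\end{align*}

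For the second inequality, I would exploit that $R(\cdot|\by_t)$ is a quadratic function of $\bu$, so Taylor's expansion at $\by_t$ is exact. With the abbreviations $\bv\coloneq\by_{t+1}-\by_t$, $\bv_i\coloneq\by_{t+1,i}-\by_{t,i}$, and $s_{t,i}\coloneq\sum_{j=1}^n G_{t,i,j}$, I would first compute $\nabla_{\bu_i}R(\by_t|\by_t)=-\frac{2}{h^2}\sum_{j}G_{t,i,j}(\by_{t,i}-\by_{t,j})$ and rewrite the BMS update~\eqref{eq:updaBMS} as $\bv_i=-s_{t,i}^{-1}\sum_j G_{t,i,j}(\by_{t,i}-\by_{t,j})$, which delivers a clean first-order term $\nabla R(\by_t|\by_t)^\top\bv=\frac{2}{h^2}\sum_i s_{t,i}\|\bv_i\|^2$. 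The second-order term is $\frac{1}{2}\bv^\top\nabla^2 R\,\bv=-\frac{1}{2h^2}\sum_{i,j}G_{t,i,j}\|\bv_i-\bv_j\|^2$, since $\nabla^2 R$ is, up to the factor $-2/h^2$, the $d$-fold blowup of the weighted graph Laplacian of $\calG_{\by_t}$. Substituting and cancelling, one arrives at
\begin{align*}
R(\by_{t+1}|\by_t)-R(\by_t|\by_t)=\frac{1}{h^2}\biggl(\sum_{i=1}^n s_{t,i}\|\bv_i\|^2+\sum_{i,j=1}^n G_{t,i,j}\bv_i^\top\bv_j\biggr).
\end{align*}

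The main obstacle is to bound this expression below by a positive multiple of $\|\bv\|^2$ that is uniform in $t$, because the weighted Laplacian itself is singular and its spectrum varies with $\by_t$. The key observation I would use is that $G_{t,i,i}=G(\bm{0}_d)=g(0)>0$ by Proposition~\ref{prop:KG}~\hyl{a3}, so every BMS graph carries an effective ``self-loop'' of weight $g(0)$ at each vertex. Using this and the symmetry $G_{t,i,j}=G_{t,j,i}$, one verifies the signless-Laplacian identity
\begin{align*}
\sum_{i=1}^n s_{t,i}\|\bv_i\|^2+\sum_{i,j=1}^n G_{t,i,j}\bv_i^\top\bv_j=\frac{1}{2}\sum_{i,j=1}^n G_{t,i,j}\|\bv_i+\bv_j\|^2,
\end{align*}
whose right-hand side is non-negative term by term. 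Retaining only the diagonal contribution $j=i$ already yields $\frac{1}{2}\sum_i g(0)\|2\bv_i\|^2=2g(0)\|\bv\|^2$, so the lemma holds with the $t$-independent constant $\bar{a}=2g(0)/h^2\in(0,\infty)$.
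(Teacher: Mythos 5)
Your proposal is correct and takes essentially the same approach as the paper: both reduce $R(\by_{t+1}|\by_t)-R(\by_t|\by_t)$ to the signless-Laplacian quadratic form $\frac{1}{h^2}\sum_{i,j}\frac{G_{t,i,j}}{2}\|\bv_i+\bv_j\|^2$ and bound it below by the diagonal ($j=i$) contribution, arriving at the identical constant $\bar{a}=2g(0)/h^2$. The only difference is cosmetic: the paper reaches that quadratic form via a matrix identity for $(\bA-\bB)-\bB\bA^{-1}(\bA-\bB)\bA^{-1}\bB$ with $\bA=\bS_t$, $\bB=\bG_t$, whereas you get there by exact second-order Taylor expansion of the quadratic minorizer.
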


%==========%
%補題1の証明%
\begin{proof}[{Proof of Lemma~\ref{lem:A}}]
%略号%
Let us introduce the abbreviation
$\bG_t\coloneq(G_{t,i,j})_{i,j\in[n]}\otimes\bI_d\in\bbR^{nd\times nd}$.
The $\by$-dependent part of the minorizer 
$R(\by|\by_t)$ times $-h^2$ can be calculated as
\begin{align}
\label{eq:SmG}
	\sum_{i,j=1}^n\frac{G_{t,i,j}}{2}\|\by_i-\by_j\|^2
	&=\sum_{i,j=1}^nG_{t,i,j}\|\by_i\|^2
	-\sum_{i,j=1}^nG_{t,i,j}\by_i^\top\by_j\nonumber\\
	&=\by^\top(\bS_t-\bG_t)\by.
\end{align}
Also, the update rule \eqref{eq:updaBMS} of the BMS algorithm is expressed as 
\begin{align}
\label{eq:updaBMS2}
	\by_{t+1}=\bS_t^{-1}\bG_t\by_t.
\end{align}
Therefore, one has
\begin{align}
\label{eq:lemAeq2}
	\begin{split}
	&h^2\{R(\by_{t+1}|\by_t)-R(\by_t|\by_t)\}\\
	&=\sum_{i,j=1}^n\frac{G_{t,i,j}}{2}\|\by_{t,i}-\by_{t,j}\|^2
	-\sum_{i,j=1}^n\frac{G_{t,i,j}}{2}\|\by_{t+1,i}-\by_{t+1,j}\|^2\\
	&={\by_t}^\top(\bS_t-\bG_t)\by_t
	-(\bS_t^{-1}\bG_t\by_t)^\top
	(\bS_t-\bG_t)
	(\bS_t^{-1}\bG_t\by_t).
	\end{split}
\end{align}
It generally holds that, for matrices $\bA, \bB\in\bbR^{N\times N}$
with $\bA$ invertible,
\begin{align}
	\begin{split}
	&(\bA-\bB)-\bB\bA^{-1}(\bA-\bB)\bA^{-1}\bB\\
	&=(\bB\bA^{-1}-\bI_N)(\bA+\bB)(\bA^{-1}\bB-\bI_N).
	\end{split}
\end{align}
Applying this equation with $\bA=\bS_t$ and $\bB=\bG_t$ 
to \eqref{eq:lemAeq2}, noting that $\bS_t$ and $\bG_t$ are symmetric, leads to
\begin{align}
\label{eq:lemAeq3}
	\begin{split}
	&h^2\{R(\by_{t+1}|\by_t)-R(\by_t|\by_t)\}\\
	&=\{(\bS_t^{-1}\bG_t-\bI_{nd})\by_t\}^\top
	(\bS_t+\bG_t)
	\{(\bS_t^{-1}\bG_t-\bI_{nd})\by_t\}.
	\end{split}
\end{align}
From the update rule \eqref{eq:updaBMS2} of the BMS algorithm one has 
$\by_{t+1}-\by_t=(\bS_t^{-1}\bG_t-\bI_{nd})\by_t$,
and thus one has 
\begin{align}
\label{eq:lemAeq4}
	&h^2\{R(\by_{t+1}|\by_t)-R(\by_t|\by_t)\}
	=(\by_{t+1}-\by_t)^\top
	(\bS_t+\bG_t)
	(\by_{t+1}-\by_t)
\nonumber\\
	&\ge\text{min-ev}(\bS_t+\bG_t)\cdot
	\|\by_{t+1}-\by_t\|^2,
\end{align}
where $\text{min-ev}(\bA)$ is the minimum eigenvalue of a symmetric matrix $\bA$.
Since it holds that
\begin{align}
	\begin{split}
	&\text{min-ev}(\bS_t+\bG_t)\\
	&=\inf_{\|\by\|=1}\by^\top(\bS_t+\bG_t)\by
	=\inf_{\|\by\|=1}\sum_{i,j=1}^n\frac{G_{t,i,j}}{2}\|\by_i+\by_j\|^2\\
	&=\inf_{\|\by\|=1}\biggl(\sum_{i=1}^n\frac{G_{t,i,i}}{2}\|2\by_i\|^2
	+\sum_{i\neq j}\frac{G_{t,i,j}}{2}\|\by_i+\by_j\|^2\biggr)\\
	&\ge\inf_{\|\by\|=1}\sum_{i=1}^n\frac{G_{t,i,i}}{2}\|2\by_i\|^2
	=2g(0)\inf_{\|\by\|=1}\sum_{i=1}^n\|\by_i\|^2\\
	&=2g(0)\inf_{\|\by\|=1}\|\by\|^2
	=2g(0),
	\end{split}
\end{align}
one has 
\begin{align}
\label{eq:select-a}
	R(\by_{t+1}|\by_t)-R(\by_t|\by_t)
	\ge\bar{a}	\|\by_{t+1}-\by_t\|^2
\end{align}
with $\bar{a}=\frac{2g(0)}{h^2}\in(0,\infty)$
from Proposition \ref{prop:KG} \hyl{a3}.
Additionally, from the properties of the minorizer one has 
$L(\by_t)=R(\by_t|\by_t)$ and $R(\by_{t+1}|\by_t)\le L(\by_{t+1})$
as in \eqref{eq:MI}, which imply
\begin{align}
\label{eq:lemAeq1}
	L(\by_{t+1})-L(\by_t)
	\ge R(\by_{t+1}|\by_t)-R(\by_t|\by_t).
\end{align}
Putting \eqref{eq:select-a} and \eqref{eq:lemAeq1} 
together completes the proof.
\end{proof}

%==========%
%定理5の証明%
Theorem~\ref{thm:COS} follows straightforwardly from Lemma~\ref{lem:A}:
\begin{proof}[{Proof of Theorem \ref{thm:COS}}]
The function $L$ is bounded under Assumption~\ref{asm:RS},
and Lemma~\ref{lem:A} shows that 
$(L(\by_t))_{t\in\bbN}$ is non-decreasing.
Thus, the convergence of $(L(\by_t))_{t\in\bbN}$ is ensured, 
since a bounded monotone sequence converges in general.
Furthermore, 
$L(\by_{t+1})=L(\by_t)$ implies $\by_{t+1}=\by_t$ via \eqref{eq:lem-a}. 
\end{proof}

%========================================%
\subsection{Proof of Propositions~\protect\ref{prop:SA} and \protect\ref{prop:GSprop}}
\label{sec:Proof-PropSAGSProp}
%==========%
%命題6，Lojasiewicz特性を持つ関数クラス%
Proposition~\ref{prop:SA} is given by \cite{lojasiewicz1965ensembles, kurdyka1994wf}.
%半解析%
\cite[before Example 1.1.4]{valette2022subanalytic} describes 
that any globally semianalytic functions are semianalytic, 
and any semianalytic functions with a bounded graph are globally semianalytic.
%劣解析%
\cite[after Definition 2.2]{bolte2007lojasiewicz} describes 
a `subanalytic'-version of this statement.
%上記の関連%
Also, semianalytic functions are subanalytic (as seen from Definition~\ref{def:GSF}),
globally semianalytic functions are globally subanalytic \cite[Definition 1.1.6]{valette2022subanalytic},
and semialgebraic functions are globally semianalytic \cite[Example 1.1.4]{valette2022subanalytic}.
%命題7，大域的劣解析関数の特性%
Proposition~\ref{prop:GSprop} summarizes these relations 
and the stability of the global subanalyticity under 
the summation \cite[Properties 1.1.8]{valette2022subanalytic}.

%========================================%
\subsection{Proof of Theorems~\protect\ref{thm:BMS-GCG}, 
\protect\ref{thm:BMS-CG}, and \protect\ref{thm:BMS-CRE-GEN}}
\label{sec:Proof-BMS-Loj}
%==========%
%BMS固有の議論が少なくページ超過が大きいので，補足資料に回す%
Theorems~\ref{thm:BMS-GCG} and \ref{thm:BMS-CRE-GEN}
can be proved by applying abstract convergence theorems,
\cite[Theorem 3.2]{attouch2013convergence} and 
\cite[Theorems 3.1 and 3.5]{frankel2015splitting}.
We describe proofs of these theorems in Sections S2.2 and S2.4 of the supplementary material.
%カーネル化・(s1)の証明は他定理から自明%
Also, Theorem~\ref{thm:BMS-CG} can be proved 
as a corollary of Theorem~\ref{thm:BMS-GCG};
see the description in Section~\ref{sec:CPSS-CG}, 
as well as that in Section S2.3 of the supplementary material.

%========================================%
\subsection{Proof of Theorem~\protect\ref{thm:Trivial}}
\label{sec:Proof-Trivial}
%==========%
%定理3%
We here provide a proof of Theorem~\ref{thm:Trivial}.

%==========%
%定理3の証明%
\begin{proof}[{Proof of Theorem~\ref{thm:Trivial}}]
%==========%
%証明戦略は対偶%
The statement \hyl{d2} immediately implies \hyl{d1}. 
We thus prove in the following that \hyl{d1} implies \hyl{d2}
via proof by contraposition.
More concretely, we deduce that \hyl{d1} does not hold 
under the assumption that \hyl{d2} is not satisfied,
namely, there exists a non-singular component $\calG_{\bu,m}$. 
For any $i\in\calV_{\bu,m}$,
since $G(\frac{\bu_i-\bu_j}{h})=0$ for all $j\in[n]\backslash\calV_{\bu,m}$,
one has $\sum_{j=1}^n(\bu_i-\bu_j)G(\frac{\bu_i-\bu_j}{h})
=\sum_{j\in\calV_{\bu,m}}(\bu_i-\bu_j)G(\frac{\bu_i-\bu_j}{h})$.
Thus, we prove in the following that if the component $\calG_{\bu,m}$
is non-singular then there exists $i\in\calV_{\bu,m}$ such that 
$\sum_{j\in\calV_{\bu,m}}(\bu_i-\bu_j)G(\frac{\bu_i-\bu_j}{h})\neq\bm{0}_d$.

%==========%
Let $\calP=\Conv(\{\bu_i\}_{i\in\calV_{\bu,m}})$, 
which is a polytope in $\bbR^d$. 
It is compact \cite[Theorem 2.8]{Brondsted1983}, 
and any extreme point of $\calP$ is in 
$\{\bu_i\}_{i\in\calV_{\bu,m}}$ \cite[Theorem 7.2]{Brondsted1983}.
Let $\bu_i$, $i\in\calV_{\bu,m}$ be an extreme point of $\calP$. 
It is an exposed point \cite[Theorem 7.5]{Brondsted1983} of $\calP$, 
which means that there exists a proper supporting hyperplane 
$\calR$ of $\calP$ with which $\{\bu_i\}=\calR\cap\calP$.
Let $\calQ$ be the (closed) supporting halfspace of $\calP$ 
bounded by $\calR$, that is, $\calP\subset\calQ$. 
The supporting halfspace $\calQ$ of $\calP$ at $\bu_i$
is represented with a normal $\bq$ of $\calR$ as 
$\calQ=\{\bv\in\bbR^d\mid\bq^\top\bv\ge\bq^\top\bu_i\}$. 
The above fact furthermore means that
$\calP\backslash\{\bu_i\}\subset\calQ\backslash\calR$ holds,
and in particular, for any $j\in\calV_{\bu,m}$ satisfying 
$\bu_j\neq\bu_i$ one has $\bq^\top\bu_j>\bq^\top\bu_i$.
Also, as the component $\calG_{\bu,m}$ is non-singular, 
there exists $k\in\calV_{\bu,m}\backslash\{i\}$ that
satisfies $\bu_k\neq\bu_i$ and has an edge with $i\in\calV_{\bu,m}$. 
Then,
\begin{align}
	\bq^\top\biggl\{\sum_{j\in\calV_{\bu,m}}
	(\bu_i-\bu_j)G\biggl(\frac{\bu_i-\bu_j}{h}\biggr)\biggr\}
	&\le\bq^\top(\bu_i-\bu_k)G\biggl(\frac{\bu_i-\bu_k}{h}\biggr)
\nonumber\\
	&<0,
\end{align}
which implies the failure of \hyl{d1},
completing the proof.
\end{proof}

%========================================%
\subsection{Proof of Theorem~\protect\ref{thm:BMS-CRE-EXP}}
\label{sec:Proof-Quick}
%==========%
%定理7・8%
We here give a proof of Theorem~\ref{thm:BMS-CRE-EXP} which 
provides convergence rate bounds using the local approximation.

%==========%
%定理7の証明%
\begin{proof}[{Proof of Theorem~\ref{thm:BMS-CRE-EXP}}]
%==========%
%closednessについて%
For the case \hyl{e1},
the convergence guarantee of Theorem~\ref{thm:BMS-GCG} 
ensures that there exists $\tau\in\bbN$ such that
the BMS graphs $\calG_{\by_\tau},\calG_{\by_{\tau+1}},\ldots$ 
are the same as the BMS graph $\calG_{\bar{\by}}$ and are hence closed.
The exponential rate convergence of 
the sequences $(\by_t)_{t\in\bbN}$ and $(\by_{t,i})_{t\in\bbN}$'s
in the former half of Theorem~\ref{thm:BMS-CRE-EXP} follows straightforwardly 
from Theorem~\ref{thm:variant} and closedness of $\calG_{\by_\tau}$,
we prove in the following the latter half, 
which states the cubic convergence of those sequences.

The invariance of the BMS graphs $(\calG_{\by_t})_{t\ge\tau}$
allows us to adopt the convention $\calV_m\coloneq
\calV_{\by_\tau,m}=\calV_{\by_{\tau+1},m}=\cdots=\calV_{\bar{\by},m}$
for every $m\in[M]$.
Define the diameter for components of the BMS graph $\calG_{\bar{\by}}$ as
\begin{align}
\label{eq:rhot}
	\rho_t\coloneq
	\max\{\|\by_{t,i_1}-\by_{t,i_2}\|\mid i_1,i_2\in\calV_m,m\in[M]\}.
\end{align}
%コンポーネントの半径の発展を議論する%
Since the convergence guarantee is available,
one has $\lim_{t\to\infty}\rho_t=0$.
In the proof we bound $\rho_{t+1}$ from above
in terms of $\rho_t$.
%そのために局所近似を使用する%
For this purpose we consider the quantity
$\|\by_{t+1,i_1}-\by_{t+1,i_2}\|$ for 
$t\ge\tau, i_1,i_2\in\calV_m, m\in[M]$, which is bounded from above as 
\begin{align}
	&\|\by_{t+1,i_1}-\by_{t+1,i_2}\|
	=\biggl\|\frac{\sum_{j=1}^n G_{t,i_1,j}\by_{t,j}}{\sum_{j=1}^n G_{t,i_1,j}}
	-\frac{\sum_{j=1}^n G_{t,i_2,j}\by_{t,j}}{\sum_{j=1}^n G_{t,i_2,j}}\biggr\|
\nonumber\\
&=\biggl\|\frac{\sum_{j_1,j_2=1}^nG_{t,i_1,j_1}G_{t,i_2,j_2}\by_{t,j_1}
  -\sum_{j_1,j_2=1}^nG_{t,i_1,j_1}G_{t,i_2,j_2}\by_{t,j_2}}
	{(\sum_{j=1}^n G_{t,i_1,j})(\sum_{j=1}^n G_{t,i_2,j})}\biggr\|
\nonumber\\
&=\biggl\|\frac{\sum_{j_1,j_2=1}^n(G_{t,i_1,j_1}G_{t,i_2,j_2}
  -G_{t,i_1,j_2}G_{t,i_2,j_1})\by_{t,j_1}}
	{(\sum_{j=1}^n G_{t,i_1,j})(\sum_{j=1}^n G_{t,i_2,j})}\biggr\|
\nonumber\\
	&=\biggl\|\frac{\sum_{1\le j_1<j_2\le n} (G_{t,i_1,j_1}G_{t,i_2,j_2}-G_{t,i_1,j_2}G_{t,i_2,j_1})(\by_{t,j_1}-\by_{t,j_2})}
	{(\sum_{j=1}^n G_{t,i_1,j})(\sum_{j=1}^n G_{t,i_2,j})}\biggr\|
\nonumber\\
\label{eq:difbou}
	&\le\frac{\sum_{1\le j_1<j_2\le n} Q_{t,i_1,i_2,j_1,j_2}}
	{\{g(0)\}^2}
	=\frac{\sum_{(j_1,j_2)\in\calV_m^2,j_1<j_2} Q_{t,i_1,i_2,j_1,j_2}}
	{\{g(0)\}^2}
\end{align}
with $Q_{t,i_1,i_2,j_1,j_2}\coloneq|G_{t,i_1,j_1}G_{t,i_2,j_2}-G_{t,i_1,j_2}G_{t,i_2,j_1}|\|\by_{t,j_1}-\by_{t,j_2}\|$.
The proof proceeds by further bounding $Q_{t,i_1,i_2,j_1,j_2}$'s
from above in terms of $\rho_t$, as detailed in the following.

%==========%
%局所近似について%
First, the convergence guarantee of Theorem~\ref{thm:BMS-GCG} 
ensures that, for any $\epsilon>0$, 
there exists $\tau'\in\bbN$ with $\tau'\ge\tau$ such that 
for any $t\ge\tau'$ and $i_1, i_2\in\calV_m$, $m\in[M]$ one has 
$\|\by_{t,i_1}-\by_{t,i_2}\|<\epsilon$.
This allows us to apply the following local approximation
under Assumption~\ref{asm:DDif}: the function $g$ satisfies 
$g(0)-c_1 u\le g(u)\le g(0)$ for any $u\in[0,v]$ with some constant $c_1>0$.
Let $\epsilon$ be such that $(\epsilon/h)^2/2\le v$ holds,
and take $\tau'$ to satisfy the above condition. 
For every $i_1, i_2, j_1,j_2\in\calV_m$ with some $m\in[M]$, 
since $\|\by_{t,j_1}-\by_{t,j_2}\|<\rho_t$ and $(\rho_t/h)^2/2\le(\epsilon/h)^2/2\le v$ for any $t\ge\tau'$, 
one has 
\begin{align}
\label{eq:Qineq1}
\begin{split}
	Q_{t,i_1,i_2,j_1,j_2}
	&\le\{g(0)^2-(g(0)-c_1(\rho_t/h)^2/2)^2\}\rho_t\\
	&\le (c_1g(0)/h^2)\rho_t^3.
\end{split}
\end{align}
%Also, if $j_1=j_2$, then $Q_{t,i_1,i_2,j_1,j_2}=0$.
Therefore, one has from \eqref{eq:difbou} that
\begin{align}
\label{eq:ExpTow}
	\|\by_{t+1,i_1}-\by_{t+1,i_2}\|
	\le c_2\rho_t^3
\end{align}
with $c_2\coloneq c_1|\calV_m|(|\calV_m|-1)/(2g(0)h^2)$
for any $t\ge\tau'$ and every $i_1,i_2\in\calV_m$,
which implies
\begin{align}
\label{eq:ExpTow2}
	\rho_{t+1}\le c_2\rho_t^3,\quad\forall t\ge\tau'.
\end{align}
Also, 
the definition of $\rho_t$, as well as the closedness of $(\calG_{\by_t})_{t\ge\tau}$, 
implies that $\|\by_{t,i}-\bz_m\|\le\rho_t$ for all $i\in\calV_m$ for all $m\in[M]$ for any $t\ge\tau$,
which yields 
\begin{align}
\label{eq:rho-ineq}
	\rho_t\le\|\by_t-\bar{\by}\|\le\sqrt{n}\rho_t,\quad\forall t\ge\tau.
\end{align}
The inequalities \eqref{eq:ExpTow2} and \eqref{eq:rho-ineq} reveal
\begin{align}
\label{eq:ExpTow3}
	\|\by_{t+1}-\bar{\by}\|\le p\|\by_t-\bar{\by}\|^3
	\text{~with~}p=\sqrt{n}c_2,\quad\forall t\ge\tau'.
\end{align}
%==========%
%%q^3^t%
From \eqref{eq:ExpTow3}, one can show $\|\by_t-\bar{\by}\|=O(q^{3^t})$ 
via the argument in~\cite[Section 9.3.2]{Ortega2000}.
%==========%
%各座標%
This result in turn implies $\|\by_{t,i}-\bz_m\|=O(q^{3^t})$,
since $\|\by_{t,i}-\bz_m\|\le\|\by_t-\bar{\by}\|$
for all $i\in\calV_m$ for all $m\in[M]$.

%==========%
%目的関数，Kのリプシッツ連続性を使用%
Additionally, for both of the former half and the latter half
of the theorem,
the local approximation of the function $L$ 
based on the Lipschitz-continuity of the profile $k$ of the kernel $K$ 
(Proposition~\ref{prop:KG} \hyl{a7} under Assumption~\ref{asm:RS})
reveals $L(\bar{\by})-L(\by_t)=O(\|\by_t-\bar{\by}\|^2)$.
\end{proof}

\if0
%==========%
%定理8の証明%
Theorem~\ref{thm:BMS-CRE-EXP2} can be proved similarly to Theorem~\ref{thm:BMS-CRE-EXP}.
\begin{proof}[{Proof of Theorem~\ref{thm:BMS-CRE-EXP2}}]
%==========%
%収束保証のもとで局所近似を使用%
Bound the rightmost term of \eqref{eq:difbou} again.
For the case \hyl{e2},
the convergence guarantee of Theorem~\ref{thm:BMS-GCG} shows 
that there exists $\tau\in\bbN$ such that $\|\by_{t,i_1}-\by_{t,i_2}\|$ 
with every $i_1, i_2\in\calV_{\bar{\by},m}, m\in[M]$
become sufficiently small so that all the local approximations 
used below become appropriate when $t\ge\tau$,
where $M\coloneq M_{\bar{\by}}$.
Note that $c_1,\ldots,c_4$ appearing below are some positive constants
that are related to those local approximations.
Let $t\ge\tau$ in the following.

%==========%
\textbf{Under Assumption~\ref{asm:EDif} with $\kappa=2$:}
%同点同士%
%局所近似の話(定理7のと同じ)
First, Assumption~\ref{asm:DDif} ensures that, 
for every $i_1, i_2, j_1,j_2\in\calV_{\bar{\by},m}$ with some $m\in[M]$
and $\rho_t$ defined in \eqref{eq:rhot}, one has
\begin{align}
\label{eq:fir}
	\begin{split}
	Q_{t,i_1,i_2,j_1,j_2}
	&\le\{g(0)^2-(g(0)-c_1(\rho_t/h)^2/2)^2\}\rho_t\\
	&\le (c_1g(0)/h^2)\rho_t^3.
	\end{split}
\end{align}
%隣接同士%
Second, we derive an upper bound of $Q_{t,i_1,i_2,j_1,j_2}$
when $i_1,i_2\in\calV_{\bar{\by},m}$ and 
$j_1$ or $j_2$ does not belong to $\calV_{\bar{\by},m}$.
Consider the case with $j_2\not\in\calV_{\bar{\by},m}$
(let $j_2\in\calV_{\bar{\by},m'}$, $m\neq m'$).
Due to Assumption~\ref{asm:RS},
one has $G_{t,i_1,j_1}\le g(0)$,
$G_{t,i_1,j_2}G_{t,i_2,j_1}\ge0$,
and $\|\by_{t,j_1}-\by_{t,j_2}\|\le c_3$ as \eqref{eq:Claim3}.
Assumption~\ref{asm:EDif} with $\kappa=2$
implies $g(\beta^2/2)=g'(\beta^2/2)=0$ and 
$|g'(\beta^2/2-u)|\le c_2u$ with small $u\ge0$.
These show 
$g(\beta^2/2-u)
=g(\beta^2/2)+\int_0^u g'(\beta^2/2-v)\,dv
\le \int_0^u |g'(\beta^2/2-v)|\,dv
\le (c_2/2)u^2$ with small $u\ge0$.
Since 
$\|\by_{t,i_2}-\by_{t,j_2}\|
\ge\|\bz_m-\bz_{m'}\|-\|\by_{t,i_2}-\bz_m\|-\|\by_{t,j_2}-\bz_{m'}\|
\ge\beta h-2\rho_t$ and 
$\beta^2/2-2\beta\rho_t/h\le
((\beta h-2\rho_t)/h)^2/2\le\beta^2/2$
with sufficiently small $\rho_t$,
one has $G_{t,i_2,j_2}\le (c_2/2)(2\beta\rho_t/h)^2$. 
Collecting these implies
\begin{align}
\label{eq:sec}
	\begin{split}
	Q_{t,i_1,i_2,j_1,j_2}
	&\le\{g(0)((c_2/2)(2\beta\rho_t/h)^2)-0\} c_3\\
	&=(2c_2c_3g(0)\beta^2/h^2)\rho_t^2.
	\end{split}
\end{align}
The similar holds when $j_1\not\in\calV_{\bar{\by},m}$.
Thus, as in the proof of Theorem~\ref{thm:BMS-CRE-EXP}, we can prove 
$\|\by_{t+1}-\bar{\by}\|\le p\|\by_t-\bar{\by}\|^2$ with some $p\in(0,\infty)$
and other convergence rate bounds.

%==========%
\textbf{Under Assumption~\ref{asm:EDif} with $\kappa\ge3$:} 
%同点同士%
\eqref{eq:fir} still holds.
%隣接同士%
Instead of \eqref{eq:sec}, 
we derive an upper bound of $Q_{t,i_1,i_2,j_1,j_2}$
when $i_1,i_2\in\calV_{\bar{\by},m}$ and 
$j_1$ or $j_2$ does not belong to $\calV_{\bar{\by},m}$
under Assumption~\ref{asm:EDif} with $\kappa=3$.
Consider the case with $j_2\not\in\calV_{\bar{\by},m}$
(let $j_2\in\calV_{\bar{\by},m'}$, $m\neq m'$).
Due to Assumption~\ref{asm:RS},
one has $G_{t,i_1,j_1}\le g(0)$,
$G_{t,i_1,j_2}G_{t,i_2,j_1}\ge0$,
and $\|\by_{t,j_1}-\by_{t,j_2}\|\le c_3$ as \eqref{eq:Claim3}.
Assumption~\ref{asm:EDif} with $\kappa=3$
implies $g(\beta^2/2)=g'(\beta^2/2)=g''(\beta^2/2)=0$ and 
$|g''(\beta^2/2-u)|\le c_4u$ with small $u\ge0$.
These show 
$|g'(\beta^2/2-u)|
=|g'(\beta^2/2)+\int_0^u g''(\beta^2/2-v)\,dv|
\le \int_0^u |g''(\beta^2/2-v)|\,dv
\le (c_4/2)u^2$
and 
$g(\beta^2/2-u)
=g(\beta^2/2)+\int_0^u g'(\beta^2/2-v)\,dv
\le \int_0^u |g'(\beta^2/2-v)|\,dv
\le (c_4/6)u^3$ with small $u\ge0$.
Since 
$\|\by_{t,i_2}-\by_{t,j_2}\|
\ge\|\bz_m-\bz_{m'}\|-\|\by_{t,i_2}-\bz_m\|-\|\by_{t,j_2}-\bz_{m'}\|
\ge\beta h-2\rho_t$ and 
$\beta^2/2-2\beta\rho_t/h\le
((\beta h-2\rho_t)/h)^2/2\le\beta^2/2$
with sufficiently small $\rho_t$,
one has $G_{t,i_2,j_2}\le (c_4/6)(2\beta\rho_t/h)^3$. 
Collecting these implies
\begin{align}
	\begin{split}
	Q_{t,i_1,i_2,j_1,j_2}
	&\le\{g(0)((c_4/6)(2\beta\rho_t/h)^3)-0\} c_3\\
	&=(4c_3c_4g(0)\beta^3/(3h^3))\rho_t^3.
	\end{split}
\end{align}
The similar holds when $j_1\not\in\calV_{\bar{\by},m}$.
Thus, as in the proof of Theorem~\ref{thm:BMS-CRE-EXP}, we can prove 
$\|\by_{t+1}-\bar{\by}\|\le p\|\by_t-\bar{\by}\|^3$ with some $p\in(0,\infty)$
and other rate bounds.
\end{proof}
\fi

%========================================%
\subsection{Proof of Theorems~\protect\ref{thm:NT} and \protect\ref{thm:EP}}
\label{sec:Proof-NT}
%==========%
%定理10・11%
In this subsection, we present proofs of Theorems~\ref{thm:NT} and \ref{thm:EP} that are 
about convergence properties of the BMS algorithm for a non-smoothly truncated kernel.
\begin{proof}[{Proof of Theorem~\ref{thm:NT}}]
%{eq:eq1}%
We first argue that for any $\epsilon>0$
there exists $\tau\in\bbN$ such that
the following holds: 
\begin{align}
\label{eq:eq1}
	\|\by_{\tau+1,i}-\by_{\tau,i}\|<\sqrt{\frac{\epsilon}{\bar{a}}},
	\quad\forall i\in[n].
\end{align}
Indeed, from the fact that $(L(\by_t))_{t\in\bbN}$ is monotonic
and bounded from above, for any $\epsilon>0$ there exists $\tau\in\bbN$
such that the inequality $L(\by_{\tau+1})-L(\by_\tau)<\epsilon$ holds.
Lemma~\ref{lem:A} then states that one has
\begin{align}
	L(\by_{\tau+1})-L(\by_\tau)\ge\bar{a}\|\by_{\tau+1}-\by_\tau\|^2,
\end{align}
with $\bar{a}=2g(0)/h^2$.
Combining these two, as well as the inequality
$\|\bm{u}\|\ge\|\bm{u}_i\|$ for any configuration
$\bm{u}=(\bu_1^\top,\ldots,\bu_n^\top)^\top\in\bbR^{nd}$
and any $i\in[n]$, one obtains~\eqref{eq:eq1}.

%証明戦略1%
Equation~\eqref{eq:eq1} means that at iteration $(\tau+1)$
there is no blurred data point which moves more
than $\sqrt{\epsilon/\bar{a}}$.
%In the following, we will prove that for a small enough $\epsilon$
%the BMS graph $\calG_{\by_\tau}$ is closed,
In the following, we will use this fact with a small enough $\epsilon$
to prove that the BMS graph $\calG_{\by_\tau}$ is closed,
which allows us to apply Theorem~\ref{thm:variant} or \ref{thm:BMS-CRE-EXP},
depending on the assumptions made in Theorem~\ref{thm:NT},
to conclude the desired convergence rate bounds.

%証明戦略2%
%Letting $\calG_{\by_t,*}$ be a component in $\calG_{\by_t}$ 
%and $\calV_{\by_t,*}$ be its vertex set, one can rewrite, 
Letting $\calG_{\by_t,*}$ be a component in $\calG_{\by_t}$ 
and $\calV_{\by_t,*}$ be its vertex set, 
one can rewrite, for any $i\in\calV_{\by_t,*}$, the BMS update rule \eqref{eq:updaBMS} as
\begin{align}
\label{eq:eq3}
	\by_{t+1,i}-\by_{t,i}
	=\frac{\sum_{j\in\calV_{\by_t,*}}G_{t,i,j}
	(\by_{t,j}-\by_{t,i})}
	{\sum_{j\in\calV_{\by_t,*}}G_{t,i,j}}.
\end{align}
One can see that the quantity
$G_{t,i,j}(\by_{t,j}-\by_{t,i})$
represents the influence of the blurred data point $\by_{t,j}$
on the update of $\by_{t,i}$. 
A unique characteristic of a non-smoothly truncated kernel
is that, for $\bu$ satisfying $G(\bu/h)>0$, 
$G(\bu/h)\bu$ is small only when $\bu$ itself is small.
%>「ベクトルに小さい」だが伝わるだろうということでそのまま
Consequently, if $\|\by_{t+1,i}-\by_{t,i}\|$ is small, then either:
\begin{itemize}
\item $\|\by_{t,j}-\by_{t,i}\|$ for $j$ joined to $i$ in $\calG_{\by_t,*}$ are all small, or
\item cancellation among $G_{t,i,j}(\by_{t,j}-\by_{t,i})$ 
for $j$ joined to $i$ in $\calG_{\by_t,*}$ takes place,
even when some $\|\by_{t,j}-\by_{t,i}\|$ are not small.
\end{itemize}
The following lemma formalizes the idea described in the second item above.

\begin{lemma}
\label{lem:LEM2}
Assume $\|\by_{t+1,i}-\by_{t,i}\|<\epsilon'$ for some $\epsilon'>0$.
Let $\bu\in\bbR^d$ be an arbitrary unit vector, and assume that there exists
$j\in\calV_{\by_t,*}$ such that $\bu^\top(\by_{t,i}-\by_{t,j})\ge\delta$
holds with some $\delta>0$.
Then there exists $\ell\in\calV_{\by_t,*}\backslash\{j\}$
such that $\bu^\top(\by_{t,\ell}-\by_{t,i})\ge\psi(\delta)$
with $\psi(\delta)=\alpha\delta/n-\epsilon'$,
where $\alpha$ is as defined in \eqref{eq:c2}. 
\end{lemma}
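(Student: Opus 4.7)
The plan is to argue by contrapositive: suppose that no $\ell\in\calV_{\by_t,*}\setminus\{j\}$ satisfies $\bu^\top(\by_{t,\ell}-\by_{t,i})\ge\psi(\delta)$; I will then deduce $\bu^\top(\by_{t+1,i}-\by_{t,i})<-\epsilon'$, which by Cauchy--Schwarz contradicts $\|\by_{t+1,i}-\by_{t,i}\|<\epsilon'$. The trivial regime $\psi(\delta)\le 0$ can be dispensed with immediately by taking $\ell=i$, for which $\bu^\top(\by_{t,i}-\by_{t,i})=0\ge\psi(\delta)$ and $i\neq j$ (since $\bu^\top(\by_{t,i}-\by_{t,j})\ge\delta>0$); so I may assume the non-trivial regime $\psi(\delta)>0$, i.e., $\epsilon'<\alpha\delta/n$, in what follows.

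The central step would rewrite the BMS update rule~\eqref{eq:updaBMS} along the direction $\bu$, using Proposition~\ref{prop:BMScomp} to restrict the sum to the component $\calV_{\by_t,*}$, as
\begin{align*}
\bu^\top(\by_{t+1,i}-\by_{t,i})
=\frac{\sum_{k\in\calV_{\by_t,*}} G_{t,i,k}\,\bu^\top(\by_{t,k}-\by_{t,i})}
{\sum_{k\in\calV_{\by_t,*}} G_{t,i,k}}.
\end{align*}
The pivotal input from the non-smooth-truncation assumption is the uniform positive lower bound $G_{t,i,k}\ge\alpha g(0)$ for every pair $(i,k)$ joined in $\calG_{\by_t,*}$, which is exactly the content of~\eqref{eq:c2}. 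When $j$ is joined to $i$, the $k=j$ contribution in the numerator is at most $-\alpha g(0)\delta$ by the hypothesis $\bu^\top(\by_{t,i}-\by_{t,j})\ge\delta$; each of the at most $n-1$ remaining terms is strictly less than $g(0)\psi(\delta)$ by the contrapositive assumption together with $G_{t,i,k}\le g(0)$. Combining these with the trivial denominator bound $\sum_k G_{t,i,k}\le n\,g(0)$ and substituting $\psi(\delta)=\alpha\delta/n-\epsilon'$, the algebra telescopes to $\bu^\top(\by_{t+1,i}-\by_{t,i})<-\alpha\delta/n^2-(n-1)\epsilon'/n\le-\epsilon'$, which is the desired contradiction.

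The main obstacle is the situation where the vertex $j$ supplied by the hypothesis is not directly joined to $i$: then $G_{t,i,j}=0$ and the decisive $-\alpha g(0)\delta$ term is absent from the numerator. I plan to handle this by exploiting connectivity of the component $\calG_{\by_t,*}$: take a shortest path $i=k_0,k_1,\ldots,k_r=j$ with $2\le r\le n-1$ and apply telescoping plus pigeonhole to the sequence $(\bu^\top\by_{t,k_s})_{s=0}^{r}$, which drops by at least $\delta$ overall, so that some consecutive pair $(k_{s^*},k_{s^*+1})$ has a $\bu$-drop of at least $\delta/r\ge\delta/n$. Under the contrapositive hypothesis, each intermediate vertex $k_1,\ldots,k_{r-1}$ (all in $\calV_{\by_t,*}\setminus\{j\}$) satisfies $\bu^\top(\by_{t,k_s}-\by_{t,i})<\psi(\delta)$, which pins the $\bu$-coordinates of the path from above; combined with the adjacency bound $G_{t,k_{s^*},k_{s^*+1}}\ge\alpha g(0)$ at the telescoping pair, the idea is to either exhibit one $k_s$ that already serves as $\ell$ or to re-apply the convex-combination bookkeeping of the adjacent case. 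Tracking constants so that exactly the factor $\alpha\delta/n$ (and not a smaller one) survives the pigeonhole loss along a path of length up to $n$ is the delicate accounting step I anticipate being the hardest part of the proof.
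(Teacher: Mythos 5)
Your treatment of the case where $j$ is joined to $i$ (i.e., $G_{t,i,j}\neq 0$) is correct, and it is essentially the paper's own proof run in contrapositive form: the paper likewise isolates the $j$-term of the update, bounds it by $-\alpha g(0)\delta$ using \eqref{eq:c2}, bounds the denominator by $ng(0)$, and then extracts $\ell$ as the index maximizing $G_{t,i,\ell}\bu^\top(\by_{t,\ell}-\by_{t,i})$ over $\calV_{\by_t,*}\backslash\{j\}$. Your constant-tracking in that case checks out, and your observation that the regime $\psi(\delta)\le 0$ is trivial (take $\ell=i$) is a harmless addition.

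The obstacle you flag --- $j$ not joined to $i$, so that $G_{t,i,j}=0$ and the decisive $-\alpha g(0)\delta$ term vanishes --- is genuine, but two things are worth knowing. First, the paper's proof does not treat it either: the step in \eqref{eq:eq5} that replaces the $j'=j$ summand by $-\alpha g(0)\delta$ is valid only when $G_{t,i,j}\ge\alpha g(0)$, so the paper implicitly assumes adjacency of $i$ and $j$. Second, your planned path-plus-pigeonhole repair cannot recover the stated constant: pigeonholing along a path of length $r$ only yields a drop of $\delta/r$ across some adjacent pair, and re-running the adjacent-case bookkeeping there produces roughly $\alpha\delta/n^{2}$ measured from the wrong base point. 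Indeed, the lemma as literally stated fails for non-adjacent $j$: take $d=1$, $n=3$, the Epanechnikov kernel with $h=1$ (so $\alpha=1$ and two vertices are joined iff their distance is below $1$), $\by_{t,1}=0$, $\by_{t,2}=-0.8$, $\by_{t,3}=-1.6$, $i=1$, $j=3$, $\bu=1$, $\delta=1.6$; then $\by_{t+1,1}=-0.4$, so the hypothesis holds with $\epsilon'=0.45$, yet $\psi(\delta)=1.6/3-0.45>0$ while $\bu^\top(\by_{t,\ell}-\by_{t,1})\in\{0,-0.8\}$ for $\ell\in\{1,2\}$. So the missing case is not something to be closed by more delicate accounting; the lemma must be read with the additional hypothesis $G_{t,i,j}\neq 0$ (this also means some care is needed when initializing the iteration in Lemma~\ref{lem:LEM3} from an arbitrary, possibly non-adjacent pair). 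With that hypothesis added, your argument is complete and coincides with the paper's.
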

\begin{proof}[{Proof of Lemma~\ref{lem:LEM2}}]
From the assumption one has 
\begin{align}
\label{eq:eq4}
	-\epsilon'\le\bu^\top(\by_{t+1,i}-\by_{t,i}).
\end{align}
The term $\bu^\top(\by_{t+1,i}-\by_{t,i})$ can be rewritten
via the BMS update rule \eqref{eq:eq3} as
\begin{align}
\label{eq:eq5}
\begin{split}
	&\bu^\top(\by_{t+1,i}-\by_{t,i})
	=\frac{\sum_{j'\in\calV_{\by_t,*}}G_{t,i,j'}\bu^\top(\by_{t,j'}-\by_{t,i})}
	{\sum_{j'\in\calV_{\by_t,*}}G_{t,i,j'}}
\\
	&\le\frac{\sum_{j'\in\calV_{\by_t,*}\backslash\{j\}}G_{t,i,j'}\bu^\top(\by_{t,j'}-\by_{t,i})
	-\alpha g(0)\delta}
	{\sum_{j'\in\calV_{\by_t,*}}G_{t,i,j'}}.
\end{split}
\end{align}
Combining \eqref{eq:eq4} and \eqref{eq:eq5}, one obtains
\begin{align}
\label{eq:eq6}
	\sum_{j'\in\calV_{\by_t,*}\backslash\{j\}}G_{t,i,j'}\bu^\top(\by_{t,j'}-\by_{t,i})
	&\ge\alpha g(0)\delta-\epsilon'\sum_{j'\in\calV_{\by_t,*}}G_{t,i,j'}
	\nonumber\\
	&\ge(\alpha\delta-\epsilon'n)g(0),
\end{align}
where we used the facts $G_{t,i,j'}\le g(0)$ and $|\calV_{\by_t,*}|\le n$.
Let $\ell\in\calV_{\by_t,*}$ be such that
$G_{t,i,\ell}\bu^\top(\by_{t,\ell}-\by_{t,i})$ is maximum
among the summands $\{G_{t,i,j'}\bu^\top(\by_{t,j'}-\by_{t,i})\mid
j'\in\calV_{\by_t,*}\backslash\{j\}\}$ 
on the left-hand side of \eqref{eq:eq6}.
One then has
\begin{align}
\begin{split}
	\bu^\top(\by_{t,\ell}-\by_{t,i})
	&\ge\frac{G_{t,i,\ell}}{g(0)}\bu^\top(\by_{t,\ell}-\by_{t,i})
\\
	&\ge\frac{1}{ng(0)}\sum_{j'\in\calV_{\by_t,*}\backslash\{j\}}G_{t,i,j'}\bu^\top(\by_{t,j'}-\by_{t,i})
\\
	&\ge\frac{\alpha\delta}{n}-\epsilon'=\psi(\delta),
\end{split}
\end{align}
where the last inequality follows from \eqref{eq:eq6}. 
\end{proof}

One can apply Lemma~\ref{lem:LEM2} repeatedly to prove the next lemma.
\begin{lemma}
\label{lem:LEM3}
Assume $\|\by_{t+1,i}-\by_{t,i}\|\le\epsilon'$, 
$\forall i\in\calV_{\by_t,*}$, for some $\epsilon'>0$.
Let $\bu\in\bbR^d$ be an arbitrary unit vector,
and assume that there exist $j_0,j_1\in\calV_{\by_t,*}$
such that $\bu^\top(\by_{t,j_1}-\by_{t,j_0})\ge\delta$ with some $\delta>0$.
Then there exists a sequence $j_2,j_3,\ldots$
with $j_k\in\calV_{\by_t,*}$ such that 
for any $k\in\bbN$ one has
$\bu^\top(\by_{t,j_{k+1}}-\by_{t,j_k})\ge\psi^k(\delta)$. 
\end{lemma}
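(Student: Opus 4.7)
The plan is to prove Lemma~\ref{lem:LEM3} by straightforward induction on $k$, applying Lemma~\ref{lem:LEM2} once at each step. Setting $\psi^0(\delta) := \delta$, the base case $k = 0$ is simply the hypothesis $\bu^\top(\by_{t,j_1}-\by_{t,j_0}) \ge \delta$. For the inductive step, suppose indices $j_0,\ldots,j_{k+1} \in \calV_{\by_t,*}$ have been constructed satisfying $\bu^\top(\by_{t,j_{k+1}}-\by_{t,j_k}) \ge \psi^k(\delta)$. Invoke Lemma~\ref{lem:LEM2} with $i := j_{k+1}$, $j := j_k$, and the scalar $\delta$ there replaced by $\psi^k(\delta)$; the single-step displacement hypothesis $\|\by_{t+1,j_{k+1}}-\by_{t,j_{k+1}}\| \le \epsilon'$ is granted by the standing assumption of Lemma~\ref{lem:LEM3}. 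The conclusion supplies some $\ell \in \calV_{\by_t,*} \setminus \{j_k\}$ with $\bu^\top(\by_{t,\ell}-\by_{t,j_{k+1}}) \ge \psi(\psi^k(\delta)) = \psi^{k+1}(\delta)$, and one sets $j_{k+2} := \ell$.

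The one delicate point is that Lemma~\ref{lem:LEM2} is stated for a strictly positive scalar, whereas the iterates $\psi^k(\delta) = (\alpha/n)^k\delta - \epsilon'\sum_{i=0}^{k-1}(\alpha/n)^i$ converge under $\alpha/n < 1$ to the negative fixed point $-\epsilon' n/(n-\alpha)$ and so are eventually non-positive. This is not a genuine obstruction: once $\psi^k(\delta) \le 0$ the inequality $\bu^\top(\by_{t,j_{k+1}}-\by_{t,j_k}) \ge \psi^k(\delta)$ is satisfied trivially, for instance by setting $j_{k+1} := j_k$ (yielding inner product zero), so the sequence can be completed to every $k \in \bbN$. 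In the downstream uses of Lemma~\ref{lem:LEM3} toward Theorems~\ref{thm:NT} and \ref{thm:EP}, only the regime $\psi^k(\delta) > 0$ actually matters, since that is where the chain produces a genuinely growing directional displacement and thus an eventual contradiction with the finiteness of $\calV_{\by_t,*}$.

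The main obstacle is conceptual rather than technical: Lemma~\ref{lem:LEM2} has already done the real work of converting smallness of $\|\by_{t+1,i}-\by_{t,i}\|$ together with a directional gap $\delta$ at $i$ into a further directional gap $\psi(\delta)$ emanating from $i$, and Lemma~\ref{lem:LEM3} is essentially the observation that this mechanism can be iterated, which requires nothing beyond the monotonicity of the affine map $\psi$ and a careful bookkeeping choice of the pair $(i,j)$ fed into Lemma~\ref{lem:LEM2} at each step.
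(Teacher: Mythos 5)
Your proof is correct and follows essentially the same route as the paper, which simply states that Lemma~\ref{lem:LEM3} follows by applying Lemma~\ref{lem:LEM2} repeatedly; your induction with $i:=j_{k+1}$, $j:=j_k$, and $\delta$ replaced by $\psi^k(\delta)$ is exactly that repeated application. Your explicit handling of the regime where $\psi^k(\delta)\le0$ (where Lemma~\ref{lem:LEM2} no longer applies but the claimed inequality is trivially satisfiable) is a detail the paper leaves implicit, and you correctly note it is harmless for the downstream use in Theorem~\ref{thm:NT}.
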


Noting that
\begin{align}
	\psi^k(\delta)
	=\left(\frac{\alpha}{n}\right)^{k-1}\delta
	-\epsilon'\frac{1-(\alpha/n)^{k-1}}{1-\alpha/n}
	>\left(\frac{\alpha}{n}\right)^{k-1}\delta
	-\frac{\epsilon'}{1-\alpha},
\end{align}
$\delta$ in Lemma~\ref{lem:LEM3} cannot be larger than $\delta_0$ defined as 
\begin{align}
	\delta_0=\left(\frac{n}{\alpha}\right)^{n-1}\frac{\epsilon'}{1-\alpha}.
\end{align}
Indeed, if otherwise, one would have
$\psi^k(\delta)>0$ for all $k\in[n-1]$.
It would in turn imply that the sequence 
$\bu^\top\by_{t,j_0},\bu^\top\by_{t,j_1},\ldots,\bu^\top\by_{t,j_n}$
is strictly increasing, leading to contradiction
since there are only $n$ blurring data points in total.
Noting that $\|\by\|=\sup_{\bu:\|\bu\|=1}\bu^\top\by$, 
the above argument have shown that if $\|\by_{t+1,i}-\by_{t,i}\|\le\epsilon'$, 
$\forall i\in\calV_{\by_t,*}$, then for any $i,j\in\calV_{\by_t,*}$
the inequality $\|\by_{t,i}-\by_{t,j}\|<\delta_0$ should hold. 

Collecting the arguments so far, 
if one takes $\epsilon$ small enough so that the inequality 
\begin{align}
	\delta_0=\left(\frac{n}{\alpha}\right)^{n-1}\frac{\sqrt{\epsilon}{\bar{a}}}{1-\alpha}<\beta h
\end{align}
holds, then by taking $\tau\in\bbN$ that satisfies \eqref{eq:eq1},
there should be no indices $i,j$ belonging to
the same component of $\calG_{\by_\tau}$
such that $\|\by_{\tau,i}-\by_{\tau,j}\|\ge\beta h$.
It implies that all the components of $\calG_{\by_\tau}$
are complete, and therefore $\calG_{\by_\tau}$ is closed.
This completes the proof. 
\end{proof}

\begin{proof}[{Proof of Theorem \ref{thm:EP}}]
%集中後は平均化%
Under the additional assumption of Theorem \ref{thm:EP},
it holds that $g(u^2/2)=g(0)$ for any $u\in[0,\beta']$.
Thus, once $\rho_\tau\le\beta' h$ holds,
$\by_{\tau+1}=\by_{\tau+2}=\cdots$ holds. 
\end{proof}

%========================================%
\section{Conclusion}% and Future Work
\label{sec:Conclusion}
%==========%
%解析方針%
In this study, we analyzed convergence properties of the BMS algorithm by utilizing 
its interpretation as an optimization procedure for the objective function $L$,
which is known but has been underutilized in existing convergence studies.
%収束保証%
Consequently, we presented a convergence guarantee 
when the blurred data point sequences can 
converge to multiple points and yield multiple clusters,
when the algorithm is based on a $C^1$ subanalytic kernel 
(which includes the biweight and triweight kernels), or a non-smoothly 
truncated kernel (which includes the cosine and Epanechnikov kernels).
%収束レート%
We also showed that the algorithm for these kernels 
typically achieves the cubic convergence,
and the finite-time convergence for the Epanechnikov kernel.
%未解決の疑問が少し残っている%
On the other hand, it is unanswered whether the case \hyl{e2} actually 
occurs, for which the algorithm has only a weaker convergence rate bound.
%On the other hand, the question whether the case \hyl{e2} occurs 
%or not is unanswered; 
%we leave this issue to future research.
%表にまとめた%
These results are summarized in Table~\ref{tab:Kernel}.

%==========%
\begin{table}[!t]
{\centering%
\renewcommand{\tabcolsep}{1pt}
\renewcommand{\arraystretch}{0.4}
\caption{%
Fulfillment of assumptions of kernels 
(satisfying Assumptions~\ref{asm:LP} and \ref{asm:DDif}),
applicable theorems, and convergence of the BMS algorithm.}
\label{tab:Kernel}
\scalebox{0.775}{%
\begin{tabular}{c|c|cc|c|rc}
\toprule
Kernel & Profile $k(u)\propto$ & 
	Asm.\,\ref{asm:RS} & Asm.\,\ref{asm:LCG} & 
	Applicable thms. & 
	\multicolumn{2}{c}{Convergence}\\
\midrule
Epanechnikov & $(1-u)_+$ & 
	\checkmark & $\times$ &
	Thms.\,\ref{thm:Cheng}--\ref{thm:BMS-GCG}, \ref{thm:BMS-CRE-GEN}, \ref{thm:Trivial}, \ref{thm:NT}, \ref{thm:EP} & 
	$\circledcirc$ & Thm.\,\ref{thm:EP}\\
Cosine & $\cos(\frac{\pi u^{1/2}}{2})\bbI\!(u\!\le\!1)$ & 
	\checkmark & $\times$ &
	Thms.\,\ref{thm:Cheng}--\ref{thm:BMS-GCG}, \ref{thm:BMS-CRE-GEN}, \ref{thm:Trivial}, \ref{thm:NT} & 
	$\oplus$ & Thm.\,\ref{thm:NT}\\
Quadweight & $\{(1-u)_+\}^4$ & 
	\checkmark & \checkmark &
	Thms.\,\ref{thm:Cheng}--\ref{thm:BMS-CRE-EXP} & 
	$\ominus$ & Thms.\,\ref{thm:BMS-CG}, \ref{thm:BMS-CRE-GEN}, \ref{thm:BMS-CRE-EXP}\\
Triweight & $\{(1-u)_+\}^3$ & 
	\checkmark & \checkmark &
	Thms.\,\ref{thm:Cheng}--\ref{thm:BMS-CRE-EXP} & 
	$\ominus$ & Thms.\,\ref{thm:BMS-CG}, \ref{thm:BMS-CRE-GEN}, \ref{thm:BMS-CRE-EXP}\\
Biweight & $\{(1-u)_+\}^2$ & 
	\checkmark & \checkmark &
	Thms.\,\ref{thm:Cheng}--\ref{thm:BMS-CRE-EXP} & 
	$\ominus$ & Thms.\,\ref{thm:BMS-CG}, \ref{thm:BMS-CRE-GEN}, \ref{thm:BMS-CRE-EXP}\\
-- & $\{(1-u)_+\}^{3/2}$ & 
	\checkmark & $\times$ &
	Thms.\,\ref{thm:Cheng}--\ref{thm:BMS-GCG}, \ref{thm:BMS-CRE-GEN}--\ref{thm:BMS-CRE-EXP} & 
	$\bigtriangleup$ & Thms.\,\ref{thm:BMS-GCG}, \ref{thm:BMS-CRE-GEN}, \ref{thm:BMS-CRE-EXP}\\
Gaussian & $e^{-u}$ & 
	\checkmark & \checkmark &
	Thms.\,\ref{thm:Cheng}--\ref{thm:BMS-CRE-EXP} & 
	$\square$ & Thm.\,\ref{thm:BMS-CRE-EXP}\\
Logistic & $\frac{1}{e^{u^{1/2}}+2+e^{-u^{1/2}}}$ &
	\checkmark & \checkmark &
	Thms.\,\ref{thm:Cheng}--\ref{thm:BMS-CRE-EXP} & 
	$\square$ & Thm.\,\ref{thm:BMS-CRE-EXP}\\
Cauchy & $\frac{1}{1+u}$ & 
	\checkmark & \checkmark &
	Thms.\,\ref{thm:Cheng}--\ref{thm:BMS-CRE-EXP} & 
	$\square$ & Thm.\,\ref{thm:BMS-CRE-EXP}\\
Tricube & $\{(1-u^{3/2})_+\}^3$ & 
	$\times$ & \checkmark &
	-- & 
	$\star$ & --\\
\bottomrule
\end{tabular}}
\par\smallskip}
{\small%
Convergent points $\{\bar{\by}_i\}_{i\in[n]}$ 
/ Convergence rate bound of $(\by_t)_{t\in\bbN}$:\par
$\circledcirc$:
possibly multiple points / finite number of iterations;\par
$\oplus$:
possibly multiple points / cubic;\par
$\ominus$:
possibly multiple points / cubic in most cases,\par
\hphantom{$\ominus$:}
polynomial at worst;\par
$\bigtriangleup$:
possibly multiple points / cubic in most cases,\par
\hphantom{$\bigtriangleup$:}
polynomial at worst, under conditions \hyl{b1} and \hyl{b2};\par
$\square$:
single point / cubic.\par
\if0
The blurred data point sequences $(\by_{t,i})_{t\in\bbN}, i\in[n]$ are ensured to converge to\par
$\circledcirc$:
possibly multiple points in a finite number of iterations;\par
$\oplus$:
possibly multiple points in the cubic rate with $(\by_t)_{t\in\bbN}$;\par
$\ominus$:
possibly multiple points in the cubic rate with $(\by_t)_{t\in\bbN}$ in most cases,\par
\hphantom{$\ominus$:}
or in the polynomial rate at worst;\par
$\bigtriangleup$:
possibly multiple points in the cubic rate with $(\by_t)_{t\in\bbN}$ in most cases,\par
\hphantom{$\bigtriangleup$:}
or in the polynomial rate at worst, under the conditions \hyl{b1} and \hyl{b2};\par
$\square$:
a single point in the cubic rate with $(\by_t)_{t\in\bbN}$.\par
\fi
$\star$:
Not only convergence of $(\by_t)_{t\in\bbN}$ but even that of $(L(\by_t))_{t\in\bbN}$ is not ensured.}
\end{table}

%==========%
%拡張できる場合%
%The same proof strategies to ours can be employed to show 
Our proof strategies can be employed to show 
that these results hold as well for variants of the BMS algorithm derived for 
the generalized objective function $\sum_{i,j=1}^n w_{i,j}K(\frac{\by_i-\by_j}{h_{i,j}})$ 
with weights $w_{i,j}=w_{j,i}\in(0,\infty)$ and bandwidths $h_{i,j}=h_{j,i}\in(0,\infty)$.
%that have the symmetry $w_{i,j}=w_{j,i}$ and $h_{i,j}=h_{j,i}$.

%==========%
%%拡張できる場合%
%The same proof strategies in this paper can be employed to show 
%that these results hold as well for variants of the BMS algorithm derived for 
%the generalized objective function $\sum_{i,j=1}^n w_{i,j}K(\frac{\by_i-\by_j}{h_{i,j}})$ 
%with positive weights $w_{i,j}\in(0,\infty)$ and bandwidths $h_{i,j}\in(0,\infty)$
%that have the symmetry $w_{i,j}=w_{j,i}$ and $h_{i,j}=h_{j,i}$.
%%今後の展望%
%In addition, the novel interpretation of the BMS algorithm, 
%which performs data clustering through 
%maximizing the sum of kernel-values for data pairs,
%may lead to the development of new clustering algorithms.

%========================================%
\section*{Acknowledgment}
%==========%
This work was supported by Grant-in-Aid for JSPS Fellows, Number 20J23367.
%========================================%
\bibliographystyle{IEEEtran}
\bibliography{bibtex}
\clearpage
\onecolumn
%========================================%
\renewcommand{\theequation}{S\arabic{equation}}
\renewcommand{\thelemma}{S\arabic{lemma}}
\renewcommand{\theclaim}{S\arabic{claim}}
\renewcommand{\thefigure}{S\arabic{figure}}
\setcounter{equation}{0}
\setcounter{lemma}{0}
\setcounter{claim}{0}
\setcounter{figure}{0}
%========================================%
\section*{S1\quad Proof of Theorems~\protect\ref{thm:Cheng} and \protect\ref{thm:variant}}
\label{sec:Ape-Cheng}
%========================================%
\subsection*{S1.1\quad Proof of Theorem~\protect\ref{thm:Cheng}}
%==========%
%定理1は前半証明済みで後半自明だが，他の定理も基づくので証明を書いておく%
%Considering that Theorem~\ref{thm:Cheng} and its na\"ive extension 
%(Theorem~\ref{thm:variant}) are used in the proof of 
%other theoretical results, we here write proofs of them.
We here provide a proof of Theorem~\ref{thm:Cheng} 
for the sake of completeness. 
\begin{proof}[{Proof of Theorem~\ref{thm:Cheng}}]
%==========%
\textbf{Proof of (\ref{eq:Claim1}):}
Proposition \ref{prop:KG} \hyl{a5} ensures 
$G_{t,i,j}\ge0$ for any $t\in\bbN$ and all $i, j\in[n]$,
and hence one has 
\begin{align}
	a_{t+1,\bu}
	&=\min\biggl\{\bu^\top\frac{\sum_{j=1}^n G_{t,i,j}\by_{t,j}}{\sum_{j=1}^n G_{t,i,j}}\biggr\}_{i\in[n]}
	\quad(\because\eqref{eq:updaBMS}, \eqref{eq:ab})
\nonumber\\
	&\ge\min\biggl\{\frac{\sum_{j=1}^n G_{t,i,j}\min\{\bu^\top\by_{t,k}\}_{k\in[n]}}{\sum_{j=1}^n G_{t,i,j}}\biggr\}_{i\in[n]}
	\quad(\because G_{t,i,j}\ge0)
\nonumber\\
	&=\min\{a_{t,\bu}\}_{i\in[n]}
	=a_{t,\bu}.
\end{align}
One can derive $b_{t+1,\bu}\le b_{t,\bu}$ in a similar way.
These results, together with $a_{t,\bu}\le b_{t,\bu}$ 
ensured from the definition of $a_{t,\bu}$ and $b_{t,\bu}$,
prove $[a_{t,\bu},b_{t,\bu}]\supseteq
[a_{t+1,\bu},b_{t+1,\bu}]$.

%==========%
\textbf{Proof of (\ref{eq:Claim3}):}
As any non-empty closed convex set is equal to
the intersection of its supporting halfspaces~\cite[Theorem~4.5]{Brondsted1983}, 
the convex hull $\Conv(\{\by_{t,i}\}_{i\in[n]})$ of the set 
$\{\by_{t,i}\}_{i\in[n]}$ of blurred data points is represented as
\begin{align}
	\Conv(\{\by_{t,i}\}_{i\in[n]})
	=\bigcap_{\|\bu\|=1}\{\bv\in\bbR^d\mid a_{t,\bu}\le\bu^\top\bv\le b_{t,\bu}\}.
\end{align}
According to this representation and \eqref{eq:Claim1}, 
one has $\Conv(\{\by_{t,i}\}_{i\in[n]})\supseteq
\Conv(\{\by_{t+1,i}\}_{i\in[n]})$,
since, for any $\bv\in\Conv(\{\by_{t+1,i}\}_{i\in[n]})$, 
one has $\bv\in\{\bv\in\bbR^d\mid a_{t+1,\bu}\le\bu^\top\bv\le b_{t+1,\bu}\}
\subseteq\{\bv\in\bbR^d\mid a_{t,\bu}\le\bu^\top\bv\le b_{t,\bu}\}$ 
for any $\bu\in\bbR^d$ with $\|\bu\|=1$.
Also, due to the initialization rule $\by_{1,1}=\bx_1,\ldots,\by_{1,n}=\bx_n$,
%\eqref{eq:initMS} 
one has $\Conv(\{\bx_i\}_{i\in[n]})=\Conv(\{\by_{1,i}\}_{i\in[n]})$.

%==========%
\textbf{Proof of (\ref{eq:diamrate}):}
Take an arbitrary $\bu\in\bbR^d$ with $\|\bu\|=1$,
and let $c_{t,\bu}\coloneq(a_{t,\bu}+b_{t,\bu})/2$.
We consider two cases according to the signs of 
$\sum_{j=1}^n\bbI(\bu^\top\by_{t,j}\le c_{t,\bu})
-\sum_{j=1}^n\bbI(\bu^\top\by_{t,j}>c_{t,\bu})$.

We first consider the case
$\sum_{j=1}^n\bbI(\bu^\top\by_{t,j}\le c_{t,\bu})
-\sum_{j=1}^n\bbI(\bu^\top\by_{t,j}>c_{t,\bu})\ge0$.
Then, one has that, for every $i\in[n]$, 
\begin{align}
\label{eq:Cheng3-3}
\begin{split}
	b_{t,\bu}-\bu^\top\by_{t+1,i}
	&=\frac{\sum_{j=1}^n G_{t,i,j}(b_{t,\bu}-\bu^\top\by_{t,j})}{\sum_{j=1}^n G_{t,i,j}}
\\
	&\ge\frac{\sum_{j=1}^n g((d_t/h)^2/2) (b_{t,\bu}-\bu^\top\by_{t,j})}{\sum_{j=1}^n g(0)}
%	\quad(\because \substack{g((d_t/h)^2/2)\le G_{t,i,j}\le g(0)\\b_{t,\bu}-\bu^\top\by_{t,j}\ge0})
	\quad(\because g((d_t/h)^2/2)\le G_{t,i,j}\le g(0),\; b_{t,\bu}-\bu^\top\by_{t,j}\ge0)
\\
	&\ge\frac{g((d_t/h)^2/2) \sum_{j=1}^n \bbI(\bu^\top\by_{t,j}\le c_{t,\bu})(b_{t,\bu}-\bu^\top\by_{t,j})}{n g(0)}
	\quad(\because 0\le \bbI(\bu^\top\by_{t,j}\le c_{t,\bu})\le 1)
\\
	&\ge\frac{g((d_t/h)^2/2) \sum_{j=1}^n \bbI(\bu^\top\by_{t,j}\le c_{t,\bu})(b_{t,\bu}-c_{t,\bu})}{n g(0)}
	\quad(\because\bbI(u\le v)u\le \bbI(u\le v)v, \forall u,v\in\bbR)
\\
	&\ge\frac{g((d_t/h)^2/2) (b_{t,\bu}-c_{t,\bu})}{2 g(0)}
	\quad(\because{\textstyle\sum_{j=1}^n\bbI(\bu^\top\by_{t,j}\le c_{t,\bu}})
	\ge\tfrac{n}{2})
\\
	&=\frac{g((d_t/h)^2/2) (b_{t,\bu}-a_{t,\bu})}{4 g(0)}
	\quad(\because b_{t,\bu}-c_{t,\bu}=\tfrac{b_{t,\bu}-a_{t,\bu}}{2}).
\end{split}
\end{align}
Therefore, one has
\begin{align}
\label{eq:Cheng3-4}
	\begin{split}
	b_{t+1,\bu}-a_{t+1,\bu}
	&\le b_{t+1,\bu}-a_{t,\bu}
	\quad(\because\eqref{eq:Claim1})
\\
	&=(b_{t,\bu}-a_{t,\bu})-(b_{t,\bu}-\max\{\bu^\top\by_{t+1,i}\}_{i\in[n]})
\\
	&\le(b_{t,\bu}-a_{t,\bu})-\frac{g((d_t/h)^2/2) (b_{t,\bu}-a_{t,\bu})}{4 g(0)}
	\quad(\because\eqref{eq:Cheng3-3})
\\
	&=\biggl(1-\frac{g((d_t/h)^2/2)}{4g(0)}\biggr)(b_{t,\bu}-a_{t,\bu}).
	\end{split}
\end{align}
In the remaining case $\sum_{j=1}^n\bbI(\bu^\top\by_{t,j}\le c_{t,\bu})-\sum_{j=1}^n\bbI(\bu^\top\by_{t,j}>c_{t,\bu})\le0$,
one can prove \eqref{eq:Cheng3-4} in a similar manner.
Thus, one proves \eqref{eq:diamrate} via letting 
$\bu=\bu_{t+1}\coloneq\arg\max_{\|\bu\|=1}(b_{t+1,\bu}-a_{t+1,\bu})$, as 
\begin{align}
\label{eq:devo}
  d_{t+1}
  =b_{t+1,\bu_{t+1}}-a_{t+1,\bu_{t+1}}
	\le\biggl(1-\frac{g((d_t/h)^2/2)}{4g(0)}\biggr)(b_{t,\bu_{t+1}}-a_{t,\bu_{t+1}})
	\le%\biggl(1-\frac{g((d_t/h)^2/2)}{4g(0)}\biggr)(b_{t,\bu_t}-a_{t,\bu_t})
	%=
  \biggl(1-\frac{g((d_t/h)^2/2)}{4g(0)}\biggr) d_t.
\end{align}

%==========%
\textbf{Proof of the convergence of $(d_t)_{t\in\bbN}$:}
\eqref{eq:devo} is rewritten as $\frac{d_{t+1}}{d_t}\le1-\frac{g((d_t/h)^2/2)}{4g(0)}\le1$, 
which implies that $(d_t)_{t\in\bbN}$ is non-increasing.
Thus, since $g(u)$ is non-increasing in $u$, one has
$g((d_t/h)^2/2)\ge g((d_1/h)^2/2)$, and therefore 
$\frac{d_{t+1}}{d_t}\le1-\frac{g((d_1/h)^2/2)}{4g(0)}$,
implying that $(d_t)_{t\in\bbN}$ approaches 0 when $g((d_1/h)^2/2)>0$.

%==========%
\textbf{Proof of the convergence of $(\by_t)_{t\in\bbN}$ and $(\by_{t,i})_{t\in\bbN}$'s:}
For any $\epsilon>0$, letting $\tau$ be a positive integer larger than 
$\log(\frac{\epsilon}{\sqrt{n}d_1})/\log(1-\frac{g((d_1/h)^2/2)}{4g(0)})-1$
so that $\epsilon>\sqrt{n}d_1(1-\frac{g((d_1/h)^2/2)}{4g(0)})^{\tau-1}\ge \sqrt{n}d_\tau$,
one has $\|\by_t-\by_{t'}\|\le \sqrt{n}d_\tau<\epsilon$ for any $t,t'\in\bbN$ such that $t, t'\ge \tau$
since $\by_{t,i},\by_{t',i}\in\Conv(\{\by_{\tau,j}\}_{j\in[n]})$ and $\|\by_{t,i}-\by_{t',i}\|\le d_\tau$ for all $i\in[n]$.
This result implies that the configuration sequence $(\by_t)_{t\in\bbN}$
is a Cauchy sequence, and hence converges.
Also, the convergence of $(\by_t)_{t\in\bbN}$ implies 
the convergence of $(\by_{t,i})_{t\in\bbN}$, $i\in[n]$.
If convergent points of $(\by_{t,i})_{t\in\bbN}$, $i\in[n]$ were different,
$d_t$ does not converge to 0 as $t\to\infty$,
which is a contradiction.
Therefore, $(\by_{t,i})_{t\in\bbN}$'s converge to a single point.

%==========%
\textbf{Proof of the convergence rate bound of $(\by_t)_{t\in\bbN}$ and $(\by_{t,i})_{t\in\bbN}$'s:}
Let $\bz_1\in\bbR^d$ denote the point to which
the blurred data point sequences
$(\by_{t,1})_{t\in\bbN},\ldots,(\by_{t,n})_{t\in\bbN}$ converge. 
$\bz_1\in\Conv(\{\by_{t,i}\}_{i\in[n]})$ implies $\|\by_{t,i}-\bz_1\|\le d_t$.
Thus, one can find that its convergence rate is at least the exponential rate:
$\|\by_{t,i}-\bz_1\|=O((1-\frac{g((d_1/h)^2/2)}{4g(0)})^t)$ for all $i\in[n]$,
which implies $\|\by_t-\bz_1\otimes\bm{1}_n\|=O((1-\frac{g((d_1/h)^2/2)}{4g(0)})^t)$.

\end{proof}

%========================================%
\subsection*{S1.2\quad Proof of Theorem~\protect\ref{thm:variant}}
%==========%
\begin{proof}[{Proof of Theorem~\ref{thm:variant}}]
The assumption that the BMS graph $\calG_{\by_\tau}$
is closed implies that it consists of $M:=M_{\by_\tau}$
components which are complete.
If $M=1$, then one can directly apply Theorem~\ref{thm:Cheng}
to conclude the proof. 
In the following we assume $M\ge2$,
and let $\{\calG_{\by_\tau,m}\}_{m\in[M]}$ be
the components in $\calG_{\by_\tau}$ 
and $\calV_{\by_\tau,m}$ be the vertex set
of $\calG_{\by_\tau,m}$ for $m\in[M]$.

We claim: For any $m,m'\in[M]$ with $m\not=m'$, 
one has $\Conv(\{\by_{\tau,i}\}_{i\in\calV_{\by_\tau,m}})\cap\Conv(\{\by_{\tau,i}\}_{i\in\calV_{\by_\tau,m'}})=\emptyset$.
Once we admit it, then 
one has $\Conv(\{\by_{\tau+1,i}\}_{i\in\calV_{\by_\tau,m}})\cap\Conv(\{\by_{\tau+1,i}\}_{i\in\calV_{\by_\tau,m'}})=\emptyset$, 
since one has
$\Conv(\{\by_{\tau,i}\}_{i\in\calV_{\by_\tau,m}})\supseteq\Conv(\{\by_{\tau+1,i}\}_{i\in\calV_{\by_\tau,m}})$ 
from Proposition~\ref{prop:BMScomp}. 
By induction, one furthermore has
$\Conv(\{\by_{t,i}\}_{i\in\calV_{\by_\tau,m}})\cap\Conv(\{\by_{t,i}\}_{i\in\calV_{\by_\tau,m'}})=\emptyset$ for all $t\ge\tau$.
This implies that the time evolution of the blurred data points 
with indices in $\calV_{\by_\tau,m}$ 
is not affected by those with indices in $\calV_{\by_\tau,m}$, $m'\not=m$,
so that one can apply Theorem~\ref{thm:Cheng} in a component-wise
manner to conclude the proof. 
%%%%%%%%%
%\if0
%%  
%This theorem can be proved in a similar strategy to that for Theorem~\ref{thm:Cheng} 
%by treating each $\{\by_{\tau,i}\}_{i\in\calV_{\by_\tau,m}}$ with $m\in[M_{\by_\tau}]$ 
%as the initial points $\{\bx_i\}_{i\in[n]}$ in Theorem~\ref{thm:Cheng}:
%If $\Conv(\{\by_{\tau,i}\}_{i\in\calV_{\by_\tau,m}})\cap\Conv(\{\by_{\tau,i}\}_{i\in\calV_{\by_\tau,m'}})=\emptyset$ 
%holds for all different $m,m'\in[M_{\by_\tau}]$,
%the proof strategy of Theorem~\ref{thm:Cheng} clarifies that, for every $m\in[M_{\by_\tau}]$,
%one can take vertex sets $\calV_{\by_{\tau+1},m},\calV_{\by_{\tau+2},m},\cdots$ that satisfy
%$\Conv(\{\by_{\tau,i}\}_{i\in\calV_{\by_\tau,m}})\supseteq\Conv(\{\by_{\tau,i}\}_{i\in\calV_{\by_{\tau+1},m}})\supseteq\cdots$
%like \eqref{eq:Claim3}.
%%
%This implies that all the indices $i\in\calV_{\by_\tau,m}=\calV_{\by_{\tau+1},m}=\cdots$
%are disconnected to other indices in $\calV_{\by_t,m'}$ ($m'\neq m$) for any $t\ge\tau$,
%and ensures the convergence due to Theorem~\ref{thm:Cheng} under 
%the assumption that $\calG_{\by_\tau,1},\ldots,\calG_{\by_\tau,M_{\by_\tau}}$ are complete.
%\fi
%%%%%%%%%

Therefore, all that remains is to prove that  
$\Conv(\{\by_{\tau,i}\}_{i\in\calV_{\by_\tau,m}})\cap\Conv(\{\by_{\tau,i}\}_{i\in\calV_{\by_\tau,m'}})=\emptyset$ 
for any $m,m'\in[M]$, $m\not=m'$, under the assumption that 
all the components $\calG_{\by_\tau,1},\ldots,\calG_{\by_\tau,M}$
of $\calG_{\by_\tau}$ are complete.
We prove this claim using Lemma~\ref{lem:MD} below: 
Assume to the contrary that 
$\Conv(\{\by_{\tau,i}\}_{i\in\calV_{\by_\tau,m}})\cap\Conv(\{\by_{\tau,i}\}_{i\in\calV_{\by_\tau,m'}})\not=\emptyset$ holds for some $m,m'$ with $m\not=m'$. 
Letting $\calP_1=\{\by_{\tau,i}\}_{i\in\calV_{\by_\tau,m}}$ 
and $\calP_2=\{\by_{\tau,i}\}_{i\in\calV_{\by_\tau,m'}}$,
the above counterfactual assumption is rewritten as
$\calC_1\cap\calC_2\not=\emptyset$,
with $\calC_k:=\Conv\calP_k$, $k=1,2$. 
On the other hand, since $\calG_{\by_\tau,m}$ and $\calG_{\by_\tau,m'}$
are complete by assumption, one has $L_1,L_2\le\beta h$
for $L_k:=\min_{\bu,\bv\in\calP_k}\|\bu-\bv\|$, $k=1,2$. 
Then from Lemma~\ref{lem:MD} one would have
$\min_{\bu\in\calP_1, \bv\in\calP_2}\|\bu-\bv\|<\beta h$,
which implies that there should be at least one edge
joining $\calG_{\by_\tau,m}$ and $\calG_{\by_\tau,m'}$. 
This, however, contradicts the assumption that
$\calG_{\by_\tau,m}$ and $\calG_{\by_\tau,m'}$ are components,
completing the proof.
\end{proof}

%One technical difficulty in the proof of Theorem~\ref{thm:variant}
%is the proof of Lemma~\ref{lem:MD};
%for point sets $\calP_1, \calP_2$,
%this lemma gives an upper bound of the minimum distance between 
%a point in $\calP_1$ and that in $\calP_2$
%when the convex hulls of $\calP_1$ and $\calP_2$ intersect.

\begin{lemma}
\label{lem:MD}
Let $\calP_1,\calP_2$ be sets of points in $\bbR^d$,
and let $\calC_1,\calC_2$ be the convex hulls of $\calP_1,\calP_2$, respectively.
For $k=1,2$, let $L_k\coloneq\max_{\bu,\bv\in \calP_k}\|\bu-\bv\|\ge0$. 
Assume that $\calC_1\cap \calC_2\not=\emptyset$.
One then has $\min_{\bu\in \calP_1,\bv\in\calP_2}\|\bu-\bv\|=0$ when $L_1=L_2=0$
and $\min_{\bu\in \calP_1,\bv\in\calP_2}\|\bu-\bv\|<\sqrt{(L_1^2+L_2^2)/2}$ otherwise.
%That is, when the convex hulls of $\calP_1$ and $\calP_2$ intersect,
%the minimum distance between the points in $\calP_1$ and those in $\calP_2$
%is strictly less than $\sqrt{(L_1^2+L_2^2)/2}$. 
\end{lemma}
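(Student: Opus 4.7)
The plan is to dispose of the degenerate case first and then reduce the main claim to a single weighted squared-distance identity. When $L_1=L_2=0$, all members of each $\calP_k$ coincide, so each $\calC_k$ is a single point; the assumption $\calC_1\cap\calC_2\neq\emptyset$ then forces these two points to be the same, giving $\min_{\bu\in\calP_1,\bv\in\calP_2}\|\bu-\bv\|=0$.

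For the remaining case (at least one of $L_1,L_2$ positive), I would pick any $\bw\in\calC_1\cap\calC_2$ and, by definition of the convex hull, write $\bw=\sum_i\alpha_i\bu_i=\sum_j\beta_j\bv_j$ with $\bu_i\in\calP_1$, $\bv_j\in\calP_2$, and non-negative weights each summing to $1$. The core of the argument is the polarization-type identity
\begin{equation*}
S\coloneq\sum_{i,j}\alpha_i\beta_j\|\bu_i-\bv_j\|^2
=\frac{1}{2}\sum_{i,i'}\alpha_i\alpha_{i'}\|\bu_i-\bu_{i'}\|^2
+\frac{1}{2}\sum_{j,j'}\beta_j\beta_{j'}\|\bv_j-\bv_{j'}\|^2,
\end{equation*}
which follows by expanding every squared Euclidean norm and using the two relations $\sum_{i,j}\alpha_i\beta_j\bu_i^\top\bv_j=\|\bw\|^2$ and $\sum_i\alpha_i\|\bu_i\|^2=\|\bw\|^2+\tfrac{1}{2}\sum_{i,i'}\alpha_i\alpha_{i'}\|\bu_i-\bu_{i'}\|^2$, together with the analogous identity for the $\bv$'s.

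The final step bounds the right-hand side. Since the diagonal $i=i'$ terms vanish while each off-diagonal pair is at most $L_1^2$, one has $\sum_{i,i'}\alpha_i\alpha_{i'}\|\bu_i-\bu_{i'}\|^2\le L_1^2(1-\sum_i\alpha_i^2)<L_1^2$, and similarly for $\beta$, because $\sum_i\alpha_i^2>0$ for any probability vector. Hence $S<(L_1^2+L_2^2)/2$ whenever $L_1>0$ or $L_2>0$. Combined with the elementary bound $\min_{i,j}\|\bu_i-\bv_j\|^2\le S$ (as $S$ is a convex combination of the individual squared distances with weights $\alpha_i\beta_j$ summing to $1$), this gives the claimed strict inequality. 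The only delicate point is securing strictness, and it is handled automatically by the vanishing of the diagonal contributions in the squared-distance identity, so no substantial obstacle remains.
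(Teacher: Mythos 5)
Your proof is correct and follows essentially the same route as the paper's: the identical polarization identity relating the weighted cross-distances $\sum_{i,j}\alpha_i\beta_j\|\bu_i-\bv_j\|^2$ to the two within-set weighted distances through the common point $\bw\in\calC_1\cap\calC_2$, and the identical bound $\sum_{i,i'}\alpha_i\alpha_{i'}\|\bu_i-\bu_{i'}\|^2\le L_1^2(1-\sum_i\alpha_i^2)$ supplying strictness. The only cosmetic differences are that you argue directly rather than by contradiction and invoke the definition of the convex hull instead of Carath\'{e}odory's theorem, neither of which changes the substance.
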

\begin{proof}[Proof of Lemma~\ref{lem:MD}]
When $L_1=L_2=0$, all points in $\calP_1, \calP_2$ are the same,
and thus $\min_{\bu\in \calP_1,\bv\in\calP_2}\|\bu-\bv\|=0$ holds trivially.
In the following, we consider the other case, where $L_1>0$ or $L_2>0$ holds.
  Since we have assumed $\calC_1\cap \calC_2\not=\emptyset$,
  there exists $\bw\in \calC_1\cap \calC_2$.
  For $k=1,2$, $\bw\in\calC_k$ can be written as the convex combination
  of at most $(d+1)$ extreme points in $\calP_k$
  according to the following lemma:

\begin{lemma}[Carath\'{e}odory's theorem]
\label{lem:CT}
  If a point $\bu$ lies in the convex hull $\calC$ of a set $\calP\subseteq\bbR^d$,
  then $\bu$ can be written as the convex combination of at most
  $(d+1)$ extreme points in $\calP$. 
\end{lemma}

  Let $\bar{\calP}_1\coloneq\{\bu_1,\ldots,\bu_{n_1}\}\subseteq \calP_1$
  and $\bar{\calP}_2\coloneq\{\bv_1,\ldots,\bv_{n_2}\}\subseteq \calP_2$
  be the sets of these points, where $n_1,n_2\in[d+1]$ denote
  the numbers of the points in $\bar{\calP}_1$ and $\bar{\calP}_2$, respectively.
  One can then let
  \begin{align}
    \bw=\sum_{i=1}^{n_1}\lambda_i\bu_i
    =\sum_{j=1}^{n_2}\mu_j\bv_j
  \end{align}
  with $\lambda_i,\mu_j>0$ and $\sum_{i=1}^{n_1}\lambda_i=\sum_{j=1}^{n_2}\mu_j=1$.

  By assumption, we have $\|\bu_i-\bu_j\|^2\le L_1^2$
  for $\bu_i,\bu_j\in\bar{\calP}_1$. %, $\bu_i\not=\bu_j$.
  Multiplying it with $\lambda_i\lambda_j>0$ and summing up that result over
  $i,j\in[n_1]$, one has 
  \begin{align}
    \label{eq:ineq1}
	\begin{split}
    2\sum_{i=1}^{n_1}\lambda_i\|\bu_i\|^2-2\|\bw\|^2
    &=\sum_{i,j=1}^{n_1}\lambda_i\lambda_j\|\bu_i-\bu_j\|^2\\
    &=\sum_{i=1}^{n_1}\lambda_i\sum_{j\not=i}\lambda_j\|\bu_i-\bu_j\|^2\\
    &\le L_1^2\sum_{i=1}^{n_1}\lambda_i(1-\lambda_i)\\
    &=L_1^2(1-\|\bm{\lambda}\|^2),
	\end{split}
  \end{align}
  where $\bm{\lambda}\coloneq(\lambda_1,\ldots,\lambda_{n_1})^\top$. 
  Similarly, since $\|\bv_i-\bv_j\|^2\le L_2^2$ 
  for $\bv_i,\bv_j\in\bar{\calP}_2$, %$\bv_i\not=\bv_j$,
  one has
  \begin{align}
    \label{eq:ineq2}
    2\sum_{j=1}^{n_2}\mu_j\|\bv_j\|^2-2\|\bw\|^2
	\le L_2^2(1-\|\bm{\mu}\|^2),
  \end{align}
  where $\bm{\mu}\coloneq(\mu_1,\ldots,\mu_{n_2})^\top$. 

  Assume to the contrary that for any $i\in[n_1]$ and
  $j\in[n_2]$ one had $\|\bu_i-\bv_j\|^2\ge \frac{L_1^2+L_2^2}{2}$.
  Multiplying it with $\lambda_i\mu_j>0$ and summing up that result over
  $i\in[n_1]$ and $j\in[n_2]$, one would have 
  \begin{align}
    \sum_{i=1}^{n_1}\lambda_i\|\bu_i\|^2
    +\sum_{j=1}^{n_2}\mu_j\|\bv_j\|^2
    -2\|\bw\|^2
	=\sum_{i=1}^{n_1}\sum_{j=1}^{n_2}\lambda_i\mu_j\|\bu_i-\bv_j\|^2
	\ge \frac{L_1^2+L_2^2}{2}.
  \end{align}
  Combining it with \eqref{eq:ineq1} and \eqref{eq:ineq2},
  one would obtain $L_1^2\|\bm{\lambda}\|^2+L_2^2\|\bm{\mu}\|^2\le0$,
  which is a contradiction
since $L_1>0$ or $L_2>0$ and $\|\bm{\lambda}\|^2,\|\bm{\mu}\|^2>0$.
\end{proof}

%========================================%
\section*{S2\quad Proof of Theorems~\protect\ref{thm:BMS-GCG}, \protect\protect\ref{thm:BMS-CG}, and \protect\ref{thm:BMS-CRE-GEN}}
\label{sec:Ape-BMS-Loj}
%========================================%
\subsection*{S2.1\quad Technical Lemmas}
%==========%
We first provide two technical lemmas used along with Lemma \ref{lem:A}
to prove Theorems~\ref{thm:BMS-GCG}, \ref{thm:BMS-CG}, and \ref{thm:BMS-CRE-GEN}:

%==========%
\begin{lemma}
\label{lem:B}
Assume Assumption~\ref{asm:RS}.
\begin{itemize}
\item
Assume furthermore Assumption \hyl{b1} in Theorem~\ref{thm:BMS-GCG}
(i.e., that there exists $\tau\in\bbN$ such that
the function $L$ is differentiable on $\cl(\Conv(\{\by_t\}_{t\ge\tau}))$). 
Then there exists $\bar{b}\in(0,\infty)$ such that
\begin{align}
\label{eq:lem-b}
	\|\by_{t+1}-\by_t\|\ge\bar{b}\|\nabla L(\by_t)\|
\end{align}
holds for any $t\ge\tau$.
\item
Instead, assume further Assumption~\ref{asm:LCG}
(i.e., that the kernel $K$ is differentiable). 
Then there exists $\bar{b}\in(0,\infty)$ such that \eqref{eq:lem-b} holds for any $t\in\bbN$.
\end{itemize}
\end{lemma}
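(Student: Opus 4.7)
The plan is to read off the inequality directly from the optimization view of the BMS update rule established in \eqref{eq:BMS-all}. Recall that the BMS update in configuration space can be written as
\begin{align}
\by_{t+1}-\by_t=\frac{h^2}{2}\bS_t^{-1}\nabla L(\by_t),
\end{align}
so that $\|\by_{t+1}-\by_t\|=\frac{h^2}{2}\|\bS_t^{-1}\nabla L(\by_t)\|$. Since $\bS_t=\diag((\sum_{j=1}^n G_{t,i,j})_{i\in[n]})\otimes\bI_d$ is symmetric positive definite, one has $\|\bS_t^{-1}\bw\|\ge\|\bw\|/\lambda_{\max}(\bS_t)$ for every $\bw\in\bbR^{nd}$. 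Moreover, Proposition~\ref{prop:KG} \hyl{a3}--\hyl{a5} gives $0\le G_{t,i,j}\le g(0)<\infty$, so each diagonal entry of $\bS_t$ satisfies $\sum_{j=1}^n G_{t,i,j}\le n g(0)$, whence $\lambda_{\max}(\bS_t)\le n g(0)$. Combining these ingredients yields the bound with the explicit constant $\bar{b}=h^2/(2ng(0))\in(0,\infty)$.

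For part (ii) under Assumption~\ref{asm:LCG}, the identity \eqref{eq:BMS-all} holds on all of $\bbR^{nd}$ since $K$ is differentiable everywhere, and the above chain of inequalities gives \eqref{eq:lem-b} for any $t\in\bbN$. For part (i) the setup is more delicate: $K$ itself need not be differentiable, so one cannot invoke \eqref{eq:BMS-k} verbatim; one only knows that $L$ is differentiable on $\calT=\cl(\Conv(\{\by_t\}_{t\ge\tau}))$. The plan is to argue that, whenever $L$ is differentiable at $\by_t$, its gradient must coincide with the formal expression $-\frac{2}{h^2}(\bS_t-\bG_t)\by_t$ obtained by substituting $g(u)\in-\partial k(u)$ into the chain-rule calculation. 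This follows because each summand $K((\bu_i-\bu_j)/h)$ has, at any $\bu$, a Clarke subdifferential (in the variables $\bu_i,\bu_j$) whose elements are all of the form $\pm\frac{1}{h^2}(\bu_i-\bu_j)\tilde{g}$ with $\tilde{g}\in-\partial k(\|\bu_i-\bu_j\|^2/(2h^2))$; summing these, the Clarke subdifferential $\partial^C L(\by_t)$ must contain $-\frac{2}{h^2}(\bS_t-\bG_t)\by_t$, and when $L$ is differentiable at $\by_t$ this subdifferential collapses to the singleton $\{\nabla L(\by_t)\}$. The BMS update rule \eqref{eq:updaBMS2} then rewrites as $\by_{t+1}-\by_t=-\bS_t^{-1}(\bS_t-\bG_t)\by_t=\frac{h^2}{2}\bS_t^{-1}\nabla L(\by_t)$ for every $t\ge\tau$, and the norm estimate above finishes the proof with the same constant $\bar{b}$.

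The main obstacle is the identification step in part (i): justifying that, even when individual kernel evaluations $K((\by_{t,i}-\by_{t,j})/h)$ sit at points where $k$ is only subdifferentiable, the differentiability of the aggregate $L$ at $\by_t$ forces $\nabla L(\by_t)$ to equal the specific formal expression produced by the choice of $g$ in \eqref{eq:funcG}. Once this rigidity is established (via the subdifferential inclusion argument sketched above, which relies on Proposition~\ref{prop:KG} \hyl{a2} and \hyl{a6}), the remaining linear-algebraic estimate is straightforward and parallels part (ii).
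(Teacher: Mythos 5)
Your core estimate is exactly the paper's: both proofs start from the identity $\nabla L(\by_t)=\frac{2}{h^2}\bS_t(\by_{t+1}-\by_t)$ coming from \eqref{eq:BMS-all}, bound the largest eigenvalue of the diagonal matrix $\bS_t$ by $ng(0)$ via Proposition~\ref{prop:KG} \hyl{a3}--\hyl{a5}, and land on the same constant $\bar{b}=h^2/(2ng(0))$. The paper's proof does not actually separate the two bullets, so your insistence on justifying the identity $\nabla L(\by_t)=\frac{2}{h^2}\bS_t(\by_{t+1}-\by_t)$ in case (i), where $K$ itself may be non-differentiable, is a legitimate and worthwhile point.

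However, the justification you sketch for that identification step does not go through as stated. The Clarke sum rule reads $\partial^C\bigl(\sum_{i,j}K_{ij}\bigr)(\by_t)\subseteq\sum_{i,j}\partial^C K_{ij}(\by_t)$, i.e., the inclusion points the wrong way for your argument: knowing that each summand's subdifferential contains the element generated by the particular selection $g\in-\partial k$ does not let you conclude that the sum of those elements lies in $\partial^C L(\by_t)$. Equality in the sum rule would require regularity of every summand $k(\|\bu_i-\bu_j\|^2/(2h^2))$, which is not automatic ($k$ convex and non-increasing composed with a convex quadratic need not be convex or concave), and you do not verify it. A cleaner route, entirely within the paper's toolbox, is the quadratic minorizer: $R(\cdot|\by_t)$ is smooth, satisfies $R(\bu|\by_t)\le L(\bu)$ for all $\bu$ and $R(\by_t|\by_t)=L(\by_t)$, so $L-R(\cdot|\by_t)$ attains its global minimum at $\by_t$; if $L$ is differentiable there, the first-order condition forces $\nabla L(\by_t)=\nabla_{\bu}R(\bu|\by_t)\big|_{\bu=\by_t}=-\tfrac{2}{h^2}(\bS_t-\bG_t)\by_t=\tfrac{2}{h^2}\bS_t(\by_{t+1}-\by_t)$ by \eqref{eq:SmG} and \eqref{eq:updaBMS2}. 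With that substitution your proof of part (i) is complete, and part (ii) and the linear-algebraic estimate are fine as written.
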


%==========%
\begin{proof}[{Proof of Lemma~\ref{lem:B}}]
Equation \eqref{eq:BMS-all} shows
\begin{align}
	\label{eq:BMS-all3}
	\nabla L(\by_t)
	=\frac{2}{h^2}\bS_t(\by_{t+1}-\by_t),
\end{align}
which implies
\begin{align}
	\|\nabla L(\by_t)\|\le
	\frac{2}{h^2}\cdot\text{max-ev}(\bS_t)\|\by_{t+1}-\by_t\|,
\end{align}
where $\text{max-ev}(\bA)$ is the maximum eigenvalue of a symmetric matrix $\bA$.
Since $\bS_t$ is a diagonal matrix, 
eigenvalues of $\bS_t$ are values of its diagonal elements.
Therefore, one has 
\begin{align}
	\text{max-ev}(\bS_t)	
	=\max\biggl\{\sum_{j=1}^n G_{t,i,j}\biggr\}_{i\in[n]}
	\le n g(0),
\end{align}
and hence
\begin{align}
\label{eq:select-b}
	\|\by_{t+1}-\by_t\|
	\ge\bar{b}\|\nabla L(\by_t)\|
	\text{ with }\bar{b}=\frac{h^2}{2n g(0)}
	\in(0,\infty)
	\quad(\because\text{Proposition~\ref{prop:KG} \hyl{a3}}).
\end{align}
\end{proof}

%==========%
\begin{lemma}
\label{lem:C}
Assume Assumption~\ref{asm:RS}.
\begin{itemize}
\item
Assume furthermore Assumption \hyl{b1} in Theorem~\ref{thm:BMS-GCG}
(i.e., that there exists $\tau\in\bbN$ such that
the function $L$ is differentiable and has a Lipschitz-continuous 
gradient on $\cl(\Conv(\{\by_t\}_{t\ge\tau}))$).
Then there exists $\bar{c}\in(0,\infty)$ such that
\begin{align}
\label{eq:lem-c}
	\|\by_{t+1}-\by_t\|\ge\bar{c}\|\nabla L(\by_{t+1})\|
\end{align}
holds for any $t\ge\tau$.
\item
Instead, assume further Assumption~\ref{asm:LCG}
(i.e., that the kernel $K$ is differentiable and has a Lipschitz-continuous gradient).
Then there exists $\bar{c}\in(0,\infty)$ such that \eqref{eq:lem-c} holds for any $t\in\bbN$.
\end{itemize}
\end{lemma}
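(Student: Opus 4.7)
The plan is to derive \eqref{eq:lem-c} by combining Lemma~\ref{lem:B} with the Lipschitz continuity of $\nabla L$, which is standard in convergence analyses based on the {\L}ojasiewicz inequality. Specifically, I would apply the triangle inequality
\[
	\|\nabla L(\by_{t+1})\|
	\le\|\nabla L(\by_t)\|+\|\nabla L(\by_{t+1})-\nabla L(\by_t)\|,
\]
then bound the first term via Lemma~\ref{lem:B} and the second term via the Lipschitz-continuity of $\nabla L$.

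Under the first set of hypotheses (Assumption~\ref{asm:RS} plus \hyl{b1}), Lemma~\ref{lem:B} yields $\|\nabla L(\by_t)\|\le\|\by_{t+1}-\by_t\|/\bar{b}$ for $t\ge\tau$, and \hyl{b1} supplies a Lipschitz constant $c_L\in[0,\infty)$ for $\nabla L$ on $\cl(\Conv(\{\by_t\}_{t\ge\tau}))$; since $\by_t,\by_{t+1}$ both lie in this set when $t\ge\tau$, one has $\|\nabla L(\by_{t+1})-\nabla L(\by_t)\|\le c_L\|\by_{t+1}-\by_t\|$. Combining gives
\[
	\|\nabla L(\by_{t+1})\|
	\le\Bigl(\tfrac{1}{\bar{b}}+c_L\Bigr)\|\by_{t+1}-\by_t\|,
\]
so the claim holds with $\bar{c}=\bar{b}/(1+\bar{b}c_L)\in(0,\infty)$ for all $t\ge\tau$.

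Under the second set of hypotheses (Assumption~\ref{asm:RS} plus Assumption~\ref{asm:LCG}), differentiability of $K$ with a Lipschitz-continuous gradient transfers to $L$ as a finite sum of translates/scalings of $K$, yielding a global Lipschitz constant $c_L\in[0,\infty)$ for $\nabla L$ on $\bbR^{nd}$; Lemma~\ref{lem:B} then applies globally as well, and the same chain of inequalities gives \eqref{eq:lem-c} with the same choice of $\bar{c}$ for every $t\in\bbN$.

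There is no substantial obstacle: the argument is a routine triangle-inequality step once Lemma~\ref{lem:B} is in hand. The only care needed is to verify that both $\by_t$ and $\by_{t+1}$ lie in the domain on which the Lipschitz estimate is assumed; in case \hyl{b1} this is immediate from the definition $\cl(\Conv(\{\by_s\}_{s\ge\tau}))$ for $t\ge\tau$, and in the second case the global assumption on $K$ makes the domain issue vacuous.
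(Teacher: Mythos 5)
Your proposal is correct and follows essentially the same route as the paper: the paper also applies the triangle inequality, bounds $\|\nabla L(\by_t)\|$ via Lemma~\ref{lem:B}, bounds the difference via the Lipschitz constant $C$ of $\nabla L$, and sets $\bar{c}=(1/\bar{b}+C)^{-1}$, with the second bullet handled by explicitly propagating the Lipschitz constant of $\nabla K$ to $\nabla L$ through the finite sum. No gaps.
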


%==========%
\begin{proof}[{Proof of Lemma~\ref{lem:C}}]
With the Lipschitz constant $C\ge0$ of $\nabla L$, 
one can find the relation
\begin{align}
\label{eq:select-c}
	\begin{split}
	\|\nabla L(\by_{t+1})\|
	&\le\|\nabla L(\by_t)\|+\|\nabla L(\by_{t+1})-\nabla L(\by_t)\|
	\quad(\because\text{Triangle inequality})\\
	&\le\frac{1}{\bar{b}}\|\by_{t+1}-\by_t\|+C\|\by_{t+1}-\by_t\|
	\quad(\because\text{Lemma \ref{lem:B}, \hyl{b1}})\\
	&=\frac{1}{\bar{c}}\|\by_{t+1}-\by_t\|
	\text{ with }\bar{c}=\biggl(\frac{1}{\bar{b}}+C\biggr)^{-1}\in(0,\infty).
	\end{split}
\end{align}
Also, under Assumption \ref{asm:LCG}, one has that, 
with the Lipschitz constant $C'$ of $\nabla K$,
\begin{align}
\label{eq:GRALIP}
	\begin{split}
	\|\nabla L(\bu)-\nabla L(\bu')\|
	&=\left\|\sum_{i,j=1}^n
	\begin{pmatrix}\vdots\\
	\tfrac{\partial}{\partial \bu_i} K\bigl(\tfrac{\bu_i-\bu_j}{h}\bigr)\\\vdots\\
	\tfrac{\partial}{\partial \bu_j} K\bigl(\tfrac{\bu_i-\bu_j}{h}\bigr)\\\vdots
	\end{pmatrix}
	-\sum_{i,j=1}^n
	\begin{pmatrix}\vdots\\
	\tfrac{\partial}{\partial \bu_i} K\bigl(\tfrac{\bu'_i-\bu'_j}{h}\bigr)\\\vdots\\
	\tfrac{\partial}{\partial \bu_j} K\bigl(\tfrac{\bu'_i-\bu'_j}{h}\bigr)\\\vdots
	\end{pmatrix}\right\|
\\
	&\le\sum_{i,j=1}^n
	\left\|\begin{pmatrix}
	\tfrac{\partial}{\partial \bu_i} K\bigl(\tfrac{\bu_i-\bu_j}{h}\bigr)-\tfrac{\partial}{\partial \bu_i} K\bigl(\tfrac{\bu'_i-\bu'_j}{h}\bigr)\\
	\tfrac{\partial}{\partial \bu_j} K\bigl(\tfrac{\bu_i-\bu_j}{h}\bigr)-\tfrac{\partial}{\partial \bu_j} K\bigl(\tfrac{\bu'_i-\bu'_j}{h}\bigr)\\
	\end{pmatrix}\right\|
	\quad(\because\text{Triangle inequality})
\\
	&=\frac{\sqrt{2}}{h}\sum_{i,j=1}^n \biggl\|\nabla K\biggl(\frac{\bu_i-\bu_j}{h}\biggr)
	-\nabla K\biggl(\frac{\bu'_i-\bu'_j}{h}\biggr)\biggr\|
\\
%	&\le\frac{\sqrt{2}}{h}\sum_{i,j=1}^n C'\biggl\|\frac{\bu_i-\bu_j}{h}-\frac{\bu'_i-\bu'_j}{h}\biggr\|
%	\quad(\because\text{Assumption \ref{asm:LCG}})
%\\
	&=\frac{\sqrt{2}C'}{h^2}\sum_{i,j=1}^n \|(\bu_i-\bu'_i)-(\bu_j-\bu'_j)\|
	\quad(\because\text{Assumption \ref{asm:LCG}})
\\
	&\le\frac{\sqrt{2}C'}{h^2}\sum_{i,j=1}^n (\|\bu_i-\bu'_i\|+\|\bu_j-\bu'_j\|)
	\quad(\because\text{Triangle inequality})
\\
	&\le\frac{\sqrt{2}C'}{h^2}\sum_{i,j=1}^n (\|\bu-\bu'\|+\|\bu-\bu'\|)
	\quad(\because\text{$\bu_i$ is a part of $\bu$})
\\
	&=\frac{2\sqrt{2}n^2C'}{h^2}\|\bu-\bu'\|.
	\end{split}
\end{align}
Thus, one can set $C\le\frac{2\sqrt{2}n^2C'}{h^2}$
and $\frac{h^2}{2ng(0)+2\sqrt{2}n^2C'}\le\bar{c}\le\frac{h^2}{2n g(0)}$.
\end{proof}

%========================================%
\subsection*{S2.2\quad Proof of Theorem~\protect\ref{thm:BMS-GCG}}
%==========%
\textbf{Proof strategy of Theorems~\ref{thm:BMS-GCG}, 
\ref{thm:BMS-CG}, and \ref{thm:BMS-CRE-GEN}:}
Theorems~\ref{thm:BMS-GCG} and \ref{thm:BMS-CRE-GEN}
can be proved by applying abstract convergence theorems,
\cite[Theorem 3.2]{attouch2013convergence} and 
\cite[Theorems 3.1 and 3.5]{frankel2015splitting},
and Theorem~\ref{thm:BMS-CG} can be proved 
as a corollary of Theorem~\ref{thm:BMS-GCG}.
\cite{yamasaki2023ms} provided counterparts
of Theorems~\ref{thm:BMS-GCG}, \ref{thm:BMS-CG}, 
and \ref{thm:BMS-CRE-GEN} for the MS algorithm.
Proofs of Theorems~\ref{thm:BMS-GCG}, \ref{thm:BMS-CG}, 
and \ref{thm:BMS-CRE-GEN} for the BMS algorithm,
which we describe below, are almost the same as 
the proofs of theorems for the MS algorithm in 
supplementary material of \cite{yamasaki2023ms}.

%==========%
\textbf{Preliminaries on {\L}ojasiewicz inequality:}
%desingularising function%
Let $\delta\in(0,\infty]$, and let $\varphi:[0,\delta)\to[0,\infty)$ 
be a continuous concave function such that $\varphi(0)=0$ 
and $\varphi$ is continuously differentiable on $(0,\delta)$ 
with $\varphi'(u)>0$.
The concavity of $\varphi$ implies that $\varphi'$ is non-increasing
on $(0,\delta)$.
%Lojasiewicz不等式の一般形%
The {\L}ojasiewicz inequality \eqref{eq:Lojasiewicz-ineq} 
holds trivially with $(\bu',\bu)$ satisfying $f(\bu')-f(\bu)=0$.
Also, it is known that the {\L}ojasiewicz inequality \eqref{eq:Lojasiewicz-ineq} with $(\bu',\bu)$ 
such that $\bu\in\bar{\calU}(\bu',f,\calT,\epsilon,\delta)\coloneq\{\bv\in \calT\mid \|\bu'-\bv\|<\epsilon, f(\bu')-f(\bv)\in(0,\delta)\}$
is a special case of
\begin{align}
\label{eq:Lojasiewicz-ineq2}
	\varphi'(f(\bu')-f(\bu))\|\nabla f(\bu)\|\ge 1
	\text{ at }(\bu',\bu)\text{ such that }\bu\in\bar{\calU}(\bu',f,\calT,\epsilon,\delta)
\end{align}
with $\varphi(u)=\frac{u^{1-\theta}}{c(1-\theta)}$ where $c$ is a positive constant.
(One technical subtlety with this extended definition is that
we have excluded those $\bu$ with $f(\bu)=f(\bu')$ from
$\bar{\calU}(\bu',f,\calT,\epsilon,\delta)$, as those points would
make the left-hand side of~\eqref{eq:Lojasiewicz-ineq2} indeterminate.)
Note that
\cite{attouch2013convergence,frankel2015splitting}
call the function $\varphi$ a desingularizing function
because of its role in \eqref{eq:Lojasiewicz-ineq2}, where $\varphi\circ f$ is
in a sense resolving criticality of $f$ at $\bu'$.
Note also that for the choice $\varphi(u)=\frac{u^{1-\theta}}{c(1-\theta)}$, 
one has $\varphi'(u)=\frac{u^{-\theta}}{c}$,
recovering the original definition (Definition~\ref{def:Loj}) of
the {\L}ojasiewicz property. 
The following proof of the convergence of the configuration sequence $(\by_t)_{t\in\bbN}$ (Theorem \protect\ref{thm:BMS-GCG})
is not restricted to the specific choice $\varphi(u)=\frac{u^{1-\theta}}{c(1-\theta)}$ but 
holds with the general form \eqref{eq:Lojasiewicz-ineq2} of the {\L}ojasiewicz inequality.
The specific choice $\varphi(u)=\frac{u^{1-\theta}}{c(1-\theta)}$,
on the other hand, will help derive the convergence rate bounds 
in Theorem~\ref{thm:BMS-CRE-GEN}.

%==========%
\textbf{Proof of Theorem~\ref{thm:BMS-GCG}:}
We here provide a proof of Theorem~\ref{thm:BMS-GCG}
on the ground of \cite[Theorem~3.2]{attouch2013convergence} 
and \cite[Theorem~3.1]{frankel2015splitting}.

%==========%
\begin{proof}[{Proof of Theorem~\ref{thm:BMS-GCG}}]
The objective sequence $(L(\by_t))_{t\in\bbN}$ converges 
under Assumption~\ref{asm:RS} since it is 
a bounded non-decreasing sequence (Theorem~\ref{thm:COS}).
Also, $\by_t$ lies in the convex hull
$\Conv(\{\bx_i\}_{i\in[n]})^n$, which is a compact set. 
Thus, there exist an accumulation point $\tilde{\by}\in\Conv(\{\bx_i\}_{i\in[n]})^n$
of the configuration sequence $(\by_t)_{t\in\bbN}$ and a subsequence 
$(\by_{t'})_{t'\in \calN}$ of $(\by_t)_{t\in\bbN}$ (with $\calN\subseteq\bbN$)
that converges to the accumulation point $\tilde{\by}$ as $t'\to\infty$.
Also, $\tilde{\by}\in\cl(\Conv(\{\by_t\}_{t\ge\tau}))$ obviously holds for any $\tau\in\bbN$.
When there exists $t'\in \calN$ such that $L(\tilde{\by})=L(\by_{t'})$, 
Lemma \ref{lem:A} obviously shows the convergence of $(\by_t)_{t\in\bbN}$ to $\tilde{\by}$: 
Assume $\by_{t'+1}\not=\by_{t'}$.
One then has $L(\by_{t'+1})\ge L(\by_{t'})+\bar{a}\|\by_{t'+1}-\by_{t'}\|^2>L(\tilde{\by})$
since $L(\by_{t'})=L(\tilde{\by})$ and $\bar{a}>0$.
It then follows from the monotonicity of $(L(\by_t))_{t\in\mathbb{N}}$
that $L(\tilde{\by})=\lim_{t'\in N, t\to\infty}L(\by_{t'})>L(\tilde{\by})$,
which is a contradiction.
On the other hand, if $\by_{t'+1}=\by_{t'}$, then
one has $\by_t=\by_{t'}$ for any $t\ge t'$ and hence $\tilde{\by}=\by_{t'}$. 
%$\by_t=\tilde{\by}$ for any $t\ge t'$, 
%since otherwise $L(\by_{t'+1})\ge L(\by_{t'})+\bar{a}\|\by_{t'+1}-\by_{t'}\|^2=L(\tilde{\by})+\bar{a}\|\by_{t'+1}-\tilde{\by}\|^2$ holds, which implies $L(\by_t)\ge L(\tilde{\by})+\bar{a}\|\by_{t'+1}-\tilde{\by}\|^2$ for all $t\ge t'+1$,
%so that $\tilde{\by}$ cannot be an accumulation point of $(\by_t)_{t\in\bbN}$,
%where $\bar{a}$ is a positive constant defined in Lemma~\ref{lem:A}.
%
We, therefore, consider in what follows the remaining case where $L(\tilde{\by})>L(\by_t)$ for all $t\in\bbN$. 
The assumption~\hyl{b2} ensures 
that there exists a positive constant $\epsilon$ 
such that 
the function $L$ satisfies the {\L}ojasiewicz inequality \eqref{eq:Lojasiewicz-ineq} 
at least with any $(\bu',\bu)=(\tilde{\by},\by)$ 
such that $\by\in \calU(\tilde{\by},L,\cl(\Conv(\{\by_s\}_{s\ge\tau})),\epsilon)$
with some integer $\tau$. 

As we want to use the general form~\eqref{eq:Lojasiewicz-ineq2}
of the {\L}ojasiewicz inequality, we have to further restrict
the region where the {\L}ojasiewicz inequality to hold
from $\calU(\tilde{\by},L,\cl(\Conv(\{\by_s\}_{s\ge\tau})),\epsilon)$
to $\bar{\calU}(\tilde{\by},L,\cl(\Conv(\{\by_s\}_{s\ge\tau})),\epsilon,\delta)$
in order to ensure that
$L(\tilde{\by})-L(\by)$ is in the domain $[0,\delta)$
of the desingularizing function $\varphi$. 
Denoting $l_t\coloneq L(\tilde{\by})-L(\by_t)>0$, the convergence of 
the objective sequence $(L(\by_t))_{t\in\bbN}$ and the definition of $\tilde{\by}$
imply that the sequence $(l_t)_{t\in\bbN}$ is positive, non-increasing,
and converging to 0 as $t\to\infty$. 
The facts, $\by_{t'}\to\tilde{\by}$ and $l_t\to0$, as well as the continuity of $\varphi$, 
imply the existence of a finite integer $\tau'\ge\tau$ in $\calN$ 
such that $l_t\in[0,\delta)$ holds for any $t\ge\tau'$, and
that the inequality 
\begin{align}
\label{eq:At4}
	\|\tilde{\by}-\by_{\tau'}\|+2\sqrt{\frac{l_{\tau'}}{\bar{a}}}+\frac{1}{\bar{a}\bar{c}}\varphi(l_{\tau'})<\epsilon
\end{align}
holds.
It should be noted that if the assumptions~\hyl{b1} and \hyl{b2}
hold with some $\tau\in\bbN$, they also hold with the above $\tau'$
since $\{\by_s\}_{s\ge\tau'}\subseteq\{\by_s\}_{s\ge\tau}$ with $\tau'\ge\tau$. 
Using the {\L}ojasiewicz property of the function $L$ on $\bar{\calU}(\tilde{\by},L,\cl(\Conv(\{\by_s\}_{s\ge\tau'})),\epsilon,\delta)$,
the inequality~\eqref{eq:At4}, and assumption \hyl{b1},
we prove below that the configuration sequence $(\by_t)_{t\in\bbN}$ does not endlessly wander 
and does converge to $\tilde{\by}$ and that $\tilde{\by}$ is a critical point of the function $L$.

%示したいこと%
\textbf{Two key claims:}
We will establish the following two claims 
for any $t\ge\tau'+1$, which are the key to proving Theorem~\ref{thm:BMS-GCG}. 
\begin{claim}
\label{claim:At6}
$\by_t$ satisfies
\begin{align}
\label{eq:At6}
\by_t\in\bar{\calU}(\tilde{\by},L,\cl(\Conv(\{\by_s\}_{s\ge\tau'})),\epsilon,\delta).
\end{align}
In other words, the {\L}ojasiewicz inequality~\eqref{eq:Lojasiewicz-ineq2} with $(\bu',\bu)=(\tilde{\by},\by_t)$ holds. 
\end{claim}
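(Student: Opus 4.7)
The claim bundles three conditions: the set-membership $\by_t\in\cl(\Conv(\{\by_s\}_{s\ge\tau'}))$, the suboptimality bound $l_t\coloneq L(\tilde{\by})-L(\by_t)\in(0,\delta)$, and the Euclidean bound $\|\tilde{\by}-\by_t\|<\epsilon$. The first is immediate from $t\ge\tau'$, and the third follows from monotonicity of $(l_t)_{t\in\bbN}$, the assumption $l_t>0$ in the remaining case under consideration, and the choice of $\tau'$ guaranteeing $l_{\tau'}<\delta$. Only the Euclidean bound is substantive, and my plan is to establish it by induction on $t\ge\tau'+1$ via the Kurdyka--{\L}ojasiewicz telescoping technique familiar from \cite{attouch2013convergence, frankel2015splitting}.

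The base case $t=\tau'+1$ is direct: Lemma~\ref{lem:A} yields $\|\by_{\tau'+1}-\by_{\tau'}\|\le\sqrt{l_{\tau'}/\bar{a}}$, so a triangle inequality combined with \eqref{eq:At4} gives $\|\tilde{\by}-\by_{\tau'+1}\|<\epsilon$. For the inductive step I assume that the membership $\by_s\in\bar{\calU}(\tilde{\by},L,\cl(\Conv(\{\by_r\}_{r\ge\tau'})),\epsilon,\delta)$ holds for every $s$ with $\tau'+1\le s\le t$, so in particular the {\L}ojasiewicz inequality \eqref{eq:Lojasiewicz-ineq2} is available at $\by_t$. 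The key estimate I will derive is
\begin{align}
  2\|\by_{t+1}-\by_t\|\le\|\by_t-\by_{t-1}\|+\frac{1}{\bar{a}\bar{c}}\bigl[\varphi(l_t)-\varphi(l_{t+1})\bigr],
\end{align}
obtained by chaining four ingredients: Lemma~\ref{lem:C} applied between $\by_{t-1}$ and $\by_t$ (legitimate since $t-1\ge\tau'\ge\tau$), which bounds $\|\nabla L(\by_t)\|$ by $\|\by_t-\by_{t-1}\|/\bar{c}$; the {\L}ojasiewicz inequality at $\by_t$, which inverts this into the lower bound $\varphi'(l_t)\ge\bar{c}/\|\by_t-\by_{t-1}\|$; concavity of $\varphi$, yielding $\varphi(l_t)-\varphi(l_{t+1})\ge\varphi'(l_t)(l_t-l_{t+1})$; and Lemma~\ref{lem:A}, lower-bounding $l_t-l_{t+1}$ by $\bar{a}\|\by_{t+1}-\by_t\|^2$. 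Combining these gives $\bar{a}\bar{c}\|\by_{t+1}-\by_t\|^2\le\|\by_t-\by_{t-1}\|[\varphi(l_t)-\varphi(l_{t+1})]$, whence the displayed estimate via the AM--GM inequality $2\sqrt{AB}\le A+B$.

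Summing this one-step bound over $s\in\{\tau'+1,\ldots,t\}$ causes the step sizes to telescope on the right, leaving
\begin{align}
  \sum_{s=\tau'+1}^{t}\|\by_{s+1}-\by_s\|\le\|\by_{\tau'+1}-\by_{\tau'}\|+\frac{1}{\bar{a}\bar{c}}\varphi(l_{\tau'+1})\le\sqrt{l_{\tau'}/\bar{a}}+\frac{1}{\bar{a}\bar{c}}\varphi(l_{\tau'}),
\end{align}
the last inequality using the base-case step-size bound, monotonicity of $\varphi$, and non-increasingness of $(l_s)$. A final triangle inequality $\|\tilde{\by}-\by_{t+1}\|\le\|\tilde{\by}-\by_{\tau'}\|+\|\by_{\tau'+1}-\by_{\tau'}\|+\sum_{s=\tau'+1}^{t}\|\by_{s+1}-\by_s\|$ combined with \eqref{eq:At4} yields $\|\tilde{\by}-\by_{t+1}\|<\epsilon$, closing the induction. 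As a byproduct the argument also produces the summability $\sum_{s\ge\tau'}\|\by_{s+1}-\by_s\|<\infty$, which feeds directly into the second of the two key claims anticipated in the subsequent discussion.

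The main obstacle is the interdependence intrinsic to the KL argument: the {\L}ojasiewicz inequality at $\by_t$ can be invoked only after $\by_t$ has been placed inside $\bar{\calU}$, so the inductive hypothesis, the one-step estimate, and the telescoped global bound must be carried forward simultaneously; one must check that the constant in \eqref{eq:At4} is large enough to absorb every one of the accumulating displacements at once, which is precisely how the constants $2\sqrt{l_{\tau'}/\bar{a}}$ and $\frac{1}{\bar{a}\bar{c}}\varphi(l_{\tau'})$ were chosen. A minor caveat: if $\by_{t+1}=\by_t$ (equivalently $l_t=l_{t+1}$ by Lemma~\ref{lem:A}) some divisions in the one-step derivation become degenerate, but this case is handled trivially since then $\by_s=\by_t$ for all $s\ge t$, forcing $\tilde{\by}=\by_t$ and making the claim hold automatically at every subsequent index.
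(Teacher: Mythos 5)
Your proposal is correct and follows essentially the same route as the paper: the same base case via \eqref{eq:At8} and \eqref{eq:At4}, the same one-step estimate (the paper's Lemma~\ref{lem:lem}, built from Lemma~\ref{lem:A}, Lemma~\ref{lem:C}, the {\L}ojasiewicz inequality, concavity of $\varphi$, and AM--GM), and the same telescoping induction that the paper carries jointly with Claim~\ref{claim:At7}. The only difference is presentational---you fold the telescoped sum into a strong induction hypothesis rather than tracking it as a separate claim---and your handling of the degenerate case $\by_{t+1}=\by_t$ matches the paper's treatment.
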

\begin{claim}
\label{claim:At7}
$\{\by_s\}_{s\in\{\tau',\ldots,t+1\}}$ satisfies 
\begin{align}
\label{eq:At7}
	\sum_{s=\tau'+1}^t\|\by_{s+1}-\by_s\|
	+\|\by_{t+1}-\by_t\|
	\le\|\by_{\tau'+1}-\by_{\tau'}\|
	+\frac{1}{\bar{a}\bar{c}}\{\varphi(l_{\tau'+1})-\varphi(l_{t+1})\}.
\end{align}
\end{claim}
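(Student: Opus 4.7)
The plan is to establish Claims \ref{claim:At6} and \ref{claim:At7} simultaneously by strong induction on $t \geq \tau'+1$, adapting the \L ojasiewicz-based convergence template of \cite{attouch2013convergence, frankel2015splitting} to the BMS iteration. Once both claims are available for all $t$, letting $t \to \infty$ in \eqref{eq:At7} and using $\varphi \geq 0$ will show that $(\by_t)_{t\in\bbN}$ has summable increments and is therefore Cauchy; since $\tilde{\by}$ is already an accumulation point, the full sequence must converge to $\tilde{\by}$, and $\nabla L(\tilde{\by}) = \bm{0}_{nd}$ will follow by passing to the limit in Lemma \ref{lem:B}.

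For the base case $t = \tau'+1$, I would first combine Lemma \ref{lem:A} with $L(\by_{\tau'+1}) - L(\by_{\tau'}) \leq l_{\tau'}$, which follows from the monotonicity of $(l_t)_{t\in\bbN}$, to obtain $\|\by_{\tau'+1} - \by_{\tau'}\| \leq \sqrt{l_{\tau'}/\bar{a}}$; the triangle inequality and the slack in \eqref{eq:At4} then give $\|\tilde{\by} - \by_{\tau'+1}\| < \epsilon$, while $l_{\tau'+1} \in (0, \delta)$ follows from monotonicity of $(l_t)_{t\in\bbN}$ and from the working assumption $L(\tilde{\by}) > L(\by_t)$ for all $t$. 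This establishes \eqref{eq:At6} at $t = \tau'+1$; the corresponding \eqref{eq:At7} is a single application of the per-step bound derived next.

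The inductive engine is the per-step estimate
\[
  \|\by_{t+1} - \by_t\| \leq \tfrac{1}{2}\|\by_t - \by_{t-1}\| + \tfrac{1}{2\bar{a}\bar{c}}\{\varphi(l_t) - \varphi(l_{t+1})\},
\]
valid whenever \eqref{eq:At6} holds at index $t$. I would derive it by chaining (a) concavity of $\varphi$, giving $\varphi(l_t) - \varphi(l_{t+1}) \geq \varphi'(l_t)(l_t - l_{t+1})$; (b) the \L ojasiewicz inequality \eqref{eq:Lojasiewicz-ineq2} at $\by_t$, giving $\varphi'(l_t) \geq 1/\|\nabla L(\by_t)\|$; (c) Lemma \ref{lem:A}, giving $l_t - l_{t+1} \geq \bar{a}\|\by_{t+1} - \by_t\|^2$; and (d) Lemma \ref{lem:C} applied to the previous step, yielding $\|\nabla L(\by_t)\| \leq \|\by_t - \by_{t-1}\|/\bar{c}$. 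The combination produces $\|\by_{t+1} - \by_t\|^2 \leq (\bar{a}\bar{c})^{-1}\|\by_t - \by_{t-1}\|\{\varphi(l_t) - \varphi(l_{t+1})\}$, and AM-GM $\sqrt{uv} \leq (u+v)/2$ yields the displayed inequality. Summing this per-step bound over $s = \tau'+1, \ldots, t$, telescoping the $\varphi$-differences, and moving the geometric $\tfrac{1}{2}\sum_s\|\by_s - \by_{s-1}\|$ piece to the left side produces exactly \eqref{eq:At7}. The induction step for \eqref{eq:At6} at $t+1$ then uses the just-proved \eqref{eq:At7} at $t$ together with the triangle inequality $\|\tilde{\by} - \by_{t+1}\| \leq \|\tilde{\by} - \by_{\tau'}\| + \|\by_{\tau'+1} - \by_{\tau'}\| + \sum_{s=\tau'+1}^t\|\by_{s+1} - \by_s\|$ and the slack in \eqref{eq:At4} to close.

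The main obstacle I anticipate is the careful bookkeeping of neighborhoods: one must verify at each step that $\by_t$ remains inside $\bar{\calU}(\tilde{\by}, L, \cl(\Conv(\{\by_s\}_{s\geq\tau'})), \epsilon, \delta)$ so that the \L ojasiewicz inequality continues to apply. The three terms on the left side of \eqref{eq:At4} are allocated precisely to (i) the initial displacement $\|\tilde{\by} - \by_{\tau'}\|$, (ii) the first-step bound $\sqrt{l_{\tau'}/\bar{a}}$ from Lemma \ref{lem:A}, and (iii) the total telescoping travel bounded by the $\varphi$-term from \eqref{eq:At7}; the strict inequality in \eqref{eq:At4} supplies the slack that propagates through the induction without being exhausted. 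Membership $\by_t \in \cl(\Conv(\{\by_s\}_{s\geq\tau'}))$ needed for the domain of the \L ojasiewicz property is automatic since each $\by_t$ for $t \geq \tau'$ is itself in the generating set.
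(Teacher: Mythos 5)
Your proposal is correct and follows essentially the same route as the paper: a simultaneous induction on Claims~\ref{claim:At6} and \ref{claim:At7}, with the per-step estimate (the paper's Lemma~\ref{lem:lem}) obtained from exactly the same chain --- concavity of $\varphi$, the {\L}ojasiewicz inequality \eqref{eq:Lojasiewicz-ineq2}, Lemma~\ref{lem:A}, Lemma~\ref{lem:C}, and AM--GM --- and the budget \eqref{eq:At4} allocated identically to keep the iterates inside the {\L}ojasiewicz neighborhood. Your telescoping of the per-step bound into a fresh sum is algebraically the same as the paper's step of adding \eqref{eq:At9} at $t=u+1$ to \eqref{eq:At7} at $t=u$.
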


%準備%
\textbf{Auxiliary results:}
We here provide two auxiliary results to be used in the succeeding proof.
%(8)%
First, one has
\begin{align}
\label{eq:At8}
	\begin{split}
	\|\by_{\tau'+1}-\by_{\tau'}\|
	&\le\sqrt{\frac{l_{\tau'}-l_{\tau'+1}}{\bar{a}}}
	\quad(\because\text{Lemma~\ref{lem:A}})\\
	&\le\sqrt{\frac{l_{\tau'}}{\bar{a}}}
	\quad(\because l_{\tau'+1}\ge0).
	\end{split}
\end{align}
%(9)%
Second, we have the following auxiliary lemma,
which will be used in proving~\eqref{eq:At7}
from~\eqref{eq:At6} via making use of the {\L}ojasiewicz property. 
\begin{lemma}
\label{lem:lem}
If $\by_t$ with $t\ge\tau$ satisfies Claim~\ref{claim:At6}, that is, if $\by_t\in\bar{\calU}(\tilde{\by},L,\cl(\Conv(\{\by_s\}_{s\ge\tau'})),\epsilon,\delta)$ holds, then
\begin{align}
\label{eq:At9}
	2\|\by_{t+1}-\by_t\|
	\le\|\by_t-\by_{t-1}\|+\frac{1}{\bar{a}\bar{c}}\{\varphi(l_t)-\varphi(l_{t+1})\}.
\end{align}
\end{lemma}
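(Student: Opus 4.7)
The plan is to assemble four ingredients already at our disposal: the concavity of $\varphi$, the quadratic descent estimate of Lemma~\ref{lem:A}, the gradient-to-step bound of Lemma~\ref{lem:C}, and the {\L}ojasiewicz inequality~\eqref{eq:Lojasiewicz-ineq2} at $(\tilde{\by},\by_t)$, whose validity is exactly what the hypothesis~\eqref{eq:At6} provides. These four ingredients will be chained into a single quadratic bound on $\|\by_{t+1}-\by_t\|$, which an application of the AM--GM inequality then converts into the desired linear bound~\eqref{eq:At9}.

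Concretely, I would proceed as follows. First, use concavity of $\varphi$ on $(0,\delta)$ to write $\varphi(l_t)-\varphi(l_{t+1})\ge\varphi'(l_t)(l_t-l_{t+1})$. Second, apply Lemma~\ref{lem:A} to bound $l_t-l_{t+1}=L(\by_{t+1})-L(\by_t)\ge\bar{a}\|\by_{t+1}-\by_t\|^2$. Third, invoke the {\L}ojasiewicz inequality at $(\tilde{\by},\by_t)$, which by~\eqref{eq:At6} takes the form $\varphi'(l_t)\ge\|\nabla L(\by_t)\|^{-1}$. Fourth, apply Lemma~\ref{lem:C} with time index shifted by one to obtain $\|\nabla L(\by_t)\|\le\bar{c}^{-1}\|\by_t-\by_{t-1}\|$; since $t\ge\tau'+1\ge\tau+1$, assumption~\hyl{b1} makes Lemma~\ref{lem:C} applicable at step $t-1$. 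Chaining these four estimates yields
\[
\|\by_{t+1}-\by_t\|^2\le\frac{1}{\bar{a}\bar{c}}\bigl\{\varphi(l_t)-\varphi(l_{t+1})\bigr\}\cdot\|\by_t-\by_{t-1}\|,
\]
and the elementary inequality $2\sqrt{uv}\le u+v$ applied with $u=\bar{a}^{-1}\bar{c}^{-1}\{\varphi(l_t)-\varphi(l_{t+1})\}$ and $v=\|\by_t-\by_{t-1}\|$ then delivers~\eqref{eq:At9}.

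The main obstacle I anticipate is handling degenerate cases. If $l_t=l_{t+1}$ then $\varphi'(l_t)$ is being invoked at a point where the {\L}ojasiewicz inequality~\eqref{eq:Lojasiewicz-ineq2} was not even formulated (recall that the definition of $\bar{\calU}$ excludes points where $f(\bu')-f(\bu)=0$), and if $\|\nabla L(\by_t)\|=0$ the gradient appearing in the denominator vanishes. Both situations have to be dispatched separately at the start: by the second clause of Theorem~\ref{thm:COS}, $L(\by_t)=L(\by_{t+1})$ forces $\by_{t+1}=\by_t$, and combining~\eqref{eq:BMS-all3} with the invertibility of $\bS_t$ shows that $\nabla L(\by_t)=\bm{0}_{nd}$ likewise forces $\by_{t+1}=\by_t$; in either case~\eqref{eq:At9} becomes trivial. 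Once these edge cases are cleared, strict positivity of $l_t-l_{t+1}$, $\|\nabla L(\by_t)\|$, $\varphi'(l_t)$, and $\|\by_t-\by_{t-1}\|$ is guaranteed, and the chain of inequalities above goes through without further difficulty.
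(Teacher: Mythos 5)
Your proof is correct and follows essentially the same route as the paper's: the identical chain of concavity of $\varphi$, Lemma~\ref{lem:A}, the {\L}ojasiewicz inequality~\eqref{eq:Lojasiewicz-ineq2} at $(\tilde{\by},\by_t)$, and Lemma~\ref{lem:C} shifted to step $t-1$, followed by $2\sqrt{uv}\le u+v$. The only cosmetic difference is in how the degenerate case is dispatched — the paper simply treats $\by_t=\by_{t-1}$ as the trivial case, while you rule out $l_t=l_{t+1}$ and $\nabla L(\by_t)=\bm{0}_{nd}$ first — but both handle it validly.
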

\begin{proof}[Proof of Lemma~\ref{lem:lem}]
Since \eqref{eq:At9} holds trivially if $\by_t=\by_{t-1}$,
we consider the case $\by_t\neq\by_{t-1}$.
When $\by_t\in\bar{\calU}(\tilde{\by},L,\cl(\Conv(\{\by_s\}_{s\ge\tau'})),\epsilon,\delta)$,
the {\L}ojasiewicz inequality~\eqref{eq:Lojasiewicz-ineq2}
with $(\bu',\bu)=(\tilde{\by},\by_t)$ holds. 
Noting that $0<l_{t+1}\le l_t<\delta$ holds, one has 
\begin{align}
\label{eq:Direct}
	\begin{split}
	\varphi(l_t)-\varphi(l_{t+1})
	&=\int_{l_{t+1}}^{l_t}\varphi'(u)\,du\\ 
	&\ge\varphi'(l_t)(l_t-l_{t+1})
	\quad(\because\text{$\varphi'$ is positive and non-increasing})\\
	&\ge\varphi'(l_t)\bar{a}\|\by_{t+1}-\by_t\|^2
	\quad(\because\text{Lemma~\ref{lem:A}})\\
	&\ge\frac{1}{\|\nabla L(\by_t)\|}\bar{a}\|\by_{t+1}-\by_t\|^2
	\quad(\because\mbox{{\L}ojasiewicz inequality~\eqref{eq:Lojasiewicz-ineq2} with $(\bu',\bu)=(\tilde{\by},\by_t)$})\\
	&\ge\bar{a}\bar{c}\frac{\|\by_{t+1}-\by_t\|^2}{\|\by_t-\by_{t-1}\|}
	\quad(\because\text{Lemma~\ref{lem:C}}).
	\end{split}
\end{align}
The inequality $2\sqrt{\alpha\beta}\le\alpha+\beta$ for $\alpha,\beta\ge0$ yields
\begin{align}
	\begin{split}
	2\|\by_{t+1}-\by_t\|
	&=2\sqrt{\|\by_{t+1}-\by_t\|^2}\\
	&\le2\sqrt{\|\by_t-\by_{t-1}\|\frac{1}{\bar{a}\bar{c}}\{\varphi(l_t)-\varphi(l_{t+1})\}}
	\quad(\because\text{\eqref{eq:Direct}})\\
	&\le \|\by_t-\by_{t-1}\|+\frac{1}{\bar{a}\bar{c}}\{\varphi(l_t)-\varphi(l_{t+1})\}.
	\end{split}
\end{align}
This concludes the proof of Lemma~\ref{lem:lem}.
\end{proof}

%$t=\tau+1$%
\textbf{Proof that Claims~\ref{claim:At6} and \ref{claim:At7} hold for $t=\tau'+1$:}
Here we prove Claims~\ref{claim:At6} and \ref{claim:At7} for $t=\tau'+1$.
One has
\begin{align}
	\begin{split}
	\|\tilde{\by}-\by_{\tau'+1}\|
	&\le\|\tilde{\by}-\by_{\tau'}\|
	+\|\by_{\tau'+1}-\by_{\tau'}\|
	\quad(\because\text{Triangle inequality})\\
	&\le\|\tilde{\by}-\by_{\tau'}\|
	+\sqrt{\frac{l_{\tau'}}{\bar{a}}}
	\quad(\because\text{\eqref{eq:At8}})\\
	&<\epsilon
	\quad(\because\text{\eqref{eq:At4}}),
	\end{split}
\end{align}
which, together with $0<l_{\tau'+1}\le l_{\tau'}<\delta$,
implies \eqref{eq:At6} with $t=\tau'+1$,
proving Claim~\ref{claim:At6} for $t=\tau'+1$. 
Also, Claim~\ref{claim:At6} with $t=\tau'+1$ implies, via Lemma~\ref{lem:lem},
the inequality \eqref{eq:At9} with $t=\tau'+1$, which reads 
\begin{align}
	2\|\by_{\tau'+2}-\by_{\tau'+1}\|
	\le\|\by_{\tau'+1}-\by_{\tau'}\|
	+\frac{1}{\bar{a}\bar{c}}\{\varphi(l_{\tau'+1})-\varphi(l_{\tau'+2})\},
\end{align}
%holds due to \eqref{eq:At6} with $t=\tau+1$,
%and is equal to the inequality \eqref{eq:At7} with $t=\tau+1$.
which is nothing other than~\eqref{eq:At7} with $t=\tau'+1$,
thereby proving Claim~\ref{claim:At7} for $t=\tau'+1$. 

%$t>\tau+1$%
\textbf{Proof that Claims~\ref{claim:At6} and \ref{claim:At7} hold for $t\ge\tau'+1$:}
Now that we have seen that Claim~\ref{claim:At7} holds for $t=\tau'+1$,
we next prove Claim~\ref{claim:At7} to hold for every $t\ge\tau'+1$ by induction. 
For this purpose, we prove Claims~\ref{claim:At6} and \ref{claim:At7} for $t=u+1$ under
the assumption that Claims~\ref{claim:At6} and \ref{claim:At7} hold for $t=u\ge\tau'+1$.
One has 
\begin{align}
\label{eq:Atabove}
	\begin{split}
	\|\tilde{\by}-\by_{u+1}\|
	&\le\|\tilde{\by}-\by_{\tau'}\|
	+\|\by_{\tau'+1}-\by_{\tau'}\|
	+\sum_{s=\tau'+1}^u\|\by_{s+1}-\by_s\|
	\quad(\because\text{Triangle inequality})\\
	&\le\|\tilde{\by}-\by_{\tau'}\|
	+2\|\by_{\tau'+1}-\by_{\tau'}\|
	+\frac{1}{\bar{a}\bar{c}}\{\varphi(l_{\tau'+1})-\varphi(l_{u+1})\}
	-\|\by_{u+1}-\by_u\|
	\quad(\because\eqref{eq:At7}\text{ with }t=u)\\
	&\le\|\tilde{\by}-\by_{\tau'}\|
	+2\|\by_{\tau'+1}-\by_{\tau'}\|
	+\frac{1}{\bar{a}\bar{c}}\varphi(l_{\tau'+1})
	\quad(\because\|\by_{u+1}-\by_u\|\ge0\text{ and }\varphi(l_{u+1})\ge0)\\
	&\le\|\tilde{\by}-\by_{\tau'}\|
	+2\sqrt{\frac{l_{\tau'}}{\bar{a}}}
	+\frac{1}{\bar{a}\bar{c}}\varphi(l_{\tau'})
	\quad(\because\text{\eqref{eq:At8} and }\varphi(l_{\tau'})\ge\varphi(l_{\tau'+1}))\\
	&<\epsilon
	\quad(\because\text{\eqref{eq:At4}}),
	\end{split}
\end{align}
which, together with Claim~\ref{claim:At6} for $t=u$ and $0<l_{u+1}\le l_u<\delta$,
implies Claim~\ref{claim:At6} to hold for $t=u+1$.
Also, this result ensures, via Lemma~\ref{lem:lem},
that \eqref{eq:At9} holds with $t=u+1$.
Adding \eqref{eq:At9} with $t=u+1$ to \eqref{eq:At7} with $t=u$
then implies that \eqref{eq:At7} holds with $t=u+1$,
proving Claim~\ref{claim:At7} to hold for $t=u+1$.
As Claims~\ref{claim:At6} and \ref{claim:At7} have been shown to hold for $t=\tau'+1$,
the above argument proves, by induction, that
Claim~\ref{claim:At7} holds for every $t\ge\tau'+1$.

%Keyから収束%
\textbf{Claim~\ref{claim:At7} for every $t\ge\tau'+1$ implies convergence:}
%Keyから有限長%
From~\eqref{eq:At7}, one has for any $t\ge\tau'+1$ 
\begin{align}
	\begin{split}
	\sum_{s=\tau+1}^t\|\by_{s+1}-\by_s\|
	&\le\|\by_{\tau'+1}-\by_{\tau'}\|
	+\frac{1}{\bar{a}\bar{c}}\{\varphi(l_{\tau'+1})-\varphi(l_{t+1})\}
	-\|\by_{t+1}-\by_t\|\\
	&\le\|\by_{\tau'+1}-\by_{\tau'}\|
	+\frac{1}{\bar{a}\bar{c}}\varphi(l_{\tau'+1})
	\quad(\because\|\by_{t+1}-\by_t\|\ge0\text{ and }\varphi(l_{t+1})\ge0).
	\end{split}
\end{align}
Taking the limit $t\to\infty$ yields
\begin{align}
  \sum_{s=\tau'+1}^\infty\|\by_{s+1}-\by_s\|
  \le\|\by_{\tau'+1}-\by_{\tau'}\|
	+\frac{1}{\bar{a}\bar{c}}\varphi(l_{\tau'+1}),
\end{align}
which implies
\begin{align}
	\sum_{s=1}^\infty\|\by_{s+1}-\by_s\|
	=\sum_{s=1}^{\tau'}\|\by_{s+1}-\by_s\|
	+\sum_{s=\tau'+1}^\infty\|\by_{s+1}-\by_s\|
	=\sum_{s=1}^{\tau'}\|\by_{s+1}-\by_s\|
	+\|\by_{\tau'+1}-\by_{\tau'}\|
	+\frac{1}{\bar{a}\bar{c}}\varphi(l_{\tau'+1})
	<\infty.
\end{align}
%有限長からある点収束%
This shows that the trajectory of $(\by_t)_{t\in\bbN}$ is of finite length,
which in turn implies 
that $(\by_t)_{t\in\bbN}$ converges. 
As the limit $\lim_{t\to\infty}\by_t$ is a unique accumulation point
of $(\by_t)_{t\in\bbN}$, it must be $\tilde{\by}$
since $\by_{t'}\to\tilde{\by}$.
%%有限長から勾配0へ%
Additionally, from Lemma~\ref{lem:B}, one has 
\begin{align}
	\sum_{s=\tau'}^\infty\|\nabla L(\by_s)\|
	\le\frac{1}{\bar{b}}\sum_{s=\tau'}^\infty\|\by_{s+1}-\by_s\|<\infty,
\end{align}
which implies $\lim_{t\to\infty}\|\nabla L(\by_t)\|=0$.
%勾配リプシッツから停留点性を示す%
Since the gradient of the function $L$ is Lipschitz-continuous 
with a Lipschitz constant $L\ge0$ 
on $\cl(\Conv(\{\by_y\}_{t\ge\tau'}))$ 
due to the assumption~\hyl{b1}, one has that
\begin{align}
	\|\nabla L(\tilde{\by})\|
	\le\lim_{t\to\infty}\{\|\nabla L(\by_t)\|+\|\nabla L(\tilde{\by})-\nabla L(\by_t)\|\}	
	\le\lim_{t\to\infty}\{\|\nabla L(\by_t)\|+L\|\tilde{\by}-\by_t\|\}
	=0,
\end{align}
which implies that the limit $\tilde{\by}=\lim_{t\to\infty}\by_t$
is a critical point of $L$. 
\end{proof}

%========================================%
\subsection*{S2.3\quad Proof of Theorem~\protect\protect\ref{thm:BMS-CG} and Discussion}
%==========%
%\textbf{Proof of Theorem~\ref{thm:BMS-CG}:}
Theorem~\ref{thm:BMS-CG} is a direct corollary of Theorem~\ref{thm:BMS-GCG}.
We described the reason why the conditions 
\hyl{b1} and \hyl{b2} of Theorem~\ref{thm:BMS-GCG} can be 
replaced with Assumptions~\ref{asm:LCG} and \ref{asm:LP}
in the main text of this paper.
Note that we cannot avoid assuming the smoothness of the kernel $K$:
\cite{bolte2007lojasiewicz, bolte2007clarke} provided 
an extension of the {\L}ojasiewicz property for non-smooth functions,
and succeeding studies such as \cite{attouch2009convergence, attouch2013convergence, 
noll2014convergence, frankel2015splitting, bolte2016majorization}
used it to construct abstract convergence theorems 
for various optimization algorithms.
Although we also attempted the convergence analysis according to such
a general framework that allows non-smooth objective functions,
we could not avoid the smoothness assumption,
\hyl{b1} or Assumptions~\ref{asm:LCG},
in the framework based on the {\L}ojasiewicz inequality
even for the BMS algorithm.
Such difficulty is also discussed in \cite[Section 6]{noll2014convergence, frankel2015splitting}.

\subsection*{S2.4\quad Proof of Theorem~\protect\ref{thm:BMS-CRE-GEN}}
%==========%
\textbf{Preliminaries on {\L}ojasiewicz inequality:}
%\cite[Theorem~3.5]{frankel2015splitting}
For a desingularizing function $\varphi(u)$, 
define $\Phi(u)$ to be a primitive function (indefinite integral)
of $-(\varphi')^2$.
For the specific choice of the desingularizing function
$\varphi(u)=\frac{u^{1-\theta}}{c(1-\theta)}$,
one has
\begin{align}
\label{eq:RES0}
	\varphi'(u)
	=\frac{u^{-\theta}}{c},\quad
	\Phi(u)
	=\begin{cases}
	-\frac{u^{1-2\theta}}{c^2(1-2\theta)}&\text{if }\theta\in[0,\frac{1}{2}),\\
	-\frac{\log(u)}{c^2}&\text{if }\theta=\frac{1}{2},\\
	-\frac{u^{1-2\theta}}{c^2(1-2\theta)}&\text{if }\theta\in(\frac{1}{2},1),
	\end{cases}\quad
	\Phi^{-1}(u)
	=\begin{cases}
	\exp(-c^2 u)&\text{if }\theta=\frac{1}{2},\\
	\{c^2(2\theta-1)u\}^{-\frac{1}{2\theta-1}}&\text{if }\theta\in(\frac{1}{2},1).
	\end{cases}
\end{align}
These functional forms will be used in proving Theorem~\ref{thm:BMS-CRE-GEN}.

%==========%
\textbf{Proof of Theorem~\ref{thm:BMS-CRE-GEN}:}
%\cite[Theorem~3.5]{frankel2015splitting}
Now, we provide a proof of Theorem~\ref{thm:BMS-CRE-GEN}, 
which is based on the proof of \cite[Theorem~3.5]{frankel2015splitting}.

%==========%
\begin{proof}[{Proof of Theorem~\ref{thm:BMS-CRE-GEN}}]
In the proof of Theorem~\ref{thm:BMS-GCG} we have established the following facts: 
If there exists $t'\in\bbN$ such that $L(\bar{\by})=L(\by_{t'})$
then $\by_t=\bar{\by}$ for any $t\ge t'$, that is,
$(\by_t)_{t\in\bbN}$ converges in a finite number of iterations.
If otherwise, then 
there exists $\tau\in\bbN$ such that Claim~\ref{claim:At6} holds
for any $t\ge\tau$, that is,
$\by_t\in\bar{\calU}(\bar{\by},L,\cl(\Conv(\{\by_s\}_{s\ge\tau})),\epsilon,\delta)$ 
holds for any $t\ge\tau$,
or equivalently, the {\L}ojasiewicz inequality \eqref{eq:Lojasiewicz-ineq2}
with $(\bu',\bu)=(\bar{\by},\by_t)$ holds for any $t\ge\tau$. 
If $\by_t$ with $t\ge\tau$ satisfies Claim~\ref{claim:At6},
then one has
\begin{align}
\label{eq:INEQ2}
\begin{split}
	\Phi(l_{t+1})-\Phi(l_t)
	&=\int_{l_{t+1}}^{l_t}\{\varphi'(u)\}^2\,du
	\quad(\because\text{Definition of $\Phi$})\\
	&\ge\{\varphi'(l_t)\}^2 (l_t-l_{t+1})
	\quad(\because\text{$\varphi'$ is positive and non-increasing})\\
	&\ge\{\varphi'(l_t)\}^2 \bar{a}\|\by_{t+1}-\by_t\|^2
	\quad(\because\text{Lemma~\ref{lem:A}})\\
	&\ge\{\varphi'(l_t)\}^2 \bar{a} \{\bar{b}\|\nabla L(\by_t)\|\}^2
	\quad(\because\text{Lemma~\ref{lem:B}})\\
	&\ge\bar{a}\bar{b}^2
	\quad(\because\text{{\L}ojasiewicz inequality~\eqref{eq:Lojasiewicz-ineq2}
    with $(\bu',\bu)=(\bar{\by},\by_t)$}).
\end{split}
\end{align}
Now Claim~\ref{claim:At6} holds for any $t\ge \tau$, 
which implies
\begin{align}
\label{eq:INEQ3}
	\Phi(l_t)-\Phi(l_\tau)
	=\sum_{s=\tau}^{t-1}\{\Phi(l_{s+1})-\Phi(l_s)\}
	\ge\bar{a}\bar{b}^2(t-\tau-2).
\end{align}

We discuss the two cases $\theta\in[0,\frac{1}{2})$
and $\theta\in[\frac{1}{2},1)$ separately.

\textbf{The case $\theta\in[0,\frac{1}{2})$:}
We claim that in this case the algorithm converges in a finite number
of iterations.
If otherwise, the inequality~\eqref{eq:INEQ3} should hold for any $t\ge\tau$.
When we take the limit $t\to\infty$, 
the right-hand side of~\eqref{eq:INEQ3} goes to infinity, 
which contradicts the fact that the left-hand side remains finite, 
by noting that one has $\lim_{u\to0}\Phi(u)=0$
with $\Phi(u)=-\frac{u^{1-2\theta}}{c^2(1-2\theta)}$
and that $l_t\to0$ as $t\to\infty$.
This contradiction implies the finite-time convergence of $(\by_t)_{t\in\bbN}$. %:

\textbf{The case $\theta\in[\frac{1}{2},1)$:}
We may suppose that $l_t>0$ holds for any $t\in\bbN$,
and so \eqref{eq:INEQ3} holds for any $t\ge\tau$.
Recalling the functional form of $\Phi(u)$ as in~\eqref{eq:RES0}, 
one has $\lim_{t\to\infty}\Phi(l_t)=\infty$
with $\theta\in[\frac{1}{2},1)$.
Assume $\Phi(l_\tau)\ge0$. (If it is not the case
one can always redefine $\tau$ to a larger value with which
$\Phi(l_\tau)\ge0$ is satisfied.) 
One then has $\Phi(l_t)\ge\bar{a}\bar{b}^2(t-\tau-2)+\Phi(l_\tau)\ge\bar{a}\bar{b}^2(t-\tau-2)$,
which allows us to obtain the convergence rate bound for $l_t=L(\bar{\by})-L(\by_t)$, 
namely,
\begin{align}
\label{eq:RES1}
	l_t\le\Phi^{-1}(\bar{a}\bar{b}^2(t-\tau-2)).
\end{align}
With the explicit form of $\Phi^{-1}$ given in~\eqref{eq:RES0}, 
one has 
\begin{align}
	l_t\le
	\begin{cases}
	\exp(-c^2\bar{a}\bar{b}^2(t-\tau-2))
	=O(q^{2t})
	&\text{if }\theta=\frac{1}{2},\\
	\{c^2(2\theta-1)\bar{a}\bar{b}^2(t-\tau-2)\}^{-\frac{1}{2\theta-1}}
	=O(t^{-\frac{1}{2\theta-1}})
	&\text{if }\theta\in(\frac{1}{2},1),
	\end{cases}
\end{align}
where $q=\exp(-c^2\bar{a}\bar{b}^2/2)\in(0,1)$.
For the convergence rate bound for $\|\by_t-\bar{\by}\|$,
we have 
\begin{align}
	\begin{split}
	\varphi(l_t)-\varphi(l_{t+1})
	&=\int_{l_{t+1}}^{l_t}\varphi'(u)\,du\\ 
	&\ge\varphi'(l_t)(l_t-l_{t+1})
	\quad(\because\text{$\varphi'$ is positive and non-increasing})\\
	&\ge\varphi'(l_t)\bar{a}\|\by_{t+1}-\by_t\|^2
	\quad(\because\text{Lemma~\ref{lem:A}})\\
	&\ge\varphi'(l_t)\bar{a}\|\by_{t+1}-\by_t\|\bar{b}\|\nabla L(\by_t)\|
	\quad(\because\text{Lemma~\ref{lem:B}})\\
	&\ge\bar{a}\bar{b}\|\by_{t+1}-\by_t\|
	\quad(\because\eqref{eq:Lojasiewicz-ineq2}
	\text{ since }\by_t\in\bar{\calU}(\bar{\by},L,\cl(\Conv(\{\by_s\}_{s\ge\tau})),\epsilon,\delta)),
	\end{split}
\end{align}
which in turn yields
\begin{align}
\label{eq:RES2}
	\|\by_t-\bar{\by}\|
	\le\sum_{s=t}^\infty \|\by_{s+1}-\by_s\|
	\le\frac{1}{\bar{a}\bar{b}}\sum_{s=t}^\infty\{\varphi(l_s)-\varphi(l_{s+1})\}
	\le\frac{1}{\bar{a}\bar{b}}\varphi(l_t)
	\le\frac{1}{\bar{a}\bar{b}}\varphi(\Phi^{-1}(\bar{a}\bar{b}^2(t-\tau-2)))
\end{align}
from \eqref{eq:RES1}.
According to the calculation \eqref{eq:RES0} for $\varphi(u)=\frac{u^{1-\theta}}{c(1-\theta)}$,
one can obtain the exponential-rate convergence when $\theta=\frac{1}{2}$
and polynomial-rate convergence when $\theta\in(\frac{1}{2},1)$:
\begin{align}
	\|\by_t-\bar{\by}\|
	\le\frac{\{\Phi^{-1}(\bar{a}\bar{b}^2(t-\tau-2))\}^{1-\theta}}{\bar{a}\bar{b}c(1-\theta)}
	=\begin{cases}
	\frac{\{\exp(-c^2\bar{a}\bar{b}^2(t-\tau-2))\}^{\frac{1}{2}}}{\bar{a}\bar{b}c(1-\theta)}
	=O(q^t)
	&\text{if }\theta=\frac{1}{2},\\
	\frac{\{\{c^2(2\theta-1)\bar{a}\bar{b}^2(t-\tau-2)\}^{-\frac{1}{2\theta-1}}\}^{1-\theta}}{\bar{a}\bar{b}c(1-\theta)}
	=O(t^{-\frac{1-\theta}{2\theta-1}})
	&\text{if }\theta\in(\frac{1}{2},1).
	\end{cases}
\end{align}
This concludes the proof for all the cases, \hyl{b1}, \hyl{b2}, and \hyl{b3}.
\end{proof}

\section*{S3\quad Review of Previous Work}
%========================================%
\subsection*{S3.1\quad Theorem 4 of \cite{cheng1995mean}}
\label{sec:Rev-Th4}
\cite[Theorem 4]{cheng1995mean} claimed finite-time convergence
of the BMS algorithm with an additional assumption.
The claim is reproduced in the following:
\begin{claim}[Theorem 4 of \cite{cheng1995mean}]
  If the blurred data points cannot move arbitrarily close to each other,
  and $G$ is either zero or larger than a fixed positive constant,
  then the BMS algorithm reaches a fixed point in finitely many iterations.
\end{claim}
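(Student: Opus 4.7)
The plan is to derive the claim as a finite-time refinement of Theorem~\ref{thm:NT}. The hypothesis that $G$ is either zero or bounded below by a fixed positive constant amounts, under Assumption~\ref{asm:RS}, to the condition $\alpha>0$ in \eqref{eq:c2}, which is the defining feature of a non-smoothly truncated kernel. Theorem~\ref{thm:NT} therefore already supplies the limit $\bar{\by}=\lim_{t\to\infty}\by_t$, an iteration $\tau\in\bbN$ after which the BMS graphs stabilize ($\calG_{\by_t}=\calG_{\bar{\by}}$ for all $t\ge\tau$) and are singular, and a partition $\calV_{\by_\tau,1},\ldots,\calV_{\by_\tau,M}$ such that $\by_{t,i}\to\bz_m$ for all $i\in\calV_{\by_\tau,m}$ and all $m\in[M]$.

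The remaining step is to promote asymptotic convergence to equality after finitely many iterations using the separation hypothesis. Formalize the latter as the existence of $\delta>0$ with $\|\by_{t,i}-\by_{t,j}\|\ge\delta$ whenever $\by_{t,i}\neq\by_{t,j}$. By the convergence $\by_{t,i}\to\bz_m$, I would pick $\tau'\ge\tau$ for which $\|\by_{t,i}-\bz_m\|<\delta/2$ holds for all $t\ge\tau'$, all $m\in[M]$, and all $i\in\calV_{\by_\tau,m}$. The triangle inequality gives $\|\by_{t,i}-\by_{t,j}\|<\delta$ for every $i,j$ in the same component, which the separation hypothesis upgrades to $\by_{t,i}=\by_{t,j}$. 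Since $\calG_{\bar{\by}}$ is singular, Proposition~\ref{prop:BMScomp} confines the BMS update of any $i\in\calV_{\by_\tau,m}$ to the $j\in\calV_{\by_\tau,m}$, so the right-hand side of \eqref{eq:updaBMS} reduces to an average of coincident points and returns $\by_{t,i}$ itself. Consequently $\by_t=\by_{\tau'}$ for every $t\ge\tau'$, and $\by_{\tau'}$ is a fixed point.

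The main obstacle is not really analytical but interpretive: pinning down what Cheng meant by ``cannot move arbitrarily close.'' Under the stronger reading that $\|\by_{t,i}-\by_{s,j}\|\ge\delta$ whenever those two values are distinct across all iterations, one even bypasses Theorem~\ref{thm:NT}: Lemma~\ref{lem:A} gives $L(\by_{t+1})-L(\by_t)\ge\bar{a}\|\by_{t+1}-\by_t\|^2\ge\bar{a}\delta^2$ whenever $\by_{t+1}\neq\by_t$, which contradicts boundedness of $L$ after finitely many steps. Either way, no new machinery is required; the substantive work has already been carried out in Theorem~\ref{thm:NT} (respectively, Theorem~\ref{thm:COS} and Lemma~\ref{lem:A}), and the finite-time conclusion follows by bookkeeping.
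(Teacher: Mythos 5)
You should first be aware that the paper does not prove this statement at all: the ``Claim'' is a verbatim reproduction of Theorem~4 of \cite{cheng1995mean}, placed in the supplementary review (Section~S3.1) precisely in order to criticize it. The paper's point is that the hypothesis ``the blurred data points cannot move arbitrarily close to each other'' is never justified --- Cheng appears to appeal to finite-precision arithmetic without modifying the update rule accordingly, and if the hypothesis is instead read as a property of the exact algorithm, no conditions are given under which it holds. So there is no ``paper proof'' to compare against; the honest comparison is between your proof and the paper's critique, and in fact your closing remark (that the real obstacle is pinning down what the separation hypothesis means) is exactly the paper's objection.

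Taken on its own terms, your argument is sound once the hypothesis is formalized as a uniform lower bound $\delta>0$ on the distance between distinct blurred data points. Your identification of ``$G$ is either zero or larger than a fixed positive constant'' with $\alpha>0$ in \eqref{eq:c2} is correct under Assumption~\ref{asm:RS} (a non-truncated profile with $g\ge\alpha g(0)>0$ everywhere would drive $k$ to $-\infty$), so Theorem~\ref{thm:NT} applies and delivers convergence and graph stabilization; the triangle-inequality step then collapses each component to a single point after finitely many iterations, and Proposition~\ref{prop:BMScomp} makes that configuration a fixed point. Two small caveats: Theorem~\ref{thm:NT} does not literally assert that $\calG_{\bar{\by}}$ is singular (you do not need singularity --- component-wise locality of the update from Proposition~\ref{prop:BMScomp} suffices), and your route through Theorem~\ref{thm:NT} imports Assumption~\ref{asm:RS}, which Cheng's original statement does not presuppose. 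Your second argument via Lemma~\ref{lem:A} --- each non-stationary step increases the bounded objective $L$ by at least $\bar{a}\delta^2$ --- is the cleaner one, needs no structural hypothesis on $G$ beyond what Lemma~\ref{lem:A} uses, and is almost certainly the mechanism Cheng had in mind. Neither version, however, resolves the question the paper raises: whether the separation hypothesis ever holds for the unmodified BMS iteration, which is why the paper treats the claim as obscure rather than as a theorem.
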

One may be able to criticize this claim from at least two aspects.
First, as for the assumption ``the blurred data points cannot move
arbitrarily close to each other,'' it appears that
the finite-precision nature of the floating-point representations
on computers is supposed to be the cause that makes the assumption valid
(see the second paragraph of \cite[Section IV-D]{cheng1995mean}).
If it is the case, then one has to modify the BMS update rule
in such a way that it incorporates the rounding
due to the finite-precision representation.
The argument in~\cite{cheng1995mean}, however, does not include
such a modification of the BMS algorithm.
Second, one might alternatively be able to regard the claim
as being applicable to the original BMS algorithm without any
modifications with regard to rounding.
If it is the case, then one should argue under what conditions
the assumption holds true.
However, \cite{cheng1995mean} does not provide any argument about this,
which would make the significance of this claim quite obscure.

\subsection*{S3.2\quad Theorem 1 of \cite{chen2015convergence}}
\label{sec:Ape-Chen}
%==========%
%[Theorem~1]{chen2015convergence}%
To cover situations where the sequences 
$(\by_{t,1})_{t\in\bbN},\ldots,(\by_{t,n})_{t\in\bbN}$ can converge to multiple points, 
\cite{chen2015convergence} also performed convergence analysis.
%仮定がおかしい%
However, he assumed that the function $g$ is decreasing:
there, $g$ is positive under Assumption~\ref{asm:RS} (or his assumptions),
and Theorem~\ref{thm:Cheng} ensures the convergence to a single point.
%証明もおかしい%
Furthermore, his proof of \cite[Theorem~1]{chen2015convergence} has another flaw:
he argued that his proof for the one-dimensional case, 
which is based on an intermediate claim like 
``$g(|\frac{v_{t,j}-v_{t,i}}{h}|^2/2)\le g(|\frac{v_{t,j'}-v_{t,i}}{h}|^2/2)$ 
since $|v_{t,j}-v_{t,i}|>|v_{t,j'}-v_{t,i}|$ for all $i\in[n]$'' in 
\cite[ll.\;2--3 in p.\;164]{chen2015convergence}, 
is also valid for the multi-dimensional case, 
by letting each $v_{t,i}$ be a projection of $\by_{t,i}$ for some $i\in\calV_{\by_t,m}$ 
onto a supporting hyperplane through some extreme point 
of the convex hull $\Conv(\{\by_{t,i}\}_{i\in\calV_{\by_t,m}})$;
however, to attain his final claim in the multi-dimensional case, one has to prove 
``$g(\|\frac{\by_{t,j}-\by_{t,i}}{h}\|^2/2)\le g(\|\frac{\by_{t,j'}-\by_{t,i}}{h}\|^2/2)$ for all $i\in[n]$'', 
which does not hold in general.

%========================================%
\section*{S4\quad Demonstration of BMS-based Data Clustering}
%==========%
Figure~\ref{fig:MSBMS} shows the demonstration 
of the BMS-based data clustering.
It demonstrates that 
once the BMS graph becomes closed,
one would only need a few more iterations
for the blurred data point sequences, represented in a 
floating-point format on a computer, to converge.

%==========%
%MS・BMSクラスタリングの比較%
\begin{figure}
\centering%
\renewcommand{\arraystretch}{0.5}
\begin{tabular}{cc}
{\hskip-13pt}
{\scriptsize~~~Truncated-flat function $G$}&{\hskip-20pt}
{\scriptsize~~Truncated-quadratic function $G$}\\
{\hskip-13pt}
\begin{tabular}{cc}
\includegraphics[height=4cm, bb=0 0 551 528]{./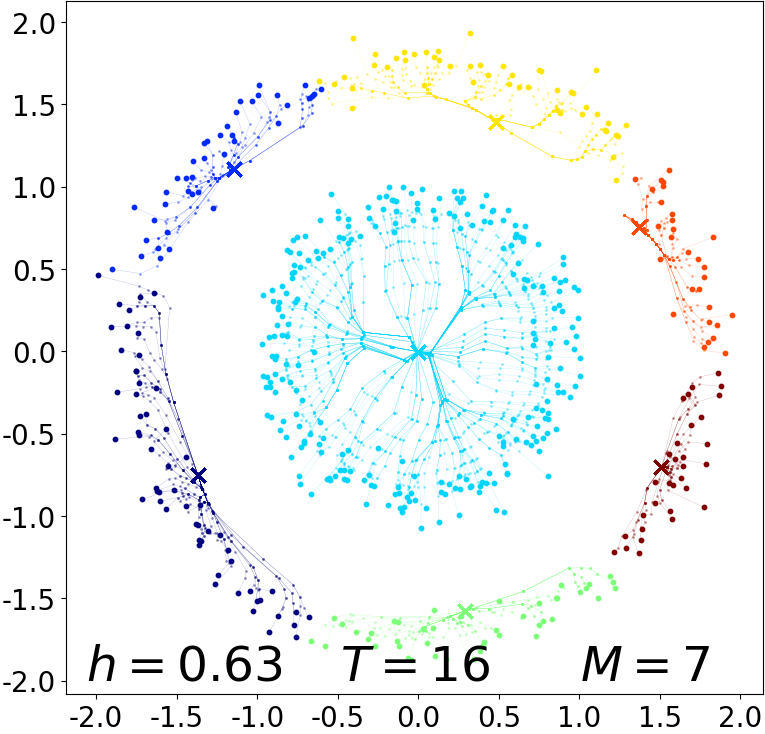}&{\hskip-13pt}
\includegraphics[height=4cm, bb=0 0 551 528]{./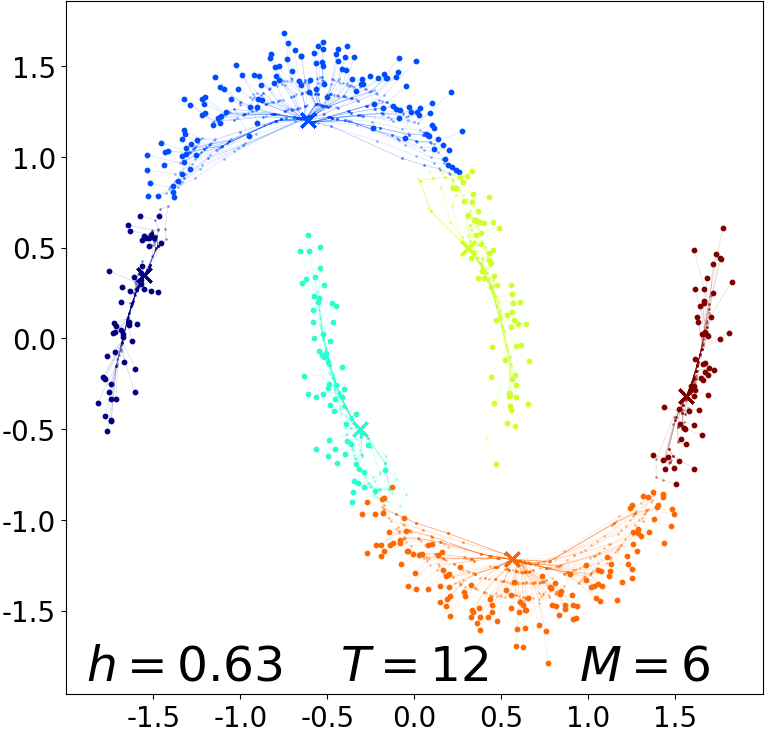}\\
{\scriptsize~~~noisy circles}&{\hskip-13pt}{\scriptsize~~~noisy moons}\\
\includegraphics[height=4cm, bb=0 0 551 528]{./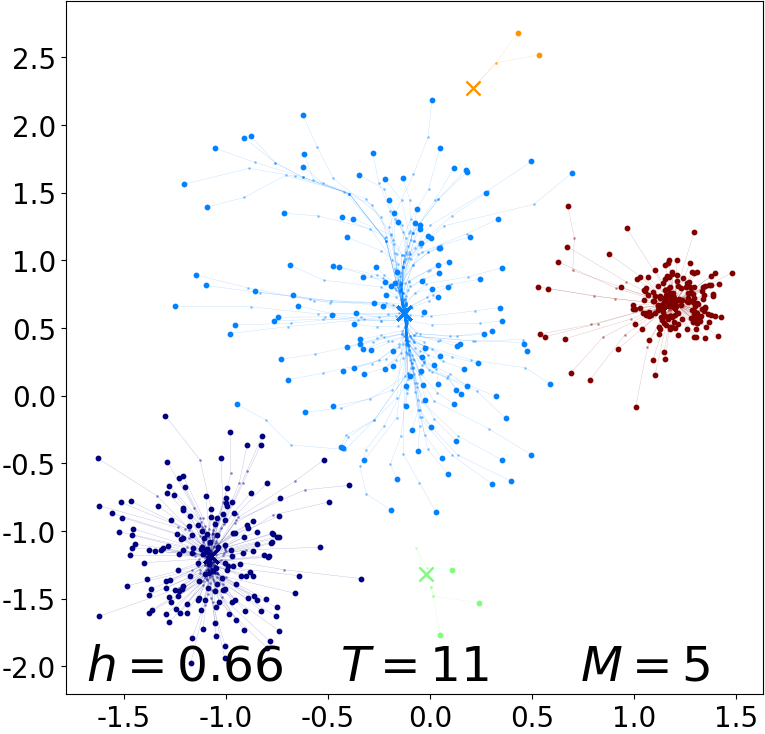}&{\hskip-13pt}
\includegraphics[height=4cm, bb=0 0 551 528]{./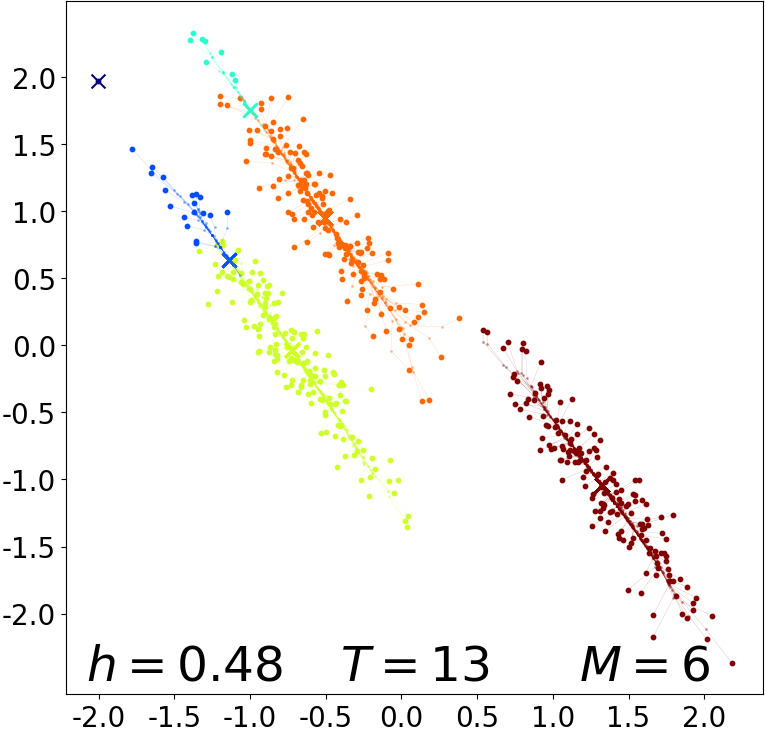}\\
{\scriptsize~~~varied}&{\hskip-13pt}{\scriptsize~~~aniso}\\
\includegraphics[height=4cm, bb=0 0 551 528]{./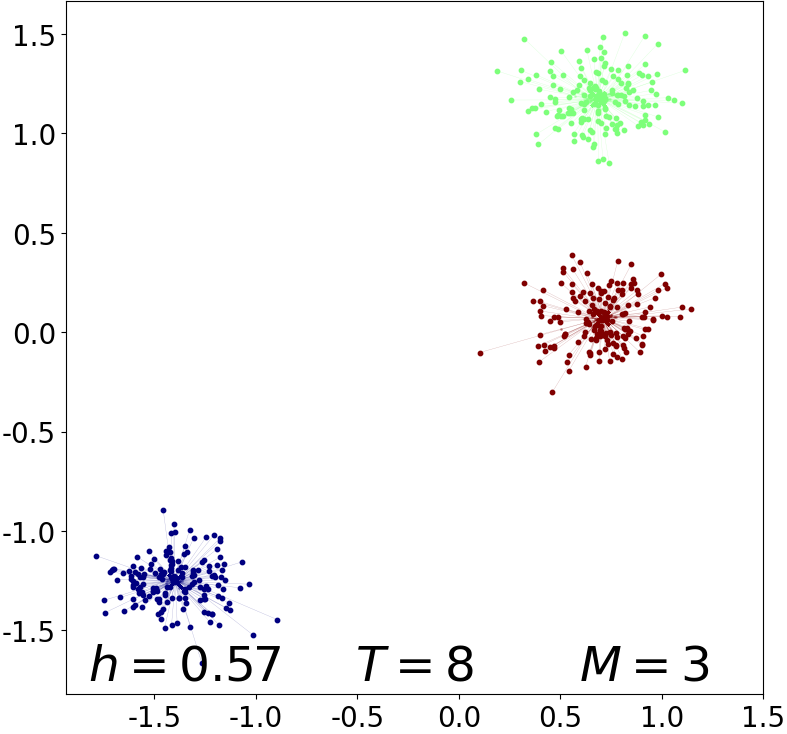}&{\hskip-13pt}
\includegraphics[height=4cm, bb=0 0 551 528]{./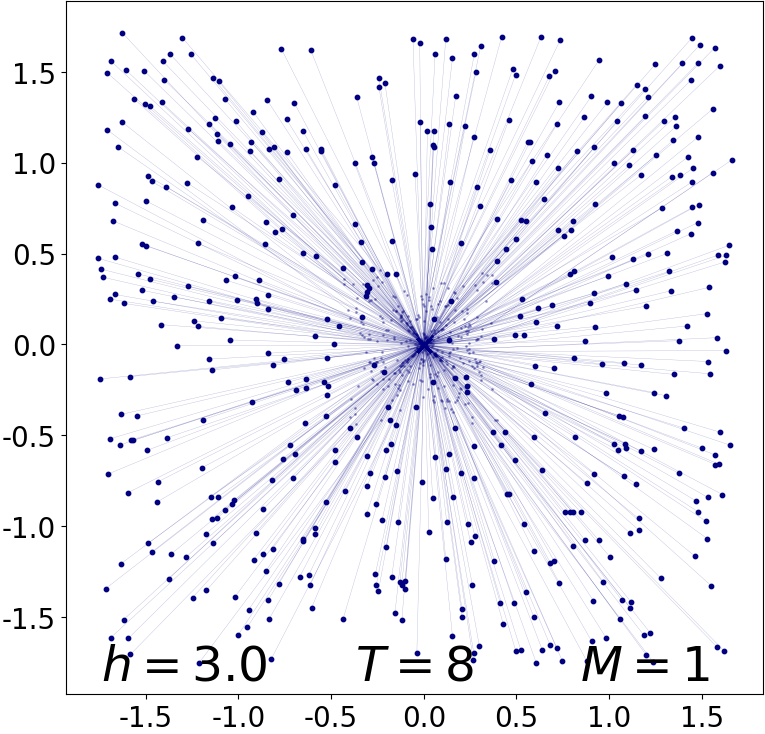}\\
{\scriptsize~~~blobs}&{\hskip-13pt}{\scriptsize~~~no structure}
\end{tabular}&{\hskip-20pt}
\begin{tabular}{cc}
\includegraphics[height=4cm, bb=0 0 551 528]{./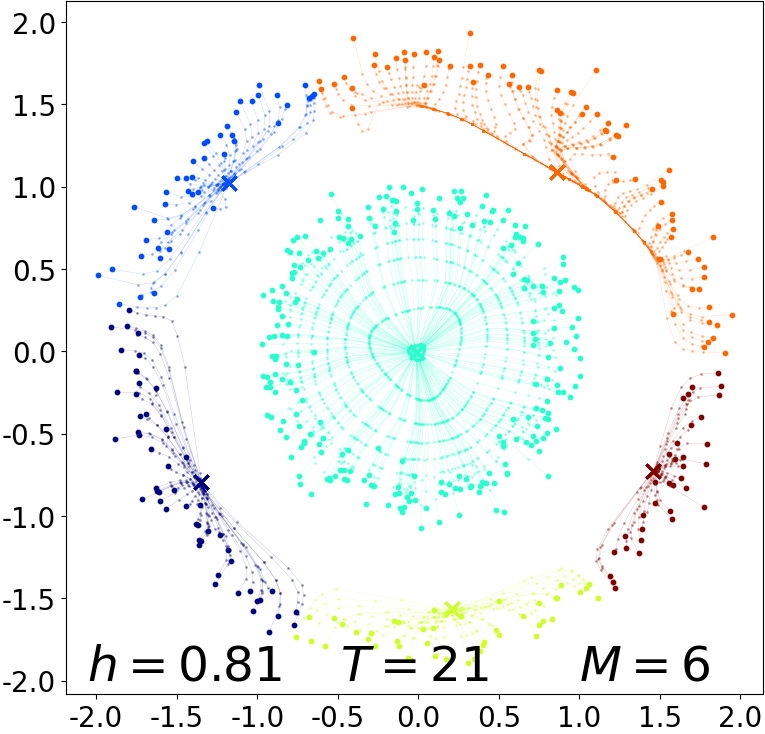}&{\hskip-13pt}
\includegraphics[height=4cm, bb=0 0 551 528]{./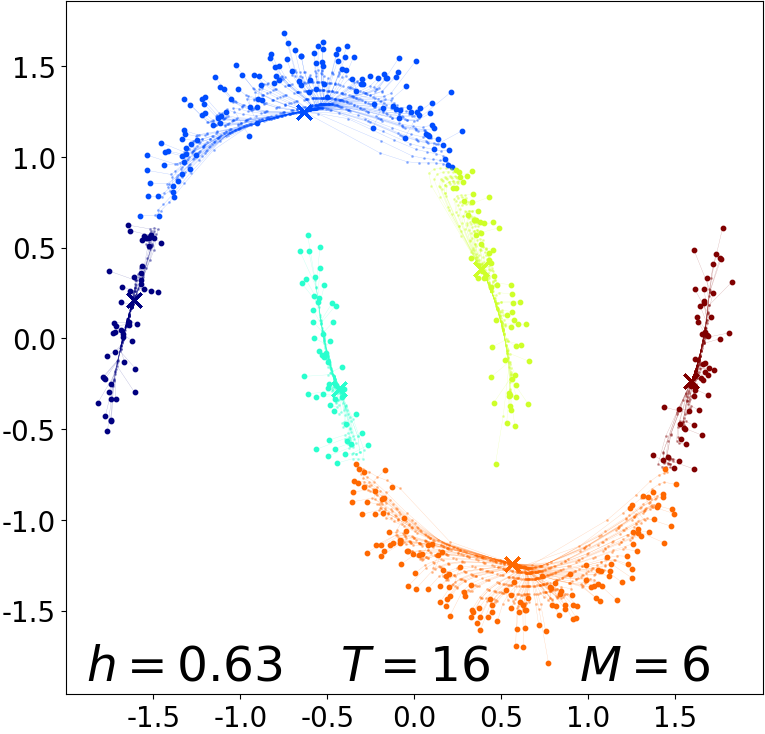}\\
{\scriptsize~~~noisy circles}&{\hskip-13pt}{\scriptsize~~~noisy moons}\\
\includegraphics[height=4cm, bb=0 0 551 528]{./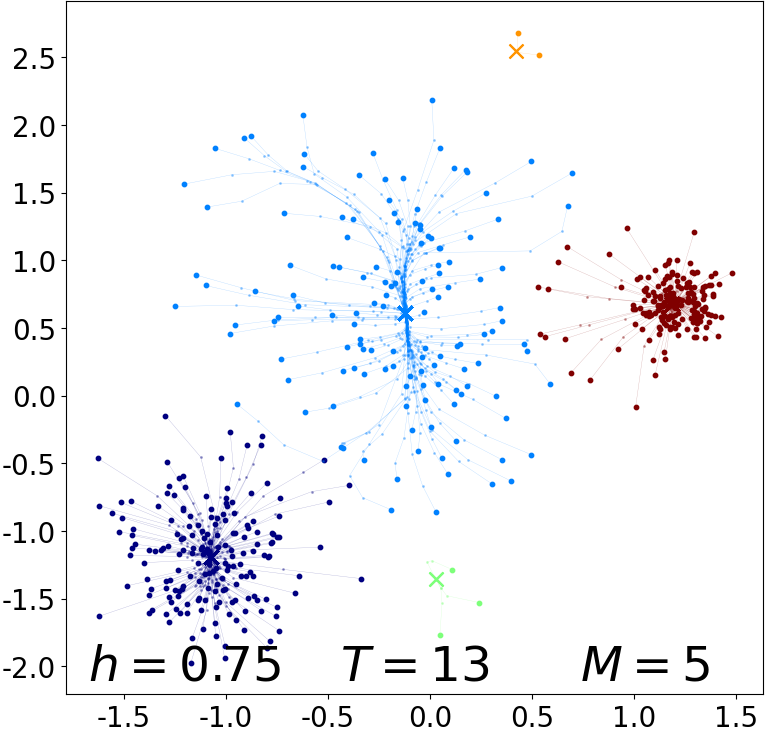}&{\hskip-13pt}
\includegraphics[height=4cm, bb=0 0 551 528]{./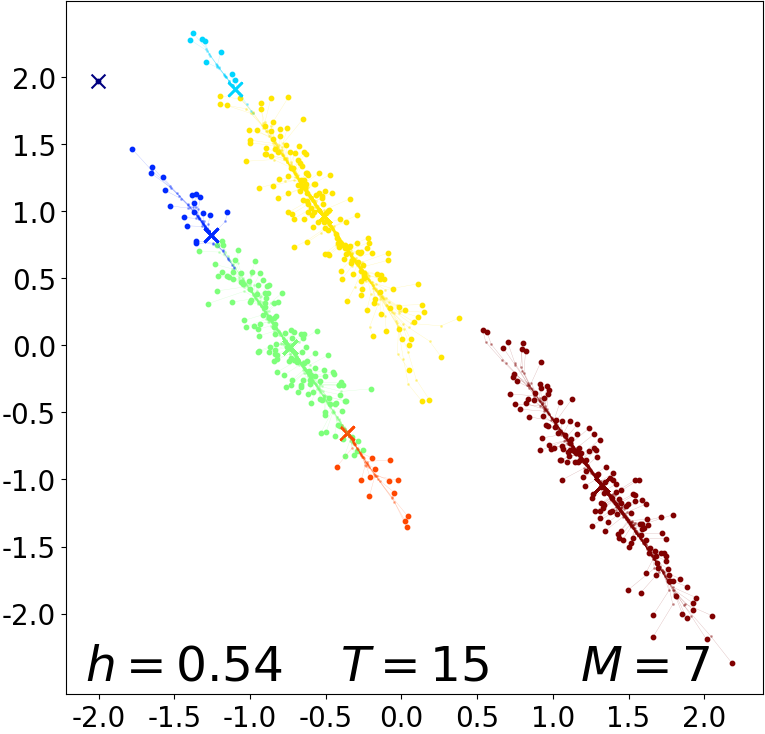}\\
{\scriptsize~~~varied}&{\hskip-13pt}{\scriptsize~~~aniso}\\
\includegraphics[height=4cm, bb=0 0 551 528]{./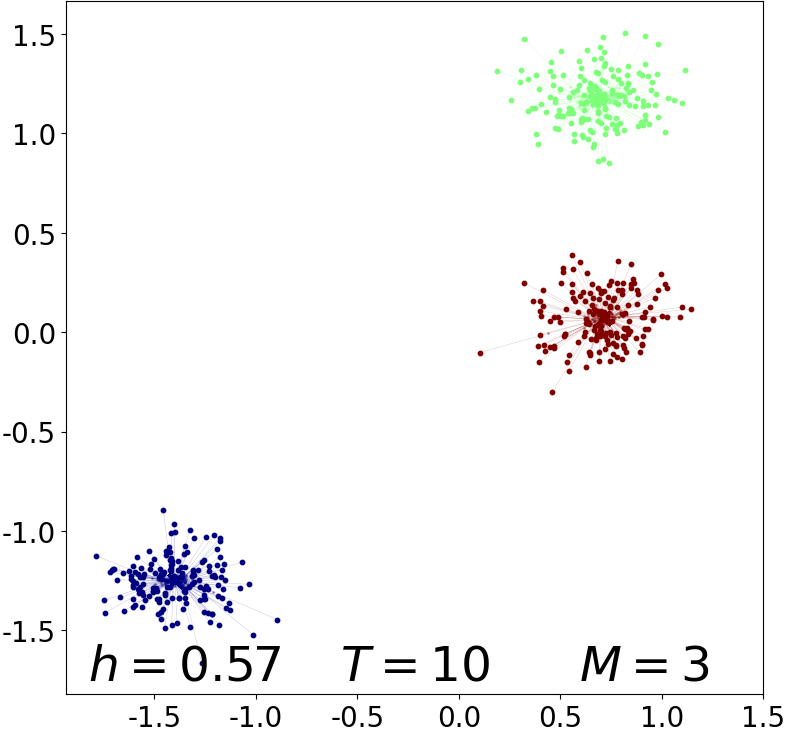}&{\hskip-13pt}
\includegraphics[height=4cm, bb=0 0 551 528]{./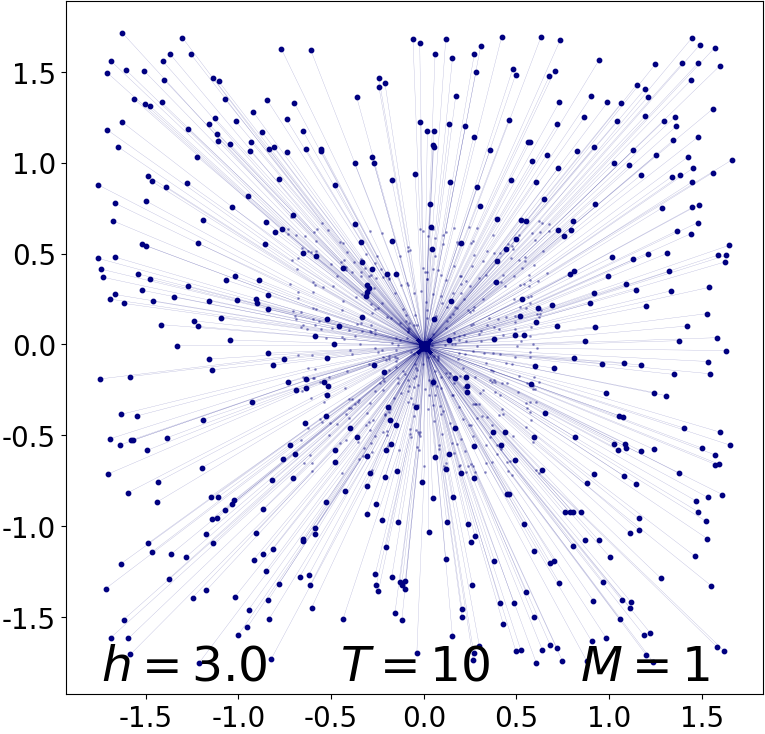}\\
{\scriptsize~~~blobs}&{\hskip-13pt}{\scriptsize~~~no structure}
\end{tabular}
\end{tabular}
\caption{%
Data clustering results by the BMS algorithm for six datasets in 
\url{https://scikit-learn.org/stable/modules/clustering.html}.
Each dataset has $n=500$ data points in 
$\bbR^2$ (i.e., $d=2$) and is standardized for use.
We used the double-precision floating-point number format,
the truncated-flat function $G(\bu)\propto\bbI(\|\bu\|\le1)$
and truncated-quadratic function $G(\bu)\propto(1-\|\bu\|^2)_+$,
and selected the bandwidth $h$,
which yielded the smallest number of clusters without clustering data 
from clearly different underlying clusters into the same cluster, 
among the candidates $\{0.03, 0.06, \ldots, 2.97, 3.0\}$
(note that clusters with close $\by_{T,i}$'s can be 
further integrated with proper post-processing).
Each plate shows $h$,
terminated step $T=\min\{t\in\bbN\mid\by_{t,i}=\by_{t+1,i}, \forall i\in[n]\}$,
and number $M$ of clusters.
It also shows initial points $\by_{1,i}$'s (data points $\bx_i$'s) by large dots, 
intermediate points $\by_{t,i}$'s by small dots, 
terminated points $\by_{T,i}$'s by crosses, 
and trajectories by polylines, 
in different colors corresponding to clusters.}
%>IEEEには提出するが，これは書かなくてよかろう．
%See supplementary material for a used program code.
\label{fig:MSBMS}
\end{figure}

%========================================%
\section*{S5\quad Demonstration of Convergence Rate}
\label{sec:SIMP}
%正単体: (n-1)次元(1,0,\ldots,0),(0,0,\ldots,1),(x,\ldots,x)
%辺の長さ: \sqrt{2}
%x: (1-x)^2+(n-2)*x^2=1-2x+(n-1)*x^2=2, (n-1)*x^2-2x-1=0, x=(1-\sqrt{n})/(n-1)
%重心: {1+(1-\sqrt{n})/(n-1)}/n=(n-\sqrt{n})/(n*(n-1))
%頂点から重心まで: \sqrt{(n-1)/n}
%==========%
In this section, we demonstrate the cubic convergence of the BMS algorithm
under a specific finite-sample setting. 
The whole process of the BMS algorithm can be described explicitly 
when data points are located at the vertices of a regular simplex.
In this case, the blurred data points move toward the center
of the simplex, and every BMS update shrinks the simplex
by a certain scale factor. 
Assume that $n$ blurred data point $\by_{t,1},\ldots,\by_{t,n}\in\bbR^d$
with $n\le d+1$
are vertices of a $(n-1)$-dimensional regular simplex
centered at the origin, 
%the centroid of $\{\by_{t,i}\}_{i\in[n]}$ is $\bm{0}_{n-1}$, 
and $\by_{t,1}=(r_t/\sqrt{n},0,\ldots,0)^\top$ with $r_t>0$
so that $\|\by_{t,i}-\by_{t,j}\|=\sqrt{\frac{2}{n-1}}r_t$ for all $i,j\in[n]$ such that $i\neq j$
and $\|\by_t-\bm{0}_{n(n-1)}\|=r_t$.
Under this setting, the update of $\by_{t,i}$ becomes
\begin{align}
	\begin{split}
	\by_{t+1,i}
	&=\frac{\sum_{j=1}^n G\bigl(\frac{\by_{t,i}-\by_{t,j}}{h}\bigr)\by_{t,j}}
	{\sum_{j=1}^n G\bigl(\frac{\by_{t,i}-\by_{t,j}}{h}\bigr)}
\\
	&=\frac{g(0)\by_{t,i}+ g((r_t/h)^2/(n-1))\sum_{j\not=i}\by_{t,j}}
	{g(0)+(n-1)g((r_t/h)^2/(n-1))}
	\quad(\because \text{$\|\tfrac{\by_{t,i}-\by_{t,j}}{h}\|^2/2=(r_t/h)^2/(n-1)$ for $j\neq i$})
\\
	&=\frac{g(0)-g((r_t/h)^2/(n-1))}
	{g(0)+(n-1)g((r_t/h)^2/(n-1))}\by_{t,i}
	\quad(\because {\textstyle\sum_{j\not=i}\by_{t,j}=-\by_{t,i}}).
	\end{split}
\end{align}
This shows that the BMS update shrinks the $n$ blurred data points
toward the origin by the scale factor
\begin{equation}
  \frac{g(0)-g((r_t/h)^2/(n-1))}
	{g(0)+(n-1)g((r_t/h)^2/(n-1))}.
\end{equation}
It furthermore shows that when the data points are located
at the vertices of a regular simplex,
the blurred data points at any $t$ are also located at the vertices
of a simplex shrunk by a certain factor. 
Then, one has
\begin{align}
\label{eq:rre}
	r_{t+1}=\frac{g(0)-g((r_t/h)^2/(n-1))}{g(0)+(n-1)g((r_t/h)^2/(n-1))}r_t
	\text{ for all $t\in\bbN$}.
\end{align}
When $g((r_1/h)^2/(n-1))=0$, the BMS graph $\calG_{\by_1}$
consists of $n$ isolated vertices, so that one has $r_t=r_1$ for all $t$. 
When $g((r_1/h)^2/(n-1))>0$, on the other hand, the BMS graph $\calG_{\by_1}$
consists of a single component that is complete, so that
one can apply Theorem~\ref{thm:Cheng} to show the exponential rate convergence. 
Furthermore, under Assumption~\ref{asm:DDif}
(there exists a constant $c\in(0,\infty)$ such that 
$g$ satisfies $|g(u)-g(0)|\le c u$ for any $u\in[0,v]$),
if $(r_t/h)^2/(n-1)\le v$,
then from~\eqref{eq:rre} one can show 
\begin{align}
	r_{t+1}\le\frac{g(0)-\{g(0)-c (r_t/h)^2/(n-1)\}}{g(0)}r_t
	\le\frac{c}{(n-1)g(0)h^2}r_t^3
\end{align}
which implies the cubic convergence when $g((r_1/h)^2/(n-1))>0$.
The speed of the cubic convergence under this particular setting 
can be demonstrated by a simulation experiment
as shown in Figure~\ref{fig:SIMP}.

\begin{figure}
\centering%
\includegraphics[height=6cm, bb=0 0 735 385]{./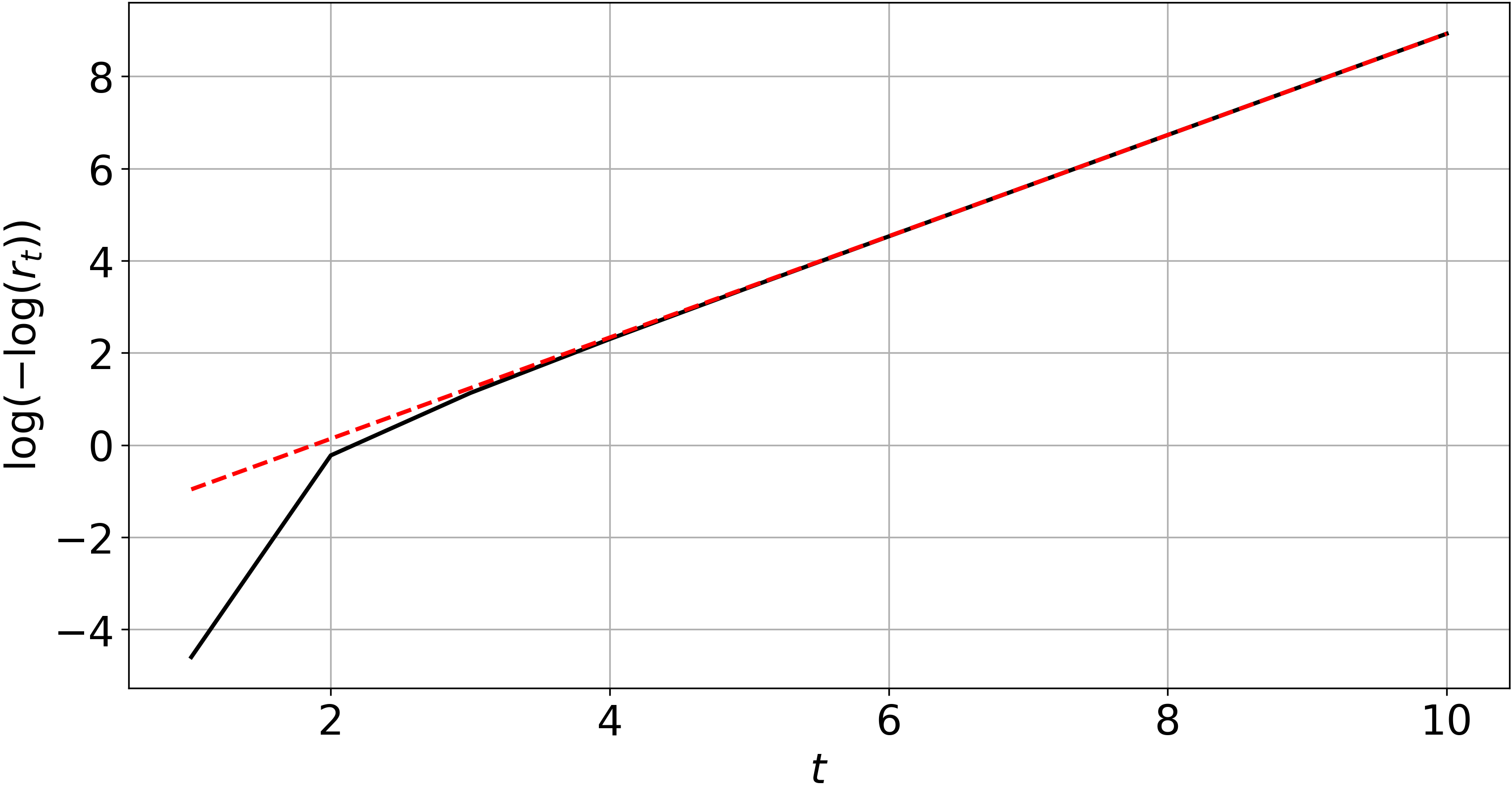}
\caption{%
Behavior of the sequence $(r_t)_{t\in[10]}$ under the setting 
discussed in Section~\protect\ref{sec:SIMP}
with the Gaussian kernel $K$, $n=2$, $h=1$, $r_1=0.99$,
computed with 5000-digit precision.
The black solid curve shows the simulation result, 
and the red dotted line shows the predicted slope $\log 3$
of the cubic convergence.}
\label{fig:SIMP}
\end{figure}

\end{document}